\def\R{{\mathbb{R}}}
\def\Z{\mathcal{Z}}
\def\X{\mathcal{X}}
\def\Y{\mathcal{Y}}
\def\N{\mathbb{N}}
\def\II{{\rm I\kern-0.5exI}}
\def\III{{\rm I\kern-0.5exI\kern-0.5exI}}
\newcommand{\norm}[1]{\lVert #1 \rVert}
\newcommand{\veps}{\varepsilon}
\newcommand{\T}{\mathcal{T}}
\newcommand{\nc}{\normalcolor}
\DeclareMathOperator*{\argmax}{argmax}
\DeclareSymbolFont{bbold}{U}{bbold}{m}{n}
\DeclareSymbolFontAlphabet{\mathbbold}{bbold}
\newcommand{\cX}{\mathcal{X}}
\newcommand{\G}{\mathcal{G}}
\newcommand{\eps}{\varepsilon}
\newcommand{\spt}{\textup{spt}}
\numberwithin{equation}{section}
\newtheorem{theorem}{Theorem}[section]
\newtheorem{lemma}[theorem]{Lemma}
\newtheorem{proposition}[theorem]{Proposition}
\theoremstyle{remark}
\newtheorem{remark}[theorem]{Remark}
\theoremstyle{definition}
\newtheorem{definition}[theorem]{Definition}
\title[OT Approach for Computing Adversarial Training Lower Bounds]{An Optimal Transport Approach for Computing Adversarial Training Lower Bounds in Multiclass Classification}
\author{Nicol\'as {Garc\'ia Trillos}}
\address{Department of Statistics, University of Wisconsin-Madison, 1300 University Avenue, Madison, WI 53706, USA.}
\email{garciatrillo@wisc.edu}
\author{Matt Jacobs}
\address{Department of Mathematics, UC Santa Barbara, 552 University Rd, Isla Vista, CA 93117}
\email{majaco@ucsb.edu}
\author{Jakwang Kim}
\address{Department of Mathematics, University of British Columbia, 1984 Mathematics Road, Vancouver, British Columbia, Canada, V6T 1Z2.}
\email{jakwang.kim@math.ubc.ca}
\author{Matthew Werenski}
\address{Department of Computer Science, Tufts University, 420 Joyce Cummings Center, 177 College Avenue,
Medford, MA 02155}
\email{matthew.werenski@tufts.edu}
\date{\today}
\keywords{adversarial learning, optimization, linear programming, Sinkhorn algorithm, multiclass classification, optimal transport, multimarginal optimal transport, Wasserstein barycenter, generalized barycenter problem}
\begin{document}

\thanks{{\bf Acknowledgements:}
All authors' names are listed in alphabetic order by family name. This material is based upon work supported by the National Science Foundation under Grant Number DMS 1641020 and was started during the summer of 2022 as part of the AMS-MRC program \textit{Data Science at the Crossroads of Analysis, Geometry, and Topology}. NGT was supported by the NSF grant DMS-2236447. MJ is supported by NSF-DMS grant 2308217. JK thanks to PIMS Kantorovich Initiative supported through a PIMS PRN and NSF-DMS 2133244.}

\begin{abstract}
Despite the success of deep learning-based algorithms, it is widely known that neural networks may fail to be robust. A popular paradigm to enforce robustness is \emph{adversarial training} (AT), however, this introduces many computational and theoretical difficulties. Recent works have developed a connection between AT in the multiclass classification setting and multimarginal optimal transport (MOT), unlocking a new set of tools to study this problem. In this paper, we leverage the MOT connection to propose computationally tractable numerical algorithms for computing universal lower bounds on the optimal adversarial risk and identifying optimal classifiers. We propose two main algorithms based on linear programming (LP) and entropic regularization (Sinkhorn). Our key insight is that one can harmlessly \emph{truncate the higher order interactions} between classes, preventing the combinatorial run times typically encountered in MOT problems.  We validate these results with experiments on MNIST and CIFAR-$10$, which demonstrate the tractability of our approach.
\end{abstract}

\maketitle

	\section{Introduction}

	While neural networks have achieved state-of-the-art accuracy in classification problems, it is by now well known that networks trained with standard \emph{error risk minimization} (ERM) can be exceedingly brittle \cite{goodfellow2014explaining,chen2017zoo,qin2019imperceptible,cai2021zeroth}.  As a result, various works have suggested replacing ERM with alternative training procedures that enforce robustness.   In this paper, we focus on \emph{adversarial training} (AT), which converts standard risk minimization into a min-max problem where the learner is pitted against an adversary with the power to perturb the training data (see, e.g., \cite{madry2017towards, tramer2018ensemble, sinha2018certifiable}).  This provides a powerful defense against adversarial attacks at the cost of increased computation time and worse performance on clean data \cite{tsipras2018robustness}.    Hence, balancing robustness against efficiency of computation and clean accuracy is central to the effective deployment of adversarial training.

	A key hyperparameter in AT is the \emph{adversarial budget} $\varepsilon$, which controls how far the adversary is allowed to move each individual data point.  As $\varepsilon$ increases, an adversarially trained network will become more robust but will lose accuracy, as the learner is forced to be robust against stronger and stronger attacks. Due to the computational cost of adversarial training (one must solve a min-max problem rather than a pure min problem), hyperparameter tuning of the adversarial budget is very expensive.  As a result, there is a great practical need for 
	theory and algorithms that can
	predict good choices of $\varepsilon$ without requiring massive computation.  
	
	A recent body of work has attempted to address this issue by providing bounds on the gap between the optimal adversarial risk (with a given budget $\varepsilon$) and the optimal standard risk.  Indeed, provided that these bounds are reasonably tight and efficiently computable, they may provide a more efficient route for a practitioner to choose and tune $\varepsilon$.  
	
	Thus far, the literature has largely focused on the theoretical side of characterizing such bounds.  A number of authors have provided bounds for specific types of classifiers and neural network architectures \cite{ weng2018evaluating, 
		yin2019rademacher, khim2019adversarial}, though it is unclear whether any of these results continue to hold if the underlying model changes.  On the other hand, a more recent body of work has established lower bounds that are classifier agnostic  (see \cite{Bhagoji2019LowerBO, Pydi2021AdversarialRV, Trillos2020AdversarialCN, bungert2022gamma} for the binary classification case and \cite{trillos2023multimarginal, dai2023characterizing} for the multiclass case).  The classifier agnostic lower bounds are obtained by relaxing the training problem to allow the learner to select any measurable probabilistic classifier (note that in the modern era of neural networks with billions of parameters, this relaxation may be relatively tight). As a result, the lower bounds are in fact universal and have no dependence on the learning model.     Nonetheless, despite this fertile body of work, the actual computation of these bounds along with implementable algorithms has been left largely unexplored.

	The main focus of this paper is to provide efficient algorithms for computing classifier agnostic lower bounds on the optimal adversarial risk.  To do so, we utilize an equivalence discovered in \cite{trillos2023multimarginal} between the relaxed adversarial training problem and a \emph{multimarginal optimal transport} (MOT) problem related to finding barycenters using the $\infty$-Wasserstein distance. Thanks to this connection, we can leverage tools from computational optimal transport to develop efficient algorithms for solving the equivalent MOT barycenter problem.

	%
	
	In general, MOT problems (including the Wasserstein barycenter problem) are NP-Hard \cite{Altschuler2022}. However, we will show that when $\varepsilon$ is not too large (the relevant regime for AT)  it is possible to efficiently solve the particular MOT problem related to AT.    The key insight is that for small values of $\varepsilon$, the search space for the problem can be significantly truncated, allowing for very efficient computation.  Notably, this truncation can only decrease the value of the problem, preserving the guarantee that computed values are truly lower bounds on the optimal adversarial risk.  Equally important is the fact that this truncation is compatible with both linear programming and the Sinkhorn algorithm, two of the most popular methods for solving MOT problems.  Leveraging these two approaches, we propose two very efficient algorithms for solving this problem.  We then validate our algorithms
	with experiments on MNIST and CIFAR-10 to demonstrate the tractability of our approach.

	The closest paper to this work is \cite{dai2023characterizing}, where independently to \cite{trillos2023multimarginal}, the authors relate the multiclass relaxed adversarial learning problem to an equivalent combinatorial optimization problem using the notion of a conflict hypergraph.  Although the work \cite{dai2023characterizing} also explores the idea of truncation, the authors do not provide any tailored algorithms to solve the problem. In contrast, we present two concrete algorithms and provide their computational complexity. 
	
	
	\subsection{Our contributions}
	Our main contributions in this paper are the following.
	\begin{itemize}
		
		\item We introduce and analyze a new algorithm for approximating adversarial attacks based on a stratified and multi-marginal form of Sinkhorn's algorithm. In addition we provide a publicly available implementation of our algorithm.\footnote{Code can be found at \url{https://github.com/MattWerenski/Adversarial-OT}}
		
		\item We give rigorous bounds on the computational complexities of our algorithms based on the user chosen truncation rate. We show that a certain class of MOT problems can be solved very efficiently, despite the fact that MOT problems are in general NP-Hard. 
		
		\item We implement, discuss, and compare against an exact solver based on Linear Programming as was done in \cite{dai2023characterizing}. We also implement fast constructions of \v{C}ech and Rips complexes which may be of broader use, particularly for topological data analysis.
		
	\end{itemize}




	\subsection{Outline}
	The rest of the paper is organized as follows. In section \ref{section : Preliminaries}, we will formally introduce mathematical backgrounds for the adversarial training problem and notation used throughout the rest of the paper.  In section \ref{section : main results}, we will present our two main algorithms along with informal explanations for their efficiency and run time. In section \ref{Sec:MainResults} we present our main theoretical results on the analysis of our algorithm based on Sinkhorn iterations. Rigorous proofs of our main theoretical results will be delayed until the appendix. In section \ref{section : empirical results}, we provide and discuss empirical results obtained from running our proposed algorithms. We will finish off the paper in section \ref{section : conclusions and future works}, with some conclusions and discussions of future directions. Finally, all technical details and proofs will be discussed in the \nameref{section : Appendix}.

	\section{Preliminaries}\label{section : Preliminaries}
	\subsection{Basic concepts and notation}
	Let $(\mathcal{X},d)= (\mathbb{R}^p, || \cdot ||)$ denote the feature space, and let $\mathcal{Y}:=\{1, \dots, K\}$ be the set of $K$ classes for a given classification problem of interest. Let $S_K := \{A \subseteq \mathcal{Y} : A \neq \emptyset \}$ and $S_K(i) := \{A \in S_K \,\colon\, i \in A\}$. Let $\mathcal{Z}:=\mathcal{X} \times \mathcal{Y}$ denote the set of feature-class pairs. Let $\mu \in \mathcal{P}(\Z)$ be a Borel probability measure that represents the ground-truth data distribution. For convenience, we will often describe the measure $\mu$ in terms of its class weights $\mu=(\mu_1, \dots, \mu_K)$, where each $\mu_i$ is a positive Borel measure (not necessarily a probability measure) over $\mathcal{X}$ defined according to:
	\[  
	\mu_i(E) = \mu (E \times \{ i\}),
	\]
	for all Borel measurable $E \subseteq \mathcal{X}$. Notice that the measures $\mu_i$'s are, up to normalization factors, the conditional distributions of features given the specific labels, and $\sum_{i \in \mathcal{Y}} \mu_i(\mathcal{X}) = ||\mu|| = 1$.

	The typical goal of (deterministic) multiclass classification is to find a Borel measurable map $f: \mathcal{X} \rightarrow \mathcal{Y}$ within a certain class (a.k.a. hypothesis class) which minimizes $\mathbb{E}[\ell(f(\mathcal{X}),Y)]$, where $(\mathcal{X},Y) \sim \mu$, and $\ell:\Y \times \Y \rightarrow \mathbb{R}$ is a loss function.  Due to the non-convexity of the space of multiclass deterministic classifiers,  often one needs to relax the space and instead consider \textit{probabilistic} multiclass classifiers $f: \mathcal{X} \rightarrow \Delta_{\mathcal{Y}}$ where 
	$$ \Delta_{\mathcal{Y}} := \left\{ (u_i)_{i \in \mathcal{Y}} : 0 \leq u_i \leq 1, \sum_{i \in \mathcal{Y}} u_i =1 \right\}$$ is the probability simplex over the label space $\mathcal{Y}$.  In what follows, we denote the set of \textit{all} such Borel maps by $\mathcal{F}:= \left\{ f: \mathcal{X} \to \Delta_{\mathcal{Y}} : f \text{ is Borel measurable} \right\}$. Given $f \in \mathcal{F}$ and $x \in \mathcal{X}$ we use $f_i(x)$ to denote it's $i$th component at the point $x$. The value of $f_i(x)$ should be interpreted as the estimated probability that $x$ is in the $i$th class under the classification rule $f$. This extension to probabilistic classifiers is typically unavoidable, especially for adversarial problems in multiclass classification \cite{trillos2023multimarginal, existence_AT}. The objective in this setting is given by
	\begin{equation}\label{eq:UnrobustRiskMinimization}
	\inf_{f \in \mathcal{F}} R(f, \mu) :=  \mathbb{E}_{(X,Y) \sim \mu} [\ell (f(X), Y)) ].
	\end{equation}
	 \eqref{eq:UnrobustRiskMinimization} represents the standard (agnostic) multiclass Bayes learning problem. Through this paper, the loss function $\ell: \Delta_{\mathcal{Y}} \times \mathcal{Y}  \rightarrow \R$ is set to be $\ell(u, i) := 1 - u_i$ for $(u,i) \in \Delta_{\mathcal{Y}} \times\mathcal{Y}$, which is usually referred to as the $0$-$1$ loss. Under the $0$-$1$ loss function the risk $R(f, \mu)$ can be rewritten as 
	\[ 
	R(f, \mu)= \sum_{i \in {\mathcal{Y}}} \int_{\mathcal{X}} \left(1 - f_i(x) \right) d\mu_i(x),
	\]
	and is well known that its solution is the so called Bayes classifier, which admits an explicit form in terms of the distribution $\mu$.	

	\subsection{Adversarial training}

	One popular approach used in adversarial training is \emph{distributionally robust optimization} (DRO). Here the training model is based on a \emph{distribution-perturbing adversary} which can be formulated through the min-max optimization problem
	\begin{equation}\label{def:DRO_model}
	R_{DRO}^*:=\inf_{f \in \mathcal{F}} \sup_{\widetilde{\mu} \in \mathcal{P}(\mathcal{Z})} \left\{ R(f, \widetilde{\mu}) - C(\mu, \widetilde{\mu}) \right\}.
	\end{equation}
	Here, the adversary has the power to select a new data distribution $\widetilde{\mu}$, but they must pay a cost to transform $\mu$ into $\widetilde{\mu}$ given by $C : \mathcal{P}(\mathcal{Z}) \times \mathcal{P}(\mathcal{Z}) \to [0, \infty ],$ which measures how different $\widetilde{\mu}$ is from $\mu$. This forces the learner to select a classifier $f$ that is robust to perturbations of the ground truth data $\mu$ within a certain distance determined by the properties of the chosen cost $C$.
	
	In this work, we consider the family of costs $C_\eps, \eps > 0$ which are transportation costs from $\mu$ to $\widetilde{\mu}$ given by
	\begin{equation*}
	C_\eps(\mu, \widetilde{\mu}) := \sum_{i \in \Y} \inf_{\pi_i \in \Pi(\mu_i, \widetilde{\mu}_i)} \int c_\eps(x,\tilde{x}) d\pi_i(x, \tilde{x}),
	\end{equation*}
	where $\widetilde{\mu}_i$ is defined analogously to $\mu_i$, $\Pi(\mu_i, \widetilde{\mu}_i)$ is the set of probability measures over $\X \times \X$ whose first and second marginals are $\mu_i$ and $\widetilde{\mu}_i$ respectively, and $c_\varepsilon$ is given by
	\begin{equation} \label{assumption: c_eps}
	c_\eps(x,\tilde{x}) = \begin{cases}
	0 & d(x,\tilde{x}) \leq \eps \\
	+\infty & \text{otherwise}
	\end{cases} 
	\end{equation}
	for some distance $d$ on $\X$ and some adversarial budget $\eps$. We will slightly abuse notation and write $C_\eps(\mu_i,\widetilde{\mu}_i)$ to mean the transport cost between $\mu_i$ and $\widetilde{\mu}_i$ with cost $c_\eps$. If $\Pi(\mu_i,\widetilde{\mu}_i)$ is empty, which is the case when $\|\mu_i\| \neq \|\widetilde{\mu}_i\|$, then we take $C_\eps(\mu_i,\widetilde{\mu}_i) = \infty$.
	
	An equivalent perspective is that $\widetilde{\mu}$ is a feasible attack for the adversary only if for all $i \in \mathcal{Y}$ it holds that $W_\infty(\mu_i, \widetilde{\mu}_i) \leq \varepsilon$, where $W_\infty$ denotes the $\infty$-OT distance between measures. In other words, the adversary is only allowed to move each individual data point in the distribution by a distance $\varepsilon$ in the feature space $\cX$.  This shows how the choice of budget $\eps$ is related to the strength of the adversary. As $\eps$ increases, the adversary can make stronger and stronger attacks. 
	
	Since the min in problem \ref{def:DRO_model} is over all possible measurable probabilistic classifiers, the value of problem \eqref{def:DRO_model} provides a universal lower bound for learning problems over \textit{any} family of (Borel measurable) classifiers (e.g., neural networks, kernel machines, etc ) with the same type of robustness enforcing mechanism (i.e., same adversarial cost). For this reason, computing \eqref{def:DRO_model} is of relevance, and in particular an estimated value for \eqref{def:DRO_model} can be used as benchmark when training structured classifiers in practical settings, as has been discussed in the introduction.

	\subsection{An equivalent MOT problem}

	\begin{figure}
		\centering
		\begin{subfigure}[b]{.5\textwidth}
			\centering
			\includegraphics[width=.9\linewidth]{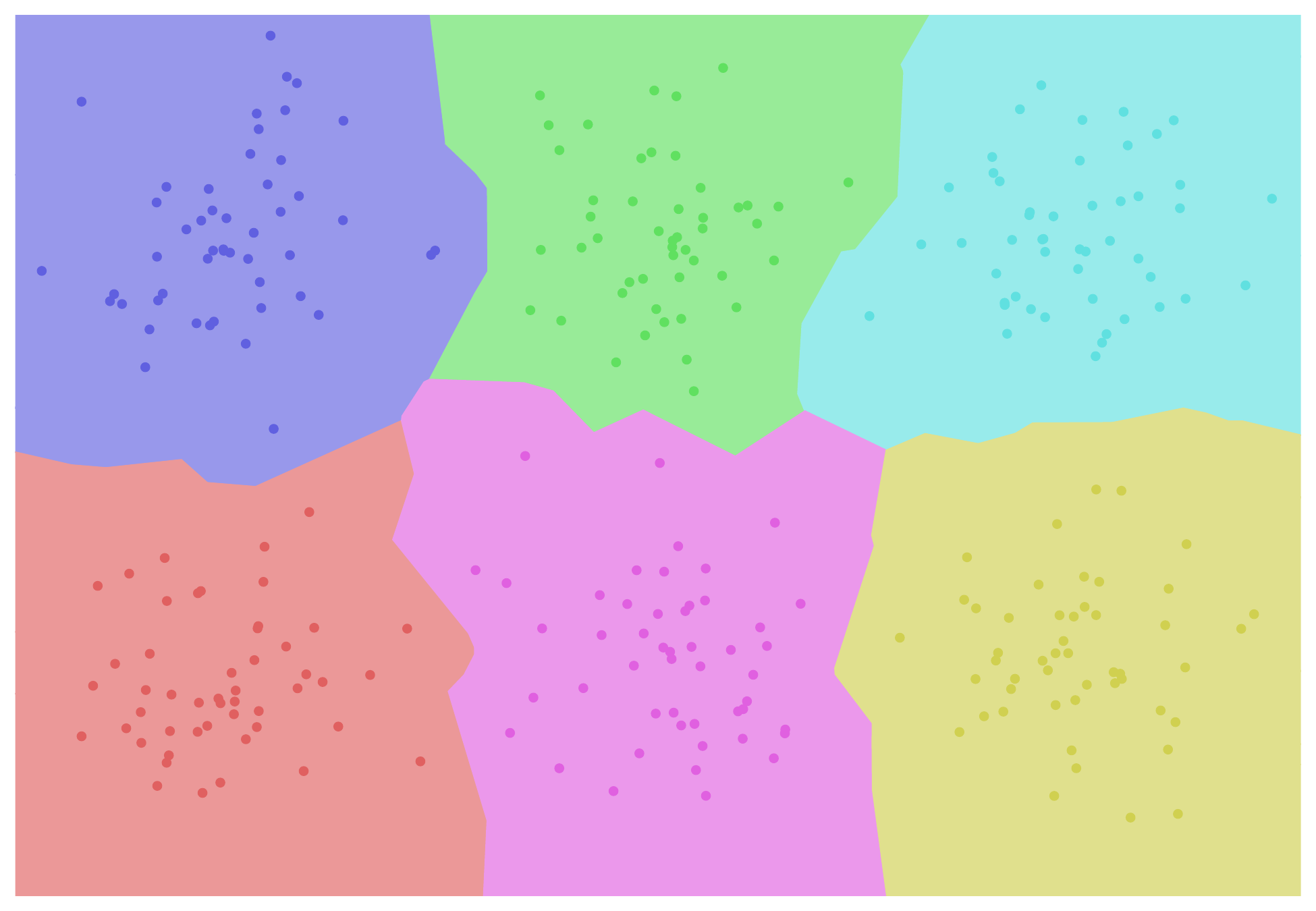}
			\label{fig:sub1}
		\end{subfigure}
        \hfill
		\begin{subfigure}[b]{.5\textwidth}
			\centering
			\includegraphics[width=.9\linewidth]{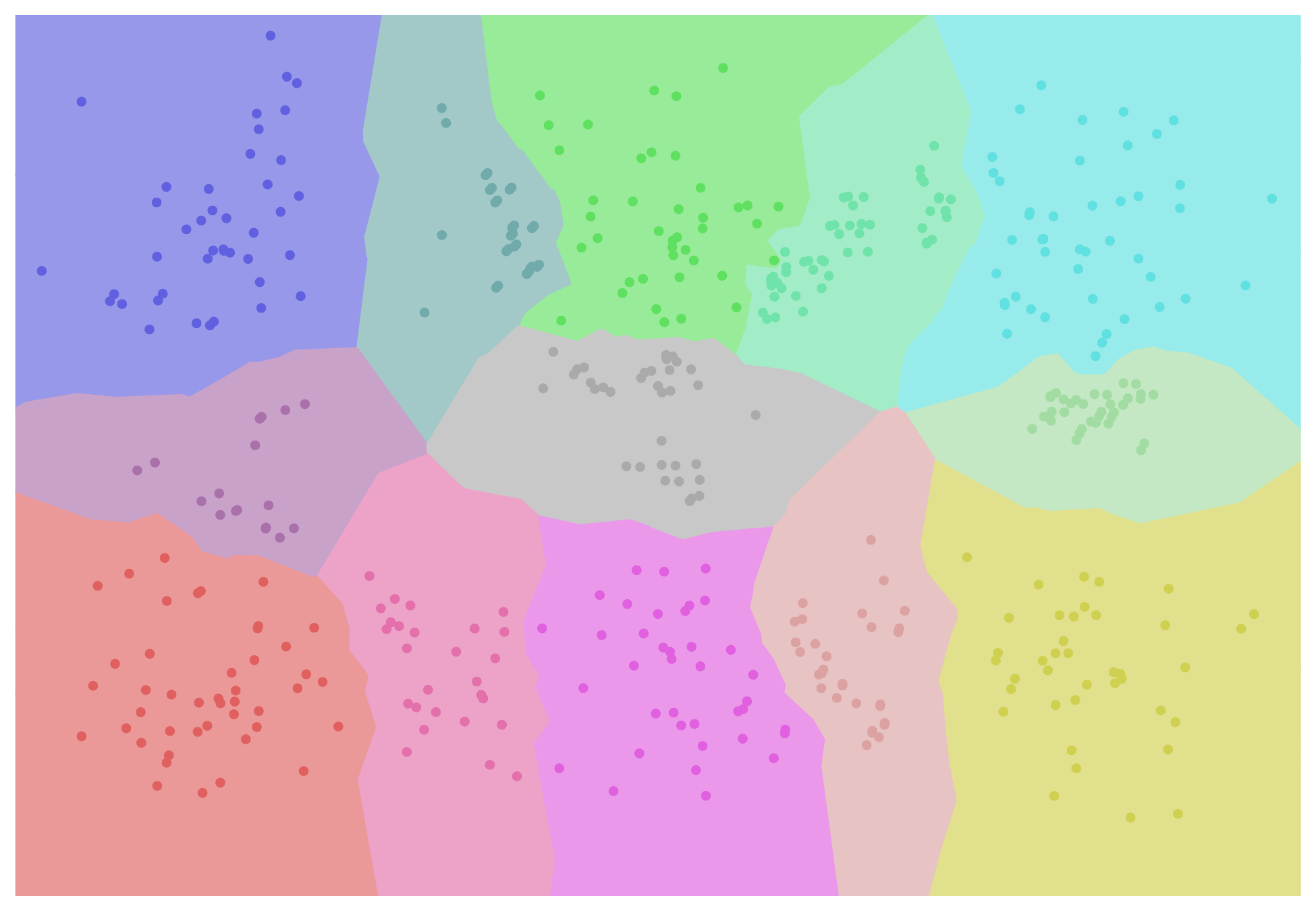}
			\label{fig:sub2}
		\end{subfigure}
		\caption{\textbf{(Top)} A simple six class dataset with 50 points in each class. Filled regions are colored according to the class of the nearest point. \textbf{(Bottom)} The optimal adversarial attack applied to the dataset on the left. The shared colors from the left figure represent the singleton classes $\mu_{\{1\}},...,\mu_{\{6\}}$ and the blended colors represent the various $\mu_{A}$ for $|A| \geq 2$. Points are colored according to the combination of classes they are associated with. Filled regions are colored according to the combination of the nearest point.}
		\label{fig:test}
	\end{figure}

	In \cite{trillos2023multimarginal}, the authors showed that the DRO training problem \eqref{def:DRO_model} is equivalent to an MOT problem related to solving a generalized version of the Wasserstein barycenter problem. In what follows we provide some discussion on this equivalence. Given $\mu=(\mu_1, \dots, \mu_K)$, the \textit{generalized barycenter problem} associated to the cost $c_\veps$ in \eqref{assumption: c_eps}  is
	\begin{equation}\label{def : generalized barycenter problem}
	\min_{\lambda, \widetilde{\mu}_1, \dots, \widetilde{\mu}_K  } \left\{ \lambda(\mathcal{X}) + \sum_{i \in \mathcal{Y}} C_\eps(\mu_i, \widetilde{\mu}_i) : \lambda \succeq \widetilde{\mu}_i \text{ for all } i \in \mathcal{Y} \right\},
	\end{equation}
	where by $\lambda \succeq \widetilde{\mu}_i$ we mean that the positive measure $\lambda$ dominates the measure $\mu_i$. That is, for any non-negative measurable function $g$, it holds that $\int_{\mathcal{X}} g(x) d\lambda(x) \geq \int_{\mathcal{X}} g(x) d\widetilde{\mu}_i(x)$. 
	
	For any feasible collection $\lambda, \widetilde{\mu}_1,...,\widetilde{\mu}_K$ it is possible to perform the following decomposition
	\begin{equation} \label{eq : decomposition}
	\lambda = \sum_{A \in S_k} \lambda_A, \hspace{0.5cm} \lambda_A = \widetilde{\mu}_{i,A} \ \forall \ i \in A, \hspace{0.5cm} \widetilde{\mu}_i = \sum_{A \in S_K(i)} \widetilde{\mu}_{i,A} \ \forall \ i=1,...,K,
	\end{equation}
	where $\lambda_A$ is a measure which accounts for the jointly overlapping mass of the $\widetilde{\mu}_i$ with $i \in A$. See Figure \ref{fig:test} \eqref{def : generalized barycenter problem} and \eqref{eq : decomposition}. From the decomposition of $\widetilde{\mu}_i$ one can also decompose $\mu_i$ into $\mu_i = \sum_{A \in S_K(i)} \mu_{i,A}$ in such a way that $C_\eps(\mu_i, \widetilde{\mu}_i) = \sum_{A \in S_K(i)} C_\eps(\mu_{i,A}, \widetilde{\mu}_{i,A})$. Using the decomposition \eqref{eq : decomposition} and the identity for the cost one can convert \eqref{def : generalized barycenter problem} into the equivalent problem
	\begin{equation}\label{eq : partition of barycenter}
	\min_{(\lambda_A, \mu_{i, A}) \in F} \left\{ \sum_{A \in S_K} \left( \lambda_A(\mathcal{X}) + \sum_{i \in A} C_\eps(\lambda_A, \mu_{i, A}) \right) \right\},
	\end{equation}
	where $F$ is the feasible set defined by
	\begin{equation*}
	F := \left \{ (\lambda_A, \mu_{i,A}) \ : \  \sum_{A \in S_K(i)} \mu_{i,A} = \mu_i \ \forall \ i \in \Y  \right \}.
	\end{equation*}
	
	One can rigorously prove that an optimal  $\lambda_A$ is a barycenter for the set of measures, $\{\mu_{i,A}: i\in A\}$. As in classical (Wasserstein) barycenter problems, \cite{MR2110613, agueh_carlier, MR2801182}, \eqref{eq : partition of barycenter} has an equivalent \emph{stratified MOT} formulation: letting $\mathcal{X}^A := \prod_{i \in A}\mathcal{X}$ and $x_A := (x_i : i \in A) \in \mathcal{X}^A$,
	\begin{equation}\label{eq : stratified MOT}
	\begin{aligned}
	&\min_{\{\pi_A\}_{A \in S_K}} \sum_{A \in S_K} \int_{\mathcal{X}^A} \left ( 1 + c_{\eps,A}(x_A)  \right ) d\pi_A(x_A) \\
	&\text{s.t. } \sum_{A \in S_K(i)} {\mathcal{P}_i}_{\#} \pi_A = \mu_i \hspace{0.5cm} \forall \ i \in \mathcal{Y},
	\end{aligned}
	\end{equation}
	where $\mathcal{P}_i$ is the projection map $(x_A) \mapsto x_i$ onto the $i$-th component for $i \in A$, and for each $A \in S_K$, $c_A$ is defined as 
	\begin{equation}\label{eq : assumption of c_A}
	c_{\eps, A}(x_A) := \inf_{x' \in \mathcal{X}} \sum_{i \in A} c_\eps(x', x_i)
	\end{equation} 
	with the convention that $c_{ \{i\}}=0$ for all $i\in \Y$. It is proved in \cite{trillos2023multimarginal} that 
	\begin{equation*}
	\eqref{def:DRO_model} = 1 - \eqref{def : generalized barycenter problem} = 1 - \eqref{eq : partition of barycenter} = 1 - \eqref{eq : stratified MOT}.
	\end{equation*}
	This paper is devoted to developing and understanding algorithms to solve \eqref{def:DRO_model}, or equivalently to solve \eqref{eq : partition of barycenter} and \eqref{eq : stratified MOT}, respectively.


	\section{Algorithms} \label{section : main results}
	The main contribution of this paper is to suggest two numerical schemes for solving \eqref{def:DRO_model} when the measure $\mu$ is an empirical measure associated to a finite data set. From now on, we assume that each $\mu_i$ is a measure supported over a finite set
	\[
	\mathcal{X}_i : =\spt(\mu_i) \subset \mathcal{X}, \quad |\mathcal{X}_i | = n_i = O(n) \text{ for all } i \in \mathcal{Y}.
	\]
	Also, we use $\mathcal{X}^A := \prod_{i \in A} \mathcal{X}_i$ and $x_A :=(x_i : i \in A) \in \mathcal{X}^A$ as before. The key ideas that make these methods feasible even when $K$ is big is to \emph{truncate interactions} and leverage the fact that the set of feasible interactions is small when the adversarial budget $\varepsilon$ is small. The truncation of \eqref{eq : partition of barycenter} to interactions of level $L \leq K$ is given by
	\begin{equation}\label{eq : truncation of barycenter}
	\min_{(\lambda_A, \mu_{i, A}) \in F_L} \left\{ \sum_{A \in S_K} \left( \lambda_A(\mathcal{X}) + \sum_{i \in A} C_\eps(\lambda_A, \mu_{i, A}) \right) \right\}. 
	\end{equation}
	where 
	\begin{equation*}
	F_L := \left \{ (\lambda_A, \mu_{i,A}) \in F :  |A| \leq L \right \}.
	\end{equation*}
In words, we set $\mu_{i,A} = 0$ (hence, $\lambda_A =0$) for all $A$ such that $|A| > L$.	Similarly, the truncation of \eqref{eq : stratified MOT} to interactions of level $L$ is given by 
	\begin{equation}\label{eq : truncation of stratified MOT}
	\begin{aligned}
	&\min_{\{\pi_A\}_{A \in S^L_K}} \sum_{A \in S^L_K} \int_{\mathcal{X}^A} \left ( 1 + c_{\eps,A}(x_A)  \right ) d\pi_A(x_A) \\
	&\text{s.t. } \sum_{A \in S^L_K(i)} {\mathcal{P}_i}_{\#} \pi_A = \mu_i \text{ for all } i \in \mathcal{Y},
	\end{aligned}
	\end{equation}
where $S^L_K :=\left\{ A \in S_K : |A| \leq L \right\}$ and $S^L_K(i) :=\left\{ A \in S_K(i) : |A| \leq L \right\}$. In words, we set $\pi_A = 0$ for all $A$ such that $|A| > L$.

The truncations of these problems satisfy the following approximation guarantees.
	\begin{proposition} \label{prop : approximation bounds}
		Let $\{ \lambda^*_A \}$ be the optimal measures in \eqref{eq : partition of barycenter}. For $1\leq L \leq K$ we have
		\begin{equation*}
		\eqref{eq : truncation of barycenter} - \eqref{eq : partition of barycenter} \leq \sum_{k > L}^K \left ( \frac{k}{L} - 1 \right ) \sum_{|A| = k} \|\lambda^*_A\|.
		\end{equation*}
		Let $\{\pi_A^* \}$ be the optimal measures in $\eqref{eq : stratified MOT}$. For  $1 \leq L \leq K$ we have
		\begin{equation*}
		\eqref{eq : truncation of stratified MOT} - \eqref{eq : stratified MOT} \leq \sum_{k > L}^K \frac{k}{L} \sum_{|A| = k} \|\pi_A^*\|.
		\end{equation*}
	\end{proposition}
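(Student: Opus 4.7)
The plan is to prove both inequalities by explicitly constructing feasible solutions to the truncated problems from the optima of the un-truncated problems, and bounding the induced cost from above. The key observations are that (i) because $c_\eps$ takes values only in $\{0,+\infty\}$, the optimal values of \eqref{eq : partition of barycenter} and \eqref{eq : stratified MOT} equal $\sum_A \|\lambda_A^*\|$ and $\sum_A \|\pi_A^*\|$ respectively, and (ii) the identity $\binom{k}{L}/\binom{k-1}{L-1}=k/L$ lets us redistribute the mass of a size-$k$ cluster uniformly onto its size-$L$ sub-clusters while preserving all marginal constraints.

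\medskip

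\textbf{Construction for the barycenter bound.} Fix an optimal $\{(\lambda_A^*,\mu_{i,A}^*)\}_{A\in S_K}$. Define a candidate $(\widetilde{\lambda}_B,\widetilde{\mu}_{i,B})_{|B|\le L}$ by initializing $\widetilde{\lambda}_B=\lambda_B^*$ and $\widetilde{\mu}_{i,B}=\mu_{i,B}^*$ for each $B$ with $|B|\le L$, and then, for every $A$ with $|A|=k>L$ and every subset $B\subseteq A$ with $|B|=L$, adding $\alpha_k\,\lambda_A^*$ to $\widetilde{\lambda}_B$ and $\alpha_k\,\mu_{i,A}^*$ to $\widetilde{\mu}_{i,B}$ for each $i\in B$, with $\alpha_k:=\binom{k-1}{L-1}^{-1}$. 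Feasibility has two parts. First, for each $i\in A$ there are exactly $\binom{k-1}{L-1}$ size-$L$ subsets of $A$ containing $i$, so the added contributions along each class $i$ sum to $\mu_{i,A}^*$, preserving $\sum_{B\in S_K^L(i)}\widetilde{\mu}_{i,B}=\mu_i$. Second, each added term is a scalar multiple of a pair with $C_\eps(\lambda_A^*,\mu_{i,A}^*)=0$; gluing the underlying $\eps$-supported couplings shows $C_\eps(\widetilde{\lambda}_B,\widetilde{\mu}_{i,B})=0$. Counting the contribution of each $A$ with $|A|=k>L$ gives $\binom{k}{L}\alpha_k\|\lambda_A^*\|=(k/L)\|\lambda_A^*\|$ in the truncated cost, replacing the original $\|\lambda_A^*\|$, for a net increase of $(k/L-1)\|\lambda_A^*\|$. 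Summing over $k>L$ and $|A|=k$ yields the first bound.

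\medskip

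\textbf{Construction for the stratified MOT bound.} The construction is directly analogous. Given an optimal $\{\pi_A^*\}$, set $\widetilde{\pi}_B=\pi_B^*$ for $|B|\le L$, and for each $A$ with $|A|=k>L$ and each $B\subseteq A$ with $|B|=L$, add $\alpha_k\,(\mathcal{P}_B)_\#\pi_A^*$ to $\widetilde{\pi}_B$. The same counting shows $\sum_{B\in S_K^L(i)}(\mathcal{P}_i)_\#\widetilde{\pi}_B=\mu_i$. For the cost term $\int c_{\eps,B}\,d\widetilde{\pi}_B$, note that on the support of $\pi_A^*$ we have $c_{\eps,A}(x_A)=0$, so there is some $x'$ within distance $\eps$ of all $x_i$, $i\in A$; since $B\subseteq A$, the same $x'$ certifies $c_{\eps,B}(\mathcal{P}_B x_A)=0$. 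Hence $\int c_{\eps,B}\,d(\mathcal{P}_B)_\#\pi_A^*=0$ and the truncated cost equals $\sum_B \|\widetilde{\pi}_B\|$. The same bookkeeping gives a per-$A$ increase of at most $(k/L)\|\pi_A^*\|$, yielding the second bound (in fact, the sharper $(k/L-1)\|\pi_A^*\|$ also holds).

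\medskip

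\textbf{Main obstacle.} The combinatorics are elementary, so the delicate points are (a) ensuring the construction really lies in $F_L$, which requires the weights $\alpha_k$ to balance out along every class index $i$, and (b) confirming that the transport costs (resp.\ the MOT costs $c_{\eps,B}$) remain zero under the superposition/projection of the scaled optimal couplings. Both reduce to the fact that a convex combination of $\eps$-supported plans is still $\eps$-supported, and that shrinking an admissible cluster $A$ to a subcluster $B$ can only make the infimum in \eqref{eq : assumption of c_A} smaller.
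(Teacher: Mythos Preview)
Your proposal is correct and follows essentially the same approach as the paper: the paper constructs the truncated measures by keeping $\lambda_A^*,\mu_{i,A}^*$ (resp.\ $\pi_A^*$) for $|A|<L$, zeroing out $|A|>L$, and for $|A|=L$ setting $\lambda_A^L=\sum_{B\supseteq A}\binom{|B|-1}{L-1}^{-1}\lambda_B^*$ (resp.\ $\pi_A^L=\sum_{B\supseteq A}\binom{|B|-1}{L-1}^{-1}(\mathcal{P}_A)_\#\pi_B^*$), which is exactly your redistribution with weight $\alpha_k$ written from the receiving side rather than the donating side. Your observation that the second inequality can be sharpened to $(k/L-1)\|\pi_A^*\|$ is correct and slightly improves on what the paper states.
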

	The proofs of both facts, presented in Appendix \ref{App:Prop1}, are given by constructing measures $\{\lambda^L_A : |A| \leq L\}$ and $\{\pi^L_A : |A| \leq L \}$ from the optimal measures $\{ \lambda^*_A \}$ and $\{ \pi^*_A \}$, respectively, which are feasible for \eqref{eq : truncation of barycenter} and \eqref{eq : truncation of stratified MOT} and obtain the bounds above.

    \begin{figure}
        \centering
        \includegraphics[width=0.7\linewidth]{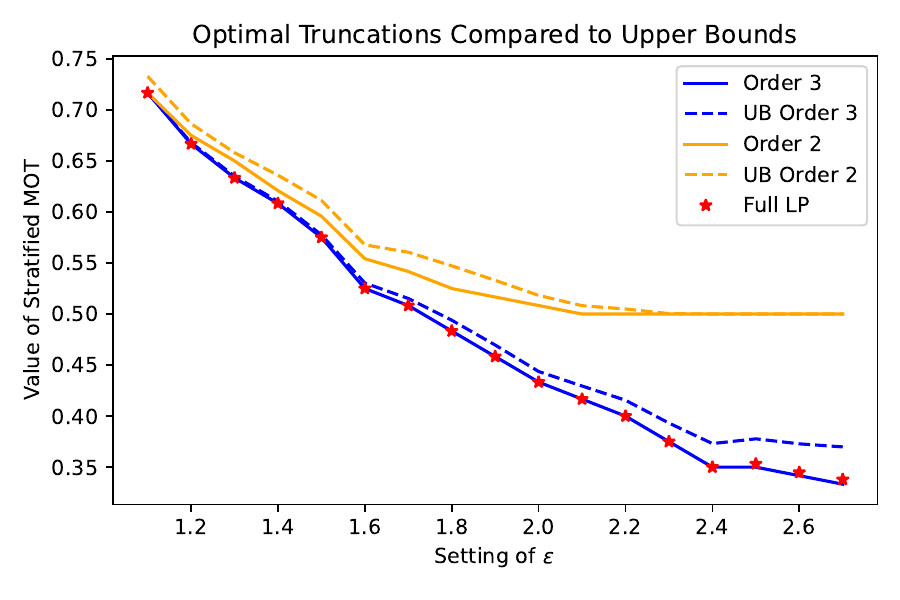}
        \caption{Plots of the value of \eqref{eq : truncation of stratified MOT} and the upper bound provided by Proposition \ref{prop : approximation bounds} for a range of settings of $\varepsilon$ and $K = 2,3$ as well as the untruncated values. These are derived from synthetic data using 20 samples from six classes.} 
        \label{fig : upper bounds}
    \end{figure}
     
	An important takeaway from Proposition \ref{prop : approximation bounds} is that if the optimal measures do not heavily utilize interactions beyond the truncation level $L$, then one can faithfully recreate the attack without leveraging these interactions at all. Empirically, we observe that this is often the case: see experiments in section \ref{section : empirical results}. We also illustrate these bounds on synthetic data in Figure \ref{fig : upper bounds} where the upper bounds are $\eqref{eq : stratified MOT}$ plus the right hand side in Proposition \ref{prop : approximation bounds}. We observe that these bounds in this example are close to the actual values of \eqref{eq : truncation of stratified MOT}.  Importantly, notice that our truncation procedure does not reduce the total number of classes $K$, indeed, all classes $\{1, \ldots, K\}$ continue to influence adversarial attacks.

 \subsection{Feasible labeled interactions}
In addition to truncation, the other key step in our method is to reduce the computational complexity by restricting the search space to feasible interactions only.  Note that this does not change the problem whatsoever, as the adversary cannot combine points that do not lie in a common $\varepsilon$ ball (or more generally points whose joint transport cost is infinite) whenever $\varepsilon$ is small. We make this concept rigorous in the following definition. 

	\begin{definition}
		A set of points $\{(x_i, y_i)\}_{i=1}^k \subset \spt(\mu)$ is a \textit{valid interaction} with $1 \leq k \leq K$ if the labels $y_i$ are all distinct and 
		\begin{equation}
		\inf_{x' \in \X} \sum_{i=1}^k c_{\eps}(x',x_i) < \infty.
		\end{equation}
		The set of all valid interactions will be denoted by $\mathcal{I}_K$ and the set of all valid interactions of size at most $L$ will be denoted by $\mathcal{I}_L$.
	\end{definition}
	Note that this definition allows valid interaction sets to be of size $1$ to $K$ and in fact every singleton set is a valid interaction by choosing $x' = x_i$.

To leverage the computational gains of our approach, we must compute the feasible ordered interactions.  	Here we present an iterative method for computing these interactions, starting from singleton interactions (namely, all points of each class) and then building up to interactions of order $L\leq K$ where $L$ is the user-chosen truncation level.  To facilitate the description of our iterative method it will be useful to introduce for each $A\in S_K$ the feasible sets
\begin{equation}
    F_{A}:=\{ \{(x_i, y_i)\}_{i=1}^{|A|}: \inf_{x' \in \X} \sum_{i=1}^{|A|} c_{\eps}(x',x_i) < \infty, \; \bigcup_{i=1}^{|A|} \{y_i\}=A\}.
\end{equation}
Note that $F_A$ is the set of all feasible interactions with labels corresponding to the set $A$.
Now we are ready to present Algorithm \ref{alg : contruct interactions} for computing feasible interactions.   
	
	\begin{algorithm}
		\begin{algorithmic}
			\caption{Construct $\mathcal{I}_L$}\label{alg : contruct interactions}
			\Require{$X$ : data set, $\varepsilon$ : adversarial budget, $L$: truncation level}
			\State{For each $i \in \mathcal{Y}$, set $F_{\{i\}} = \{(i,1),...,(i,n_i)\}$.}
			\For{$k=2, \dots, L$}
			\For{$A,A' \in S^L_K$ with $|A| = |A'| = k-1, |A \cap A'| = k - 2$ and $F_A, F_{A'} \neq \varnothing $}
			\For{Each $C \in F_A, C' \in F_{A'}$ with $|C \cap C'| = k-2$}
			\State{Check if there exists a  point $x$ within $\varepsilon$ of every point in $C \cup C'$}.
			\State{If so, add $C \cup C'$ to the set $F_{A \cup A'}$.}
			\EndFor
			\EndFor
			\EndFor
			\Ensure{$\mathcal{I}_L := \bigcup_{A \in S^L_K} F_A$.}
		\end{algorithmic}
	\end{algorithm}
	
	The main difficulty in implementing Algorithm \ref{alg : contruct interactions} is to ensure that the checks for $|A \cap A'| = k-2$ and $|C \cap C'| = k-2$ are efficient and are not done by enumerating all possibilities. With a proper implementation, the most time consuming step is checking when a point $x$ is within $\varepsilon$ of every point in $C \cup C'$. This is often a non-trivial geometric problem. For example in $\mathbb{R}^n$ with the Euclidean distance it requires checking if as many as $L$ spheres in $\mathbb{R}^n$ of radius $\varepsilon$ have a mutual intersection. One geometry where this calculation is particularly simple is when using $d(x,y) = \norm{x-y}_\infty$, where the problem is reduced to finding the intersection of axis-aligned rectangles.

	In general the speed of Algorithm \ref{alg : contruct interactions} can be estimated by  $O(L|C(\varepsilon)|m(L))$ where $m(L)$ is the computational complexity required to check the existence of a point $x$ for groups of size at most $L$ (this is polynomial in $L$ and the dimension of the space $d$). The overall complexity is essentially at worst the same as trying every possible group of $L$ or fewer points, which is what may be required if $\varepsilon$ is large enough that a majority of all the groups of size $L$ are feasible. However, in practice, there are often far fewer higher-order interactions, which leads to a much faster algorithm. 
	
	\subsection{Linear Programming Approach}

	The first method for solving problem (\ref{eq : stratified MOT}) or its truncated version (\ref{eq : truncation of stratified MOT}) that we discuss in this paper is based on \emph{linear programming} (LP).  The key object in the LP approach is the  \textit{interaction matrix} denoted by $I \in \{0,1\}^{\spt (\mu) \times \mathcal{I}_K}$. The rows of this matrix are indexed by points $z = (x,y) \in \spt (\mu)$ while the columns are indexed by the valid interactions $\iota \in \mathcal{I}_K.$ For $z \in \spt (\mu)$ and $\iota \in \mathcal{I}_K$ the corresponding entry in $I$ is given by
	\begin{equation*}
	I[z,\iota] = \begin{cases}
	1 & z \in \iota \\
	0 & \text{otherwise}.
	\end{cases}
	\end{equation*}
	We will also need to define a \textit{marginal vector} $m \in [0,1]^{\spt (\mu)}$ with $m[z] = \mu[\{z\}]$. With these definitions, we can formulate a linear program which solves \eqref{eq : partition of barycenter}:
	\begin{equation} \label{eq : linear program}
	\begin{aligned}
	\min_{w \in [0,1]^{\mathcal{I}_K}} &\hspace{1cm} \sum_{\iota \in \mathcal{I}_K} w[\iota]  \\
	\text{s.t.} & \hspace{1cm} Iw = m.
	\end{aligned}
	\end{equation}
	In an analogous way to the above, one can truncate the problem with the truncation level $L$, and obtain the truncated LP 
	\begin{equation}  \label{eq : trunaction of linear program}
	\begin{aligned}
	\min_{w \in [0,1]^{\mathcal{I}_L}} &\hspace{1cm} \sum_{\iota \in \mathcal{I}_L} w[\iota]  \\
	\text{s.t.} & \hspace{1cm} Iw = m.
	\end{aligned}
	\end{equation}
	\begin{proposition}
		The optimization problems \eqref{eq : stratified MOT} and \eqref{eq : linear program} are equivalent. The optimization problems \eqref{eq : truncation of stratified MOT} and \eqref{eq : trunaction of linear program} are equivalent. As a result the truncated LP obtains the same approximation as in Proposition \ref{prop : approximation bounds}.
	\end{proposition}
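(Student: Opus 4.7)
The plan is to establish a direct bijection between the feasible sets of \eqref{eq : stratified MOT} and \eqref{eq : linear program} (and similarly for their truncated versions) that preserves objective values. The crucial observation is that since $c_\eps$ takes values in $\{0, +\infty\}$, the induced cost $c_{\eps,A}(x_A)$ defined in \eqref{eq : assumption of c_A} is also $\{0, +\infty\}$-valued: it equals $0$ precisely when there exists $x' \in \X$ within $\eps$ of every $x_i$, $i \in A$, and $+\infty$ otherwise. In particular, any $\pi_A$ with finite objective value in \eqref{eq : stratified MOT} must be supported on the set of tuples $x_A \in \mathcal{X}^A$ for which $c_{\eps,A}(x_A) = 0$, and on this support the objective reduces to $\|\pi_A\|$.

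The first step is to set up the correspondence. Since $\mu$ is empirical, every $\pi_A$ is a finite nonnegative combination of Dirac masses on $\mathcal{X}^A$. Given a feasible family $\{\pi_A\}$ for \eqref{eq : stratified MOT}, for each $A \in S_K$ and each $x_A = (x_i)_{i \in A} \in \mathcal{X}^A$ in the support (so $c_{\eps,A}(x_A) = 0$), the tuple $\iota := \{(x_i,i)\}_{i \in A}$ is an element of $F_A \subseteq \mathcal{I}_K$ by the definition of valid interaction. Define $w[\iota] := \pi_A(\{x_A\})$. Conversely, given $w \in [0,1]^{\mathcal{I}_K}$, for each $A \in S_K$ build $\pi_A$ by placing mass $w[\iota]$ at the tuple $(x_i)_{i \in A}$ for every $\iota = \{(x_i,i)\}_{i \in A} \in F_A$. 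This correspondence is clearly a bijection (the labels recover $A$ and the feature components recover $x_A$).

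Next I would verify that objectives match. Since contributions from the $+\infty$ part of $c_{\eps,A}$ must be absent in any finite-value solution, the objective of \eqref{eq : stratified MOT} becomes $\sum_A \int 1 \, d\pi_A = \sum_A \|\pi_A\| = \sum_{A} \sum_{\iota \in F_A} w[\iota] = \sum_{\iota \in \mathcal{I}_K} w[\iota]$, which is precisely the LP objective. For the constraints, evaluating $\sum_{A \in S_K(i)} (\mathcal{P}_i)_\# \pi_A = \mu_i$ on a singleton $\{x\} \subseteq \mathcal{X}_i$ gives
\begin{equation*}
\sum_{A \in S_K(i)} \sum_{\substack{\iota \in F_A \\ (x,i) \in \iota}} w[\iota] \;=\; \mu_i(\{x\}),
\end{equation*}
which upon letting $z = (x,i)$ is exactly the row of $Iw = m$ indexed by $z$, since the row of $I$ indexed by $z$ has a $1$ in column $\iota$ iff $z \in \iota$. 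This shows the feasible sets are in bijection and objectives agree, giving equivalence of \eqref{eq : stratified MOT} and \eqref{eq : linear program}. The same argument, restricting the correspondence to $A$ with $|A| \leq L$ and thus $\iota \in \mathcal{I}_L$, yields the equivalence of \eqref{eq : truncation of stratified MOT} and \eqref{eq : trunaction of linear program}; the approximation bound in Proposition \ref{prop : approximation bounds} then transfers immediately.

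The main obstacle is essentially bookkeeping: one must be careful that the mapping $x_A \mapsto \iota = \{(x_i,i)\}_{i \in A}$ is well-defined (the components of a valid interaction are distinct across labels by definition, and the $x_i$ are in $\X_i$ so they are distinguishable by label) and that restricting to the support where $c_{\eps,A}$ is finite does not lose any optimal solutions. Both are direct consequences of the $\{0,+\infty\}$-valued structure of the cost and the definition of $\mathcal{I}_K$, so no substantive analytic difficulty arises.
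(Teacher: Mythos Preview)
Your proposal is correct and follows essentially the same approach as the paper: both set up the correspondence $\pi_A(\{x_A\}) \leftrightarrow w[\iota]$ for $\iota = \{(x_i,i)\}_{i\in A}$, use the $\{0,+\infty\}$-valued structure of $c_{\eps,A}$ to restrict attention to valid interactions, and then check that the marginal constraints translate into $Iw = m$. Your write-up is in fact slightly more explicit than the paper's in verifying the objective and constraint equivalences, and it handles the indexing of $\pi_A$ on $\prod_{i\in A}\mathcal{X}_i$ more cleanly, but the underlying argument is the same.
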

	\begin{proof}
		We will show how to convert between the two problems. We will only do this for the untruncated versions as the truncated versions are done in an identical fashion.
		
		First let $\{\pi_A\}$ be feasible and have finite cost for \eqref{eq : stratified MOT}. To each point  $(x_1,...,x_K) \in \spt (\pi_A)$ we can assign a set $\iota = \{(x_i, i) \ | \ i \in A \}$. Clearly the labels in $\iota$ are unique. In addition, since we assume that the $\{\pi_A\}$ achieve a finite cost we must have $c_A(x_1,...,x_K) = \inf_{x' \in \X} \sum_{i\in A} c(x',x_i) < \infty$ which shows that $\iota$ is a valid interaction. Set
		\begin{equation}
		w[\iota] = \pi_A\left [ \left \{ (x_1',..,x_K') \ | \ x_i' = x_i \ \forall \ i \in A  \right \} \right ].
		\end{equation}
		The projection sum constraint ensures that $Iw = m$.
		
		Now consider a feasible $w$. For a valid interaction $\iota = \{(x_i,y_i)\}$ let $A = \{y_i\}$. For $i \notin A$ let $x_i$ be an arbitrary point in the support of $\mu_i$. Now set
		\begin{equation}
		\pi_A\left [ \{(x_1,...,x_K)\} \right ] = w[\iota].
		\end{equation}
		The summation constraint in the LP ensures that the $\pi_A$ are feasible.
	\end{proof}
	The gain in \eqref{eq : linear program} is that the optimization occurs over a space of dimension  $|\mathcal{I}_K|$ (which we expect to be small when $\epsilon$ is small), while the dimension of the problem in $\eqref{eq : stratified MOT}$ is $(2^K-1) n^K$ when $\spt (\mu)_i = O(n)$ for every $i$ (since there are $2^K-1$ sets $A \in S_K$ and each $\pi_A$ is of size $n^K$). In the worst case setting, it may be that $|\mathcal{I}_K| = (2^K-1) n^K$, which happens for example when $ \bigcup_{i=1}^K \spt(\mu_i) \subset \overline{B}(0,\varepsilon)$, a closed ball with radius $\varepsilon$. However, in typical settings, one should expect $|\mathcal{I}_K|$ to be much smaller.
	
	Truncating \eqref{eq : linear program} down to \eqref{eq : trunaction of linear program} allows the problem to be solved even more quickly since we eliminate interactions of order larger than $L$.  Note that  in the worst case $|\mathcal{I}_L|$ has size at most $(2^L - 1)n^L$, but again we expect this to be much smaller when $\varepsilon$ is not too large.

    Once one has obtained an optimizer $w^*$ of \eqref{eq : linear program} one can easily compute an optimal adversarial attack $\widetilde{\mu}=(\widetilde{\mu}_1, \dots, \widetilde{\mu}_K)$ and a corresponding optimal generalized barycenter $\lambda$. Let $w^*$ be an optimizer for \eqref{eq : linear program}. An optimal generalized barycenter can be recovered via
	\begin{equation*}
	\lambda = \sum_{C \in \mathcal{I}_K} w^*(C) \delta_{b(C)}
	\end{equation*}
	where $b(C)$ returns any point $x$ such that $\{x^i_{l_i} : (i,l_i) \in C\} \subset \overline{B}(x,\varepsilon)$, furthermore, an optimal adversarial attack $\{ \widetilde{\mu}_1, \dots, \widetilde{\mu}_K \}$ can be recovered via
	\begin{equation*}
	\widetilde{\mu}_i = \sum_{A \in S_K(i)} \sum_{C \in F_A} w^*(C)\delta_{b(C)}.
	\end{equation*}
     The total mass of the measures is correctly preserved because of the constraint $I w = [\mu_1, \dots, \mu_K]^T$. From the preceding equations, it is clear that $\lambda$ dominates $\widetilde{\mu}_i$ for each $i \in \mathcal{Y}$. In addition, it is also easy to recover the transformation $\mu_i \mapsto \widetilde{\mu}_i$.
 The above analysis implies that $\eqref{def : generalized barycenter problem} = \eqref{eq : linear program}$, hence, the optimal adversarial risk is obtained by
	\begin{equation*}
 1 - \eqref{eq : linear program} = 1 - \sum_{C \in \mathcal{I}} w^*(C).
	\end{equation*}
Given an optimizer $w^*_L$ to the level $L$ truncated problem, one can analogously compute all of the corresponding truncated quantities (barycenter, adversarial attacks, and risk) by replacing $w^*, \mathcal{I}, S_K(i)$ by $w^*_L, \mathcal{I}_L, S_L(i)$ respectively in the above formulas.

	\subsubsection{Complexity Considerations of LP approach}
	
	In general, the optimization problem involves a vector $w$ whose length is determined by $|\mathcal{I}_L|$ as well as a sparse matrix $I$ with at most $L|\mathcal{I}_L|$ non-zero entries (although this is quite pessimistic) for some choice of truncation level $L\leq K$. It is therefore essential to control $|\mathcal{I}_L|$. A straightforward calculation can show that 
	\begin{align*}
	\mathbb{E}||\mathcal{I}_L|| = \sum_{A \in S_K, |A|\leq L} \left [ \prod_{i \in A} n_k \right ] \mathbb{P} \left \{ \{X_i\}_{i \in A} \subset \overline{B}(x,\varepsilon) \text{ for some } x \right \}
	\end{align*}
	where $X_i \sim \mu_i$ are independent random variables. It is therefore crucial that the classes are in some sense well-separated as this will control the probability of the formation of an $\varepsilon$-interaction. It may be worthwhile to analyze cases where one can cleanly bound the probability on the right hand side. For example, if $X_i \sim N(m_i, \Sigma_i)$, then one may reasonably expect to bound the probability by a function of the values of $\{(m_i, \Sigma_i)\}$.

    \subsection{Entropic Regularization Approach} \label{sec:EntropyRegul}
	
	The second approach that we consider in this paper is based on Sinkhorn iterations, which here are adapted to be able to solve the entropy-regularized truncated version of problem, \eqref{eq : truncation of stratified MOT}. Sinkhorn iterations were originally proposed in \cite{sinkhorn1964relationship, sinkhorn1967concerning}, and in the past decade have been extensively studied in the optimal transport community \cite{cuturi2013sinkhorn, cuturi2014fast, benamou2015iterative, altschuler2017near, lin2022complexity} in a variety of settings. These extensive algorithmic and theoretical developments have been key factors in the increased use of optimal transport in modern machine learning by practitioners.

	
	Let $L \leq K$ be the fixed level of truncation in problem \eqref{eq : truncation of stratified MOT} and recall the notation $S^L_K = \{A \in S_K : |A| \leq L \}$ and $S^L_K(i) = \{A \in S^L_K : i \in A \}$. Throughout the discussion in this section, we will consider a general family of cost tensors $\{ c_A \}_{A}$  that are non-negative (and that may possibly take the value $\infty$) and satisfy $c_{\{  i\}} \equiv 0$ for all $i \in \Y$. The main example to keep in mind is the collection of cost tensors defined as in \eqref{eq : assumption of c_A}, since these are the cost tensors that are connected to the adversarial training problem.

	The $L$-level truncated entropic regularization problem associated to \eqref{eq : truncation of stratified MOT} (adapted in the obvious way to arbitrary cost tensors) is defined as
	\begin{equation}\label{eq : truncated entropic problem}
	\begin{aligned}
	&\min_{\{\pi_A\}_{A \in S^L_K}} \sum_{A \in S^L_K} \sum_{\mathcal{X}^A} \left ( 1 + c_A(x_A)  \right ) \pi_A(x_A) - \eta H(\pi_A) \\
	&\text{s.t. } \sum_{A \in S^L_K(i)} {\mathcal{P}_i}_{\#} \pi_A = \mu_i \hspace{0.5cm} \text{ for all } i \in \mathcal{Y},
	\end{aligned}
	\end{equation}
	where $H(\pi_A):= - \sum_{x_A} \left( \log \pi_A(x_A) - 1 \right) \pi_A(x_A)$. In general, a Sinkhorn-based algorithm for finding solutions to a regularized transport problem over couplings aims at solving the corresponding dual problem by a coordinatewise greedy update. In this case, as discussed in Appendix \ref{app:Dual}, the dual problem associated to \eqref{eq : truncated entropic problem} (here written as a minimization problem for convenience) takes the form:
	\begin{equation}\label{eq:truncated_obj}
	\begin{aligned}
	&\min_{ \{g_i\}_{i \in \Y}} \mathcal{G}^L( \{g_i\} )\\
	&:= \sum_{A \in S^L_K} \sum_{\mathcal{X}^A} \exp \left( \frac{1}{\eta} \left( \sum_{i \in A} g_i(x_i) - \left(1 + c_A(x_A) \right) \right) \right) - \sum_{i \in \mathcal{Y}} \sum_{\mathcal{X}_i} \frac{g_i(x_i)}{\eta} \mu_i(x_i).
	\end{aligned}
	\end{equation}

	
	For a given value of the dual variables $\{ g_i\}_{i \in \Y}$ we define an induced family of couplings $\{\pi_A(g)\}_{A \in S_K^L}$ according to
	\begin{equation}
	\label{eq:PiFromg}
	\pi_A(g)(x_A) := \exp \left(\frac{1}{\eta}\left(\sum_{i \in A} g_i(x_i) - (1 + c_A(x_A)) \right) \right), \quad \forall x_A \in \mathcal{X}^A, \quad \forall A \in S_K^L. 
	\end{equation}
	In general, these couplings do not satisfy the constraints in \eqref{eq : truncated entropic problem}, but if $g=g^*$ is a solution to \eqref{eq:truncated_obj}, then $\{ \pi^*_A \}_{A \in S_K^L}$ is optimal for problem \eqref{eq : truncated entropic problem}. Moreover, it can be shown that a collection of couplings of the form \eqref{eq:PiFromg} that in addition satisfy the constraints in \eqref{eq : truncated entropic problem} is in fact the global solution of \eqref{eq : truncated entropic problem}.


	\begin{remark}
		We want to highlight two new challenges in our setting in relation to other settings that have been studied in the literature of optimal transport, including standard MOTs. The first obstacle is that we must consider the set of coupling tensors of different orders simultaneously, rather than a single coupling tensor. The second obstacle is to account for the imbalance of marginal distributions. Unlike typical MOT problems, each marginal $\mu_i$ has a distinct mass. We address these issues using and expanding on the methods from previous works such as \cite{altschuler2017near, lin2022complexity}. Further details regarding the convergence analysis will be provided in section \ref{Sec:MainResults} and in the Appendix. 
	\end{remark}

	\begin{remark}
		\label{rem:Invariance}
		It is well known that the dual variables to standard MOT problems (i.e. Kantorovich potentials) satisfy a useful invariance.  For standard problems, given dual variables  $(g_1, \dots, g_K)$, the value of the MOT dual problem at $(g_1, \dots, g_K)$ will remain unchanged if one adds a constant vector with mean zero $(h_1, \dots, h_K)$ (i.e. $\sum_{i=1}^K h_i=0$) to $(g_1, \dots, g_K)$  see \cite{vialard2019elementary, di2020optimal, carlier2022linear}. However, our dual problem does not have this property.  This creates an additional layer of difficulty that we will need to overcome, as this invariance is often leveraged in Sinkhorn-type algorithms.
	\end{remark}

 \begin{remark}
 If $c_A(x_A)=+\infty$, then it is clear that the term $x_A$ in the sum over $\mathcal{X}^A$ does not contribute to the value of (\ref{eq:truncated_obj}).  Therefore, the sum over $\mathcal{X}^A$ can be reduced to the sum over $F_A$, which is computed by Algorithm \ref{alg : contruct interactions}.  As in the LP approach, this reduction can significantly reduce the computational complexity of the problem when the adversarial budget is small. 
 \end{remark}


	To solve the entropic regularization problem \eqref{eq:truncated_obj} and as a consequence also solve \eqref{eq : truncated entropic problem}, we introduce Algorithm \ref{alg : multi-sinkhorn} below.  Note in light of the last remark, when computing ${\mathcal{P}_i}_{\#} \pi_A(g^t)$, one needs only to sum over $x_A\in F_A$.

	\begin{algorithm}
		\begin{algorithmic}
			\caption{Truncated Multi-Sinkhorn (without rounding)}\label{alg : multi-sinkhorn}
			\Require{$X$ : data set, $\{c_A\}_A$ : cost tensors, $\mu=(\mu_1, \dots, \mu_K)$ : empirical distribution, $\varepsilon$ : adversarial budget, $L$ : truncation level, $\delta'$: parameter for stopping criterion.}
			\State{\textbf{Initialization.} $t=0$ and $g_i = \boldsymbol{0} \in \mathbb{R}^{n_i}$ for each $i \in \mathcal{Y}$.}
			\While{$E_t > \delta'$}
			\State{\textbf{Step 1.} Choose the greedy coordinate $I := \argmax_{1 \leq i \leq K} D_{\text{KL}}\left(\mu_i || \sum_{A \in S^L_K(i)} {\mathcal{P}_i}_{\#} \pi_A(g^t) \right)$.}
			\State{\textbf{Step 2.} Compute $g^{t+1}=(g_1^{t+1}, \dots, g_K^{t+1})$ by
				\[
				g_i^{t+1} =
				\begin{cases} 
				g_i^{t} + \eta \log \mu_i - \eta \log \left(\sum_{A \in S^L_K(i)}  {\mathcal{P}_i}_{\#} \pi_A(g^t)\right), & \text{ if } i = I\\
				g_i^{t}, & \text{ otherwise }.
				\end{cases}
				\]      
			}
			\State{\textbf{Step 3.} Set new $t$ to be $t+1$.}
			\EndWhile
			\Ensure{$\{\pi_A(g^t) \}_{A \in S^L_K}$.}
		\end{algorithmic}
	\end{algorithm}
	The stopping criterion we use for Algorithm \ref{alg : multi-sinkhorn} is $E_t \leq \delta'$ for some prespecified $\delta'>0$, where $E_t$ is
	\begin{equation}\label{eq : stopping criterion}
	E_t:= \sum_{i \in \mathcal{Y}} || \mu_i -     \sum_{A \in S^L_K(i)} {\mathcal{P}_i}_{\#} \pi_A(g^t) ||_1,
	\end{equation}
	i.e., $E_t$ is the addition of the discrepancies in marginal constraints at iteration $t$.

	Following the analysis presented in section \ref{app:AnalysisAlg4}, one can deduce that the sequence of iterates produced by Algorithm \ref{alg : multi-sinkhorn} induces a collection of couplings $\{ \pi_A(g^t) \}_{A \in S_K^L}$ (via \eqref{eq:PiFromg}) that converge toward the unique solution of \eqref{eq : truncated entropic problem} as $t \to \infty$ . However, when Algorithm \ref{alg : multi-sinkhorn} is stopped at a finite iteration $T$, it is not guaranteed that the current $\{\pi_A(g^T)\}_{A \in S_K^L}$ is feasible for the original problem \eqref{eq:truncated_obj}. In order to obtain feasibility at every iteration, it is important to introduce a rounding scheme. The scheme we use here is an adaptation of one proposed in \cite{altschuler2017near} and later extended by \cite{lin2022complexity} in multimarginal setting. However, our rounding scheme needs to take into account the fact that multiple couplings appear in each individual marginal constraint.

	\begin{algorithm}
		\begin{algorithmic}
			\caption{Rounding}\label{alg : Round}
			\Require{$\{\pi_A\}_{A \in S_K^L}$ : couplings, $\mu=(\mu_1, \dots, \mu_K)$ : empirical distribution.}
			\State{\textbf{Initialization.} $\pi^{(0)}_A = \pi_A$ for all $A \in S^L_K$.}
			\For{$i = 1, \dots, K$}
			\State{Compute $z_i := \min \left\{ \mathds{1}_{n_i}, \mu_i/ \sum_{A \in S^L_K(i)} {\mathcal{P}_i}_{\#} \pi^{(i - 1)}_A \right\} \in \mathbb{R}^{n_i}_+$ .}
			\For{ $x_i \in \mathcal{X}_i$}
			\State{Set for all $A$ and all $x_A$ containing $x_i$ in its coordinate $i$:
				\[
				\pi^{(i)}_A(x_A) =
				\begin{cases} 
				z_i(x_i) \pi^{(i-1)}_A(x_A), & \text{ if } i \in A\\
				\pi^{(i-1)}_A(x_A), & \text{ otherwise.} 
				\end{cases}
				\]      
			}
			\EndFor
			\EndFor
			\State{Compute $\text{err}_i := \mu_i - \sum_{A \in S^L_K(i)} {\mathcal{P}_i}_{\#} \pi^{(K)}_A$ for each $i \in \mathcal{Y}$}
			\State{Compute 
				\[
				\widehat{\pi}_A =
				\begin{cases} 
				\pi^{(K)}_{\{i\}} + \text{err}_i, & \text{ if } A = \{i\}\\
				\pi^{(K)}_A, & \text{ otherwise.} 
				\end{cases}
				\]}
			\Ensure{$\{\widehat{\pi}_A\}_{A \in S_K^L}$.}
		\end{algorithmic}
	\end{algorithm}

	The truncated entropic regularization algorithm (with rounding) is the following.
	\begin{algorithm}[H]
		\begin{algorithmic}
			\caption{Entropic regularization with rounding,}\label{alg : entropic regularization}
			\Require{Fix $\delta > 0$ and $L \leq K$. Set $\eta = \frac{\delta / 2}{  2L\log(C^*Kn) }$ and $\delta' = \frac{\delta / 2}{2L \max_{A \in S^L_K} |1+c_A \mathds{1}_{c_A < \infty} |}.$}
			\State{\textbf{Step 1.} For each $i \in \mathcal{Y}$, set
				\[
				\widetilde{\mu}_i := \left(1 - \frac{\delta'}{4K} \right)\mu_i + \frac{\delta' ||\mu_i||}{4K n_i } \boldsymbol{1}_{n_i}.
				\]
				Let $\widetilde{\mu}:= (\widetilde{\mu}_1, \dots, \widetilde{\mu}_K)$.}
			\State{\textbf{Step 2.} Compute $\{ \widetilde{\pi}_A : A \in S^L_K \}$ by Algorithm \ref{alg : multi-sinkhorn} with $\{c_A\}_A$, $\eta$, $\widetilde{\mu}$ and $\delta'/2$.}
			\State{\textbf{Step 3.} Obtain $\{ \widehat{\pi}_A : A \in S^L_K \}$ by  Algorithm \ref{alg : Round} with $\{ \widetilde{\pi}_A : A \in S^L_K \}$ and $\widetilde{\mu}$.}
			\Ensure{$\{\widehat{\pi}_A\}$.}
		\end{algorithmic}
	\end{algorithm}

    As we will discuss in Section \ref{Sec:MainResults}, Algorithm \ref{alg : entropic regularization} returns a $\delta$-approximate solution to the unregularized truncated problem \eqref{eq : truncation of stratified MOT}.     

	\begin{remark}
		Here, $C^*=\max_{i \in \mathcal{Y}}\frac{n_i}{n}$ appearing in the choice of entropic parameter $\eta$ is a constant independent of all other parameters, $n, K, L$ and $\delta$.

		\textbf{Step 1} of Algorithm \ref{alg : entropic regularization} is necessary unless each $\mu_i$ is dense on $\mathcal{X}_i$. Algorithm \ref{alg : multi-sinkhorn} suffers when updating potentials if there is some $\mu_j$ with a very small mass. This weakness is not specific to this setup but a commonly observed phenomenon in variants of Sinkhorn-type algorithms.

        The choice of $\eta$ is adapted for the theoretical analysis which considers the worst case. In practice, however, a larger choices of $\eta$ works well. Sinkhorn folklore suggests that $0.05$ is a good choice for $\eta$ in most applications.  See Section \ref{ssec:eta_delta} for further discussion of how to choose $\eta$ along with empirical results.

        If one is interested in the lower bound of the adversarial risk only, it is fine to skip \textbf{Step 3}, or Algorithm \ref{alg : Round}. Skipping \textbf{Step 3} has almost no effect on computing the risk. However, the main cost of the algorithm is \textbf{Step 2}, Algorithm \ref{alg : multi-sinkhorn}, and the computational cost of \textbf{Step 3} is minor comparing to that of \textbf{Step 2}.
     \end{remark}

	\subsection{Theoretical Results of Entropic Regularization Approach}
	\label{Sec:MainResults}
	
	In this Section, we state our main theoretical results, where we summarize our analysis of the approach for solving \eqref{eq : truncation of stratified MOT} that is based on the entropic regularization presented in section \ref{sec:EntropyRegul}. Our first main result describes the number of iterations required to achieve the stoppin criteria for Algorithm \ref{alg : multi-sinkhorn}.
	
	\begin{theorem}\label{thm : iteration}
		Let $\{g^t\}_{t \in \mathbb{N}}$ be generated by Algorithm \ref{alg : multi-sinkhorn}. For a sufficiently small fixed $\delta'$, the number of iterations $T$ to achieve the stopping criterion $E_T \leq \delta'$ satisfies
		\begin{equation}\label{eq : number of iteration}
		T \leq  2+ \left \lceil \frac{  \G^L(g^0) }{\min_{i \in \Y} \lVert\mu_i\rVert_1} \right \rceil  + \frac{14 K^2 \overline{R}}{ \eta \delta'},
		\end{equation}
		where
		
		\begin{equation}
		\overline{R} :=  L -  \eta  \log \min_{j \in \mathcal{Y}, y \in \mathcal{X}_j} \mu_j(y) + \eta L\log(K  C^* n).
		\label{def:OvR} 
		\end{equation}
	\end{theorem}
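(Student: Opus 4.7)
The plan is to view Algorithm \ref{alg : multi-sinkhorn} as greedy block-coordinate descent on the convex dual objective $\mathcal{G}^L$ and to bound $T$ by the ratio of the initial dual suboptimality $\mathcal{G}^L(g^0) - \min \mathcal{G}^L$ to a guaranteed per-iteration decrease, following the template of \cite{altschuler2017near, lin2022complexity}. As a first step I would verify that Step 2 of Algorithm \ref{alg : multi-sinkhorn} performs the exact minimization of $g_I \mapsto \mathcal{G}^L(g)$: computing $\partial_{g_I(x)}\mathcal{G}^L(g) = \tfrac{1}{\eta}\bigl(\sum_{A \in S_K^L(I)} \mathcal{P}_{I\#}\pi_A(g)(x) - \mu_I(x)\bigr)$ and setting it to zero recovers the update in Step 2 and forces $\sum_{A \in S_K^L(I)} \mathcal{P}_{I\#}\pi_A(g^{t+1}) = \mu_I$.

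With the update identified, a direct computation using \eqref{eq:PiFromg} yields the one-step decrease
\[
\mathcal{G}^L(g^t) - \mathcal{G}^L(g^{t+1}) \;=\; D_{\text{KL}}(\mu_I \,\|\, m_I^t) + \|m_I^t\|_1 - \|\mu_I\|_1,
\]
where $m_I^t := \sum_{A \in S_K^L(I)}\mathcal{P}_{I\#}\pi_A(g^t)$, and the right-hand side is the generalized (unbalanced) KL divergence, which is nonnegative. Because Step 1 picks the maximizing index $I$, this lower bound is at least $\tfrac{1}{K}$ times the sum over all $i \in \mathcal{Y}$ of the analogous quantities. A generalized Pinsker inequality together with Cauchy-Schwarz then translates the sum into a uniform progress bound in terms of $E_t$, and telescoping it against the initial gap produces the factor $\tfrac{14K^2\overline{R}}{\eta\delta'}$.

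To close the argument I would bound $\mathcal{G}^L(g^0) - \min \mathcal{G}^L$. The main ingredient is an a-priori $\ell^\infty$ bound on the optimal potentials $g^*$: using the optimality conditions $\sum_{A \in S_K^L(i)}\mathcal{P}_{i\#}\pi_A(g^*) = \mu_i$ together with the boundedness of $1 + c_A \mathds{1}_{c_A < \infty}$ and comparison with $\min_{j,y}\mu_j(y)$, one obtains a bound matching $\overline{R}$ as defined in \eqref{def:OvR}: the $L$ term comes from $|A| \leq L$ in the sums, the $-\eta\log\min\mu$ term from very light marginals forcing large potentials, and $\eta L\log(KC^*n)$ from summing over $\mathcal{X}^A$. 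The additive burn-in term $2 + \lceil \mathcal{G}^L(g^0)/\min_i\|\mu_i\|_1\rceil$ accounts for the initial iterations during which the total masses $\|m_i^t\|_1$ first align with $\|\mu_i\|_1$, which must happen before the Pinsker-based inequalities can be applied with sharp constants.

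The main obstacle I expect is the two nonstandard features emphasized in the paper: the imbalance of the marginals $\mu_i$ (which makes the one-step decrease an \emph{unbalanced} KL divergence, with a signed mass-defect term $\|m_I^t\|_1 - \|\mu_I\|_1$ that must be tracked carefully) and the absence of the customary translation invariance for $\mathcal{G}^L$ flagged in Remark \ref{rem:Invariance} (which rules out the standard normalization $\sum h_i = 0$ when bounding $\|g^*\|_\infty$). Both obstacles force the analysis to be carried out at the level of the sums $\sum_{A \in S_K^L(i)}\mathcal{P}_{i\#}\pi_A$ of stratified couplings rather than for a single coupling tensor, and require extracting the correct dependence on $L$, $K$, and $\eta$ in $\overline{R}$.
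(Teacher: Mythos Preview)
Your outline assembles the right ingredients---exact block-coordinate minimization, one-step decrease equal to the generalized KL divergence, a Pinsker-type inequality, and $\ell^\infty$ control on the potentials---and is in spirit the same approach as the paper. Two structural points, however, are mis-stated and would cause the argument to break down as written.

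First, the generalized Pinsker inequality the paper uses, $D_{\text{KL}}(\mu\Vert\nu)\geq \tfrac{1}{7\|\mu\|_1}\|\mu-\nu\|_1^2$, holds only under the hypothesis $D_{\text{KL}}(\mu\Vert\nu)\leq\|\mu\|_1$ (this restriction is unavoidable when $\mu$ and $\nu$ have different total mass). Consequently the burn-in term $\lceil \mathcal{G}^L(g^0)/\min_i\|\mu_i\|_1\rceil$ does \emph{not} count ``initial iterations during which the masses first align.'' The paper instead partitions \emph{all} iterations $t<T$ into the set $\mathcal{T}$ on which the Pinsker hypothesis holds for every $i$ and its complement; on $\mathcal{T}^c$ the raw KL decrement already exceeds $\min_j\|\mu_j\|_1$, so $|\mathcal{T}^c|$ is bounded by that ceiling. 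These bad iterations need not be early or consecutive.

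Second, ``telescoping against the initial gap'' would only yield $T\lesssim K^2\bigl(\mathcal{G}^L(g^0)-\mathcal{G}^L(g^*)\bigr)/(\delta')^2$, not the $1/\delta'$ rate in \eqref{eq : number of iteration}. The paper combines the per-step lower bound $\Delta^t-\Delta^{t+1}\geq\tfrac{1}{7}(E_t/K)^2$ (on $\mathcal{T}$) with a convexity-based upper bound $\Delta^t\leq\tfrac{\overline{R}}{\eta}E_t$ (Proposition~\ref{prop : upper bound of multi sinkhorn}) to obtain a \emph{quadratic} recursion $\Delta^{t_r}-\Delta^{t_{r+1}}\gtrsim(\Delta^{t_r})^2$ along the $\mathcal{T}$-subsequence, and then invokes a dedicated sequence lemma (Lemma~\ref{lem:NiceSequence}) to turn this into $|\mathcal{T}|\leq 2+\tfrac{14K^2\overline{R}}{\eta\delta'}$. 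Note also that the bound $\Delta^t\leq\tfrac{\overline{R}}{\eta}E_t$ requires $\ell^\infty$ control on the \emph{iterates} $g^t$ as well as on $g^*$ (Lemmas~\ref{lem:AuxAtOptimal} and~\ref{lem : bound of iterated g}), not only on the optimizer.
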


	\nc

	\begin{remark}
		Recall that $g^0 = \boldsymbol{0}$. The initial value of the objective function is bounded above by
		\[
		\G^L(g^0) \leq \sum_{A \in S^L_K} \sum_{\mathcal{X}^A} \exp \left( -\frac{1}{\eta} \right) \leq C^* \exp \left( L \log (Kn) -\frac{1}{\eta} \right).
		\]
		Taking $\eta = O \left( \frac{1}{L \log (Kn)} \right) $ as we will do in Algorithm \ref{alg : entropic regularization}, we see that $\G^L(g^0) = O(1)$. Hence, $ \frac{\G^L(g^0)}{\min_{j \in \Y} || \mu_j ||_1} = O(1)$.
		Moreover, if we assume that $\mu_i(x_i) \sim \frac{1}{n}$ for all $x_i$ and all $i$, the last term in \eqref{eq : number of iteration} is $O(\log^2(n))$. As a consequence, Algorithm \ref{alg : multi-sinkhorn} exhibits almost linear convergence.

	\end{remark}

	\begin{remark}
	Notice that the number of iterations in Theorem \ref{thm : iteration} does not depend on the specific cost tensors $c_A$. Only the non-negativity of the cost tensors and the assumption $c_{\{ i \}\}} \equiv 0$ for all $i \in \Y$ play a role in the estimated number of iterations.
	\end{remark}
	
	To prove Theorem \ref{thm : iteration}, we adapt to our setting the analysis for standard MOT problems presented in \cite{altschuler2017near,lin2022complexity}. In our setting, the marginal distributions need not have the same total mass and each marginal constraint depends on multiple couplings of different orders simultaneously. In addition, the dual potentials in our setting lack an invariance property that is present in the standard setting, which facilitates the analysis in that case (see Remark \ref{rem:Invariance}). As a result, we require a more careful analysis for the decrement of energy at each step of the algorithm. The proof of Theorem \ref{thm : iteration} is presented at the end of Appendix \ref{app:AnalysisAlg4}, after proving a series of preliminary estimates.

	In order to analyze Algorithm \ref{alg : entropic regularization}, we need the following estimates on the output of the rounding scheme.
	
	\begin{theorem}\label{thm : error analysis of round}
		Let $\{\pi_A : A \in S^L_K\}$ be a set of couplings and $\mu= (\mu_1, \dots, \mu_K)$ be a sequence of finite positive vectors. Then Algorithm \ref{alg : Round} returns a set of couplings $\{\widehat{\pi}_A : A \in S^L_K\}$ which satisfies: for all $i \in \mathcal{Y}$
		\[
		\sum_{A \in S^L_K(i)} {\mathcal{P}_i}_{\#} \widehat{\pi}_A = \mu_i,
		\]
		as well as the error bound
		\[  
		\sum_{A \in S^L_K} ||\widehat{\pi}_A - \pi_A ||_1 \leq L \sum_{i \in \mathcal{Y}} ||\sum_{A \in S^L_K(i)} {\mathcal{P}_i}_{\#} \pi_A - \mu_i ||_1.
		\]
	\end{theorem}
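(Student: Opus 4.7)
My plan is to treat the two claims separately. I introduce the shorthand $s_i^{(k)}:=\sum_{A\in S^L_K(i)}\mathcal{P}_{i\#}\pi_A^{(k)}$ and $\epsilon_i := \|s_i^{(0)} - \mu_i\|_1$, so that the target reads $\sum_A \|\widehat{\pi}_A - \pi_A\|_1 \le L\sum_i \epsilon_i$. The feasibility claim follows from a monotonicity argument: a direct induction on $k$ shows $\pi_A^{(k)}\le \pi_A^{(k-1)}\le \pi_A$ pointwise, since step $k$ only rescales $\pi_A^{(k-1)}$ (for $A\ni k$) by $z_k(x_k)\in[0,1]$ and leaves the other $\pi_A^{(k-1)}$ untouched. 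In particular $s_j^{(k)}\le s_j^{(0)}$ throughout, and the explicit form $z_i=\min\{\mathbf{1},\mu_i/s_i^{(i-1)}\}$ yields $s_i^{(i)}\le\mu_i$, which is preserved for $k\ge i$ by the same monotonicity. Hence $s_i^{(K)}\le\mu_i$, so $\text{err}_i\ge 0$, and $\sum_{A\in S^L_K(i)} \mathcal{P}_{i\#}\widehat{\pi}_A=s_i^{(K)}+\text{err}_i=\mu_i$ follows from the singleton patch.

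For the error bound I start from the triangle-inequality decomposition
\[
\sum_A \|\widehat{\pi}_A-\pi_A\|_1 \;\le\; \sum_A \|\pi_A^{(K)}-\pi_A\|_1 \;+\; \sum_{i\in\Y} \|\text{err}_i\|_1,
\]
where the first sum handles the $|A|\ge 2$ terms exactly (because $\widehat{\pi}_A=\pi_A^{(K)}$ there) and, for singletons, the triangle inequality spins off $\|\text{err}_i\|_1$. To control the first sum I telescope in $k$, using the pointwise identity $(1-z_k(x_k))\,s_k^{(k-1)}(x_k)=(s_k^{(k-1)}-\mu_k)_+(x_k)$, which yields
\[
\sum_A \|\pi_A^{(K)}-\pi_A\|_1 \;=\; \sum_k \|(s_k^{(k-1)}-\mu_k)_+\|_1 \;\le\; \sum_k \|(s_k^{(0)}-\mu_k)_+\|_1
\]
by the monotonicity $s_k^{(k-1)}\le s_k^{(0)}$.

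For the second sum, $s_j^{(K)}\le\mu_j$ lets me write $\|\text{err}_j\|_1=\|\mu_j\|_1-\|s_j^{(K)}\|_1$. Inserting $\pm\|s_j^{(0)}\|_1$, using the signed identity $\|\mu_j\|_1-\|s_j^{(0)}\|_1=\|(\mu_j-s_j^{(0)})_+\|_1-\|(s_j^{(0)}-\mu_j)_+\|_1$, and rewriting $\|s_j^{(0)}\|_1-\|s_j^{(K)}\|_1=\sum_{A\ni j}(\|\pi_A\|_1-\|\pi_A^{(K)}\|_1)$, then summing over $j$ and swapping the order of summation to obtain $\sum_A |A|(\|\pi_A\|_1-\|\pi_A^{(K)}\|_1)$, and finally bounding $|A|\le L$ and reusing the telescope, yields
\[
\sum_j \|\text{err}_j\|_1 \;\le\; \sum_j \|(\mu_j-s_j^{(0)})_+\|_1 \;+\; (L-1)\sum_j \|(s_j^{(0)}-\mu_j)_+\|_1.
\]
Adding this to the telescope bound for $\sum_A\|\pi_A^{(K)}-\pi_A\|_1$ and collapsing via $\|\mu_j-s_j^{(0)}\|_1=\|(\mu_j-s_j^{(0)})_+\|_1+\|(s_j^{(0)}-\mu_j)_+\|_1$ gives exactly $L\sum_j \epsilon_j$ (using $L\ge 1$).

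The principal obstacle is obtaining the sharp constant $L$ rather than the $L+O(1)$ that a direct triangle-inequality bound produces. The saving to $L$ relies crucially on exploiting the signed cancellation hidden inside $\|\mu_j\|_1-\|s_j^{(0)}\|_1$, which is precisely what pays for the extra copy of $\|(s_j^{(0)}-\mu_j)_+\|_1$ left over from the telescope on $\sum_A\|\pi_A^{(K)}-\pi_A\|_1$. Apart from this cancellation, the remaining bookkeeping is routine combinatorics on the subsets $A\in S^L_K$ together with the simple bound $|A|\le L$.
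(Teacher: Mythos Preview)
Your proof is correct and follows essentially the same route as the paper's: both arguments rely on the pointwise monotonicity $\pi_A^{(k)}\le\pi_A^{(k-1)}$, the telescoping identity $\sum_A(\|\pi_A\|_1-\|\pi_A^{(K)}\|_1)=\sum_k\|(s_k^{(k-1)}-\mu_k)_+\|_1$, the swap $\sum_j\sum_{A\ni j}=\sum_A|A|\le L\sum_A$, and the signed cancellation that saves one copy of the positive-part term to yield the sharp constant~$L$. Your packaging in terms of $(\,\cdot\,)_+$ is slightly cleaner than the paper's, which expresses the same quantity as $\tfrac12\bigl(\|s_i^{(0)}-\mu_i\|_1+\|s_i^{(0)}\|_1-\|\mu_i\|_1\bigr)$ and then uses an $(L-2)+2$ split to effect the identical cancellation.
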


%
%

	Finally, we can combine Theorems \ref{thm : iteration} and \ref{thm : error analysis of round} to prove that Algorithm \ref{alg : entropic regularization} outputs a $\delta$-approximate solution for \eqref{eq : truncation of stratified MOT}, the truncated version of \eqref{eq : stratified MOT}.

	\begin{theorem}\label{thm:AnalysisALgoWithRound}
	Algorithm \ref{alg : entropic regularization} returns a $\delta$-approximate optimal solution for \eqref{eq : truncation of stratified MOT}. Moreover, if $\min_{j \in \mathcal{Y}, y \in \mathcal{X}_j} \mu_j(y) = \Omega(n^{-1})$  and $C^*:= \max_{i \in \mathcal{Y}} \frac{n_i}{n} = O(1)$, Algorithm \ref{alg : entropic regularization} requires 		
    \[
		O \left(  \frac{   L^2 K^{2} \max_{A \in S_K^L} (1 + c_A \mathds{1}_{c_A < \infty} ) |\mathcal{I}_L|   \log (C^* Kn)   }{ \delta^2}  \right)
	\]
operations to produce its output.
	\end{theorem}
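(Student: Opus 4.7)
The plan is to combine Theorems \ref{thm : iteration} and \ref{thm : error analysis of round} through a four-part decomposition of the optimality gap. Let $\pi^* = \{\pi_A^*\}$ be an optimizer of \eqref{eq : truncation of stratified MOT} with marginals $\mu$, let $\widetilde{\pi}$ denote the output of the multi-Sinkhorn call in Step 2 of Algorithm \ref{alg : entropic regularization}, and let $\widehat{\pi}$ be its final output after rounding. Write $C(\pi) := \sum_{A \in S_K^L} \sum_{x_A} (1 + c_A(x_A)) \pi_A(x_A)$, and let $V^*(\nu)$ and $V^*_\eta(\nu)$ denote the optimal values of the truncated transport problem \eqref{eq : truncation of stratified MOT} and of its entropic regularization \eqref{eq : truncated entropic problem} at marginals $\nu$. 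I would decompose
\begin{equation*}
C(\widehat{\pi}) - V^*(\mu) = \bigl[C(\widehat{\pi}) - C(\widetilde{\pi})\bigr] + \bigl[C(\widetilde{\pi}) - V^*_\eta(\widetilde{\mu})\bigr] + \bigl[V^*_\eta(\widetilde{\mu}) - V^*(\widetilde{\mu})\bigr] + \bigl[V^*(\widetilde{\mu}) - V^*(\mu)\bigr],
\end{equation*}
and show that each of the four pieces is bounded by a constant multiple of $\delta$ under the prescribed choices of $\eta$ and $\delta'$.

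Three of the four pieces are relatively direct. For the rounding term, Theorem \ref{thm : iteration} applied at stopping level $\delta'/2$ guarantees $E_T \leq \delta'/2$ for $\widetilde{\pi}$, so Theorem \ref{thm : error analysis of round} yields $\sum_A \lVert \widehat{\pi}_A - \widetilde{\pi}_A \rVert_1 \leq L \delta'/2$; multiplying by $\max_A (1 + c_A \mathds{1}_{c_A < \infty})$ and using the choice of $\delta'$ produces a bound of $\delta/4$. For the entropic bias, every feasible truncated coupling has support inside $\bigcup_A F_A$ with mass bounded by $1$ entrywise, so its entropy is at most $2L \log(C^* K n)$; the standard envelope inequality $0 \leq V^*(\widetilde{\mu}) - V^*_\eta(\widetilde{\mu}) \leq \eta \cdot 2L \log(C^* K n)$ combined with the choice of $\eta$ yields $\delta/4$. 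For the marginal smoothing term, $\widetilde{\mu}_i - \mu_i$ has total variation $O(\delta'/K) \lVert \mu_i \rVert$, and because $c_{\{i\}} \equiv 0$, the extra (or missing) mass can be absorbed into or out of the singleton pieces $\pi_{\{i\}}$ at unit cost per unit mass, so $|V^*(\widetilde{\mu}) - V^*(\mu)| = O(\delta')$.

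The principal obstacle is the middle Sinkhorn gap $C(\widetilde{\pi}) - V^*_\eta(\widetilde{\mu})$, since $\widetilde{\pi} = \pi(g^T)$ defined by \eqref{eq:PiFromg} is \emph{not} feasible for the marginal constraints. I would exploit that $\widetilde{\pi}$ is the unconstrained minimizer of the Lagrangian $\mathcal{L}(\pi, g^T) := \sum_A \sum_{x_A} (1 + c_A) \pi_A - \eta H(\pi) - \sum_i \langle g_i^T, \sum_{A \in S_K^L(i)} {\mathcal{P}_i}_{\#} \pi_A - \mu_i \rangle$ by comparing $\mathcal{L}(\widetilde{\pi}, g^T)$ with $\mathcal{L}(\pi^\eta_{\widetilde{\mu}}, g^T)$, where $\pi^\eta_{\widetilde{\mu}}$ is the entropic optimum, and rearranging to express $C(\widetilde{\pi}) - V^*_\eta(\widetilde{\mu})$ in terms of $\lVert g^T \rVert_\infty \cdot E_T$ plus an $\eta$-small entropy remainder. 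A uniform $L^\infty$ bound $\lVert g^T \rVert_\infty \lesssim \overline{R}$ arising during the proof of Theorem \ref{thm : iteration} then converts $E_T \leq \delta'/2$ into $O(\delta)$ control; the delicate point, as flagged in Remark \ref{rem:Invariance}, is that our dual problem lacks the translation invariance of classical MOT, so the potentials' $L^\infty$ norm must be tracked along the algorithm rather than normalized away.

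For the complexity, I would substitute $\eta = \Theta(\delta / (L \log(C^* K n)))$ and $\delta' = \Theta(\delta / (L \max_A(1 + c_A \mathds{1}_{c_A < \infty})))$ into Theorem \ref{thm : iteration}. Under the assumptions $\min_{j,y} \mu_j(y) = \Omega(n^{-1})$ and $C^* = O(1)$, the quantity $\overline{R}$ in \eqref{def:OvR} collapses to $O(L)$, yielding an iteration count of order $L^2 K^2 \log(C^* K n) \max_A(1 + c_A \mathds{1}_{c_A<\infty}) / \delta^2$ once the factors of $L$ coming from $\overline{R}$, $\delta'$, and $\eta$ are combined. Each iteration only touches tuples $x_A \in F_A \subset \mathcal{I}_L$ by the remark following Algorithm \ref{alg : multi-sinkhorn}, and with cached coordinate marginals the update of the single potential $g_I$ selected at that step costs $O(|\mathcal{I}_L|)$; multiplication recovers the claimed bound. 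The construction of $\mathcal{I}_L$ via Algorithm \ref{alg : contruct interactions} and the final rounding via Algorithm \ref{alg : Round} are of lower order and do not influence the leading asymptotics.
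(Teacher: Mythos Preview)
Your proposal is correct but routes the optimality gap differently from the paper. The paper does \emph{not} pass through $V^*_\eta(\widetilde{\mu})$ or $V^*(\widetilde{\mu})$. Instead it introduces an auxiliary collection $\pi' = \text{Round}(\pi^*, \nu)$, where $\nu_i := \sum_{A \in S^L_K(i)} {\mathcal{P}_i}_{\#} \widetilde{\pi}_A$ are the \emph{actual} marginals of the Sinkhorn output $\widetilde{\pi}$, and then uses the three-term splitting $C(\widehat{\pi}) - C(\pi^*) = [C(\widehat{\pi}) - C(\widetilde{\pi})] + [C(\widetilde{\pi}) - C(\pi')] + [C(\pi') - C(\pi^*)]$. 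The first and third brackets are controlled by Theorem~\ref{thm : error analysis of round}; for the middle bracket, the key observation is that $\widetilde{\pi} = \pi(g^T)$ is automatically the entropic optimizer at marginals $\nu$, so $C(\widetilde{\pi}) - \eta H(\widetilde{\pi}) \leq C(\pi') - \eta H(\pi')$ and only a crude entropy bound (via the log-sum inequality) is needed.

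The practical difference is that the paper's argument never invokes the $L^\infty$ bound on the potentials (Lemma~\ref{lem : bound of iterated g}) in the approximation step; that bound is used only inside the iteration-count estimate of Theorem~\ref{thm : iteration}. Your Lagrangian comparison for the Sinkhorn gap $C(\widetilde{\pi}) - V^*_\eta(\widetilde{\mu})$ does require $\lVert g^T \rVert_\infty \lesssim \overline{R}$, so you import one extra ingredient. On the other hand, your decomposition separates the entropic bias $V^*_\eta - V^*$ and the marginal-smoothing error $V^*(\widetilde{\mu}) - V^*(\mu)$ as standalone pieces, which is arguably more modular. One small caution: your singleton-absorption argument for $|V^*(\widetilde{\mu}) - V^*(\mu)|$ needs care since $\widetilde{\mu}_i - \mu_i$ has a negative part and $\pi^*_{\{i\}}$ may not dominate it; the cleanest fix is simply to apply Theorem~\ref{thm : error analysis of round} again to round $\pi^*$ onto $\widetilde{\mu}$, which yields the same $O(\delta)$ control. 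The complexity derivation is the same in both approaches.
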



	\begin{remark}
	Theorem \ref{thm:AnalysisALgoWithRound} precisely quantifies the benefit of truncation in Algorithm \ref{alg : multi-sinkhorn}: while all $K$ classes still play a role in the formation of an adversarial attack, by truncating the number of interactions between classes, in the worst case $|\mathcal{I}_L|$ scales like $\widetilde{O}(n^L)$ as opposed to the worst case of scaling of $|\mathcal{I}_K|$ which scales like $\widetilde{O}(n^K)$. 
	\end{remark}
	
	\section{Empirical results}\label{section : empirical results}
	
	\subsection{Numerical experiments for MNIST and CIFAR-10}

	In this section, we present experimental results obtained from applying our algorithms to datasets drawn from MNIST and CIFAR-10. For both data sets, there are 10 classes, and each class contains 100 points. Two different underlying ground metrics, $\ell^2$ and $\ell^{\infty}$, are used. Due to dimensional scaling effects, the adversary requires a larger budget when the ground metric is $\ell^2$. Note that  MNIST images are $28 \times 28$ pixel grayscale images, while CIFAR-10 images are $32 \times 32$ pixels with 3 color channels.

	Figure \ref{fig:MNIST, CIFAR10} shows the adversarial risks computed by the LP and Sinkhorn approaches using truncations of orders 2 and 3, along with their associated time complexities. Even though we restrict the interactions to order 2 or 3, the plots show that the adversarial risk does not change much when going from order 2 to order 3 interactions when the budget is not too large. This indicates that the truncated problem indeed provides meaningful lower bounds for the true problem when the adversarial budget is reasonable. Indeed, the curves for truncations of orders 2 and 3 are nearly identical for adversarial risk values below .3.  Let us emphasize that the adversarial risks obtained by the LP and Sinkhorn approaches do not coincide, and in fact, the former is always larger than the latter as Sinkhorn gives a lower bound for the true optimal adversarial risk. Here, we do not scale down the entropic parameter $\eta$ in terms of the number of points. One can reduce this gap by decreasing $\eta$, but this may cause numerical issues, which is a common phenomenon in computational optimal transport methods based on entropic regularization.

	We should emphasize that the size of the adversarial budget has an enormous impact on the computational complexities of both algorithms.  Indeed, both algorithms need the interaction matrix $\mathcal{I}_L$ constructed in Algorithm \ref{alg : contruct interactions} to be relatively sparse to run efficiently. For small values of $\varepsilon$ we expect $\mathcal{I}_L$ to be very sparse; however, as $\varepsilon$ increases, it will become more dense slowing down both algorithms.
 

    For the datasets that we consider, the worst-case complexities of the LP and Sinkhorn without truncation are $O(100^{30})$ and $\widetilde{O}(100^{10})$, respectively. With truncation up to order 3, the worst-case complexities become $O(100^9)$ and $\widetilde{O}(100^3)$, respectively. Note that these numbers are not usually achieved with adversarial budgets of small or moderate size (i.e. budgets that are relevant for adversarial training). Note, however, that the worst-case complexity of the LP approach can still be problematic even with truncation.  In practice, this does happen in our experiments once $\epsilon$ becomes sufficiently large, in this case, we terminate the computation once it exceeds a certain wall-clock time. For Sinkhorn, since its worst-case complexity is almost linear with respect to $n$, the computation remains feasible even for budgets where the LP approach is infeasible (at least when $\eta$ is not too small). This is one significant advantage of the Sinkhorn approach. However, one should still keep in mind that Sinkhorn will only return the exact value of the adversarial risk when the entropic regularization parameter is sent to zero (i.e. the regime where the algorithm gets slower and slower). Nonetheless, in our experiments,  with an appropriate choice of the entropic parameter $\eta$, the Sinkhorn solution is quite close to the exact solution provided by the LP, while offering a much shorter computation time.

   In Figure \ref{fig:comparison}, we run experiments on a smaller subset of the data where there are 4 classes with 50 points (for both MNIST and CIFAR-10).  This allows us to compare our computed value of the adversarial risk from the order 2 and 3 truncated problems to the computed value for the untruncated problem. Readers can observe from those plots that truncations of orders 2 and 3 barely underestimate or even match the full order 4 risk, especially when the adversarial budget isn't too large.  This is thanks to the fact that there are almost no valid interactions of higher order for reasonable values of $\varepsilon$.  Indeed in almost all of the plots, the order 2 truncated value matches the untruncated value when the adversarial risk is in the range of $0\%$ to $30\%$.  Once $\varepsilon$ is large enough that the risk grows beyond $.3$ we do start to see some discrepancy between the values of the truncated problem and the untruncated problem (especially when the ground metric is $\ell^{\infty}$). However, this regime is not so relevant as an adversarial risk of $20\%$ is already extremely large and suggests that one should be training with a smaller budget.

   From the experiments, we see that the truncation method works well in terms of both accurately approximating the adversarial risk and reducing the computational complexity. This is surprisingly nice because computing or even approximating the adversarial risk with many classes is hard in general: one must deal with a tensor of large order, requiring immense computational power. As long as classes are separated well, the truncation method will significantly reduce the order of tensors appearing in optimization (enhancing the efficiency of computing), while barely changing the optimal value of the problem, i.e. one should expect a very tight relaxation.

    
	
	{
	}


	\begin{figure}[h]
		\centering    \includegraphics[width=\linewidth]{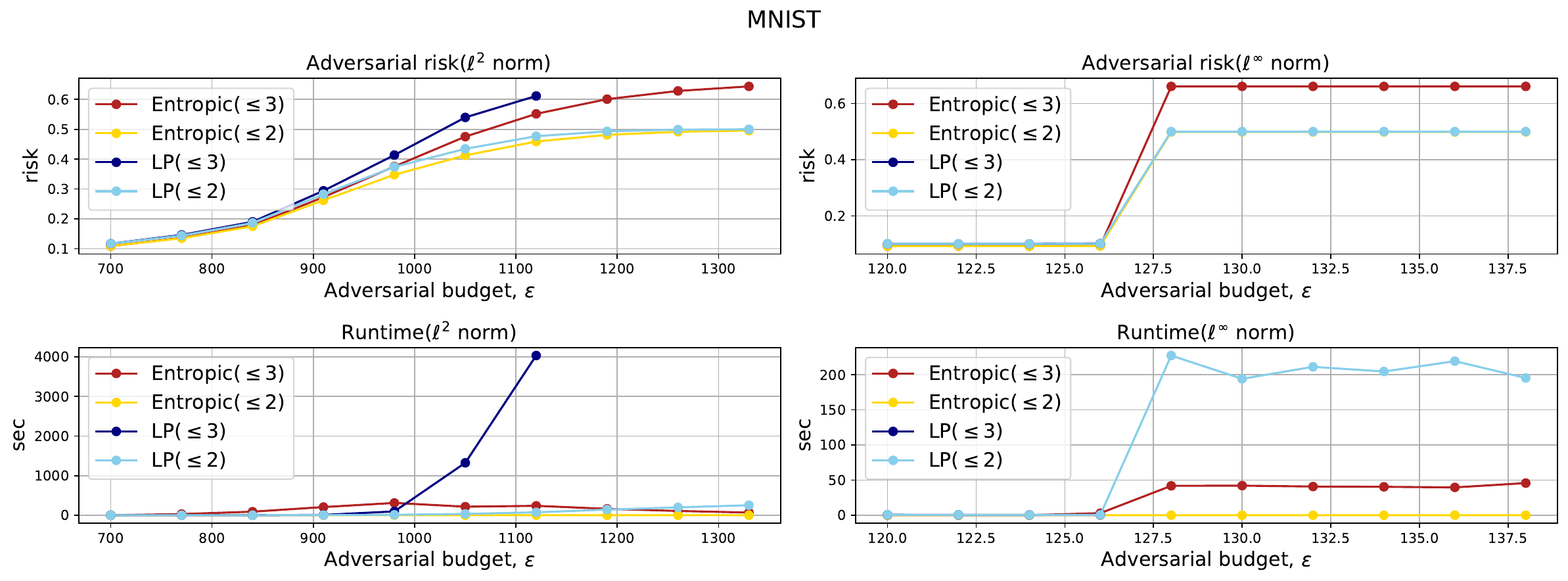}
		\centering    \includegraphics[width=\linewidth]{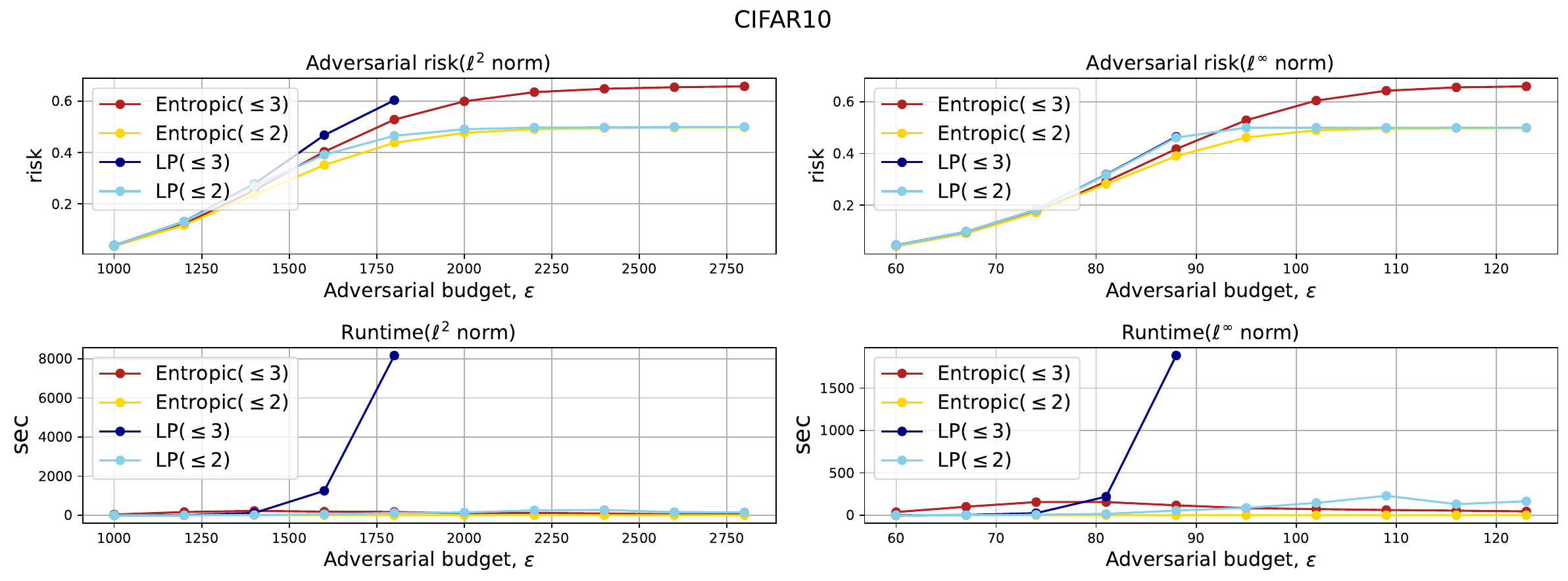}
		\caption{Lower bound of adversarial risk of and runtimes of the entropic regularization and LP for MNIST and CIFAR-10. The left plots and the right ones are equipped with $\ell^2$ norm and $\ell^{\infty}$ norm, respectively. For LP with truncation up to 3, due to the huge complexity we stop the computing earlier.}
		\label{fig:MNIST, CIFAR10}
	\end{figure}

	\begin{figure}[h]
		\centering
		\includegraphics[width=\linewidth]{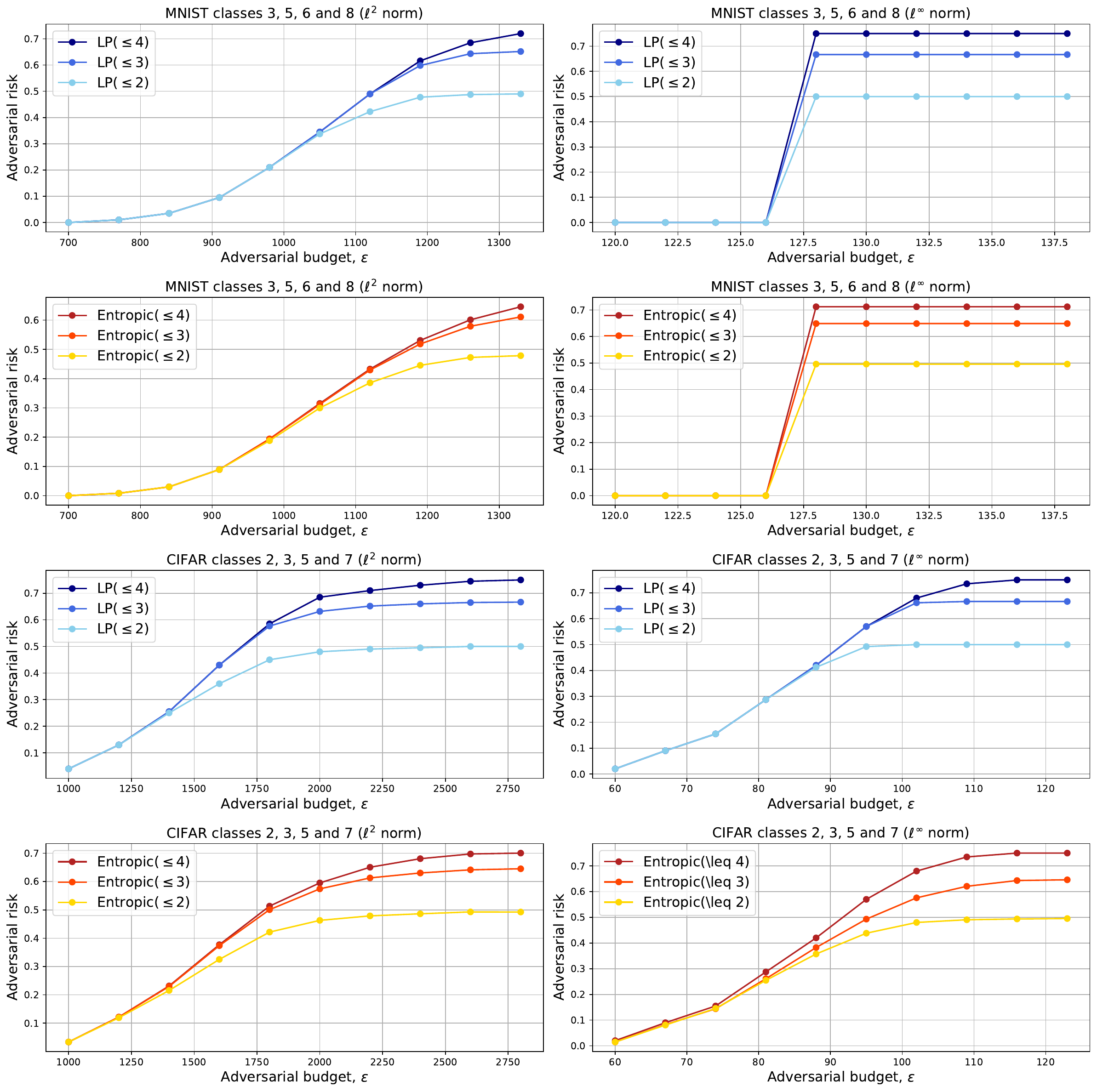}
		\caption{The optimal adversarial risks for MNIST and CIFAR-10 with 4 classes.}
		\label{fig:comparison}    
	\end{figure}

	\subsection{Fluctuation of interactions: Gaussian mixture, Iris and Glass data sets}

        In this section we further investigate in three settings the number of available higher-order interactions as well as how much mass the optimal multicoupling in \eqref{eq : stratified MOT} uses for each order. The three settings we consider are the following.
        
        \paragraph{Synthetic} We make a simple two-dimensional synthetic dataset which consists of six classes. For each class we generate 30 samples from $N(c_i, I)$, $2$-d Gaussian distribution with a mean $c_i$ and the identity covariance matrix, where $c_i \in \{(-2,2),(2,2),(6,2),(-2,-2),(2,-2),(6,-2)\}$. This gives a total of 180 samples. 

        \paragraph{Iris} This is the Iris dataset \cite{misc_iris_53} which is four dimensional (measurements of sepal length, sepal width, petal length, and petal width) and has three classifications (setosa, versicolor, and virginica). One must classify the type of iris from the four given measurements. There are 50 samples for each type of iris and a total of 150 samples.

        \paragraph{Glass} This is the Glass dataset \cite{misc_glass_identification_42} which is a ten dimensional (refractive index and percent composition of 9 atoms) and has six classifications (types of glass). There are 214 total samples non-uniformly distributed across the six classifications.
 
	In Figure \ref{fig : order contribution} we solve \eqref{eq : stratified MOT} and plot the amount of mass used in the optimal multicoupling, weighted by the number of interactions (we omit orders with negligible contributions). This places the curves for higher-order interactions on the same scale and represents how much total mass of the marginals is accounted for by each order of interaction.

    \begin{figure}[h]
        \centering
        \includegraphics[width=.5\linewidth]{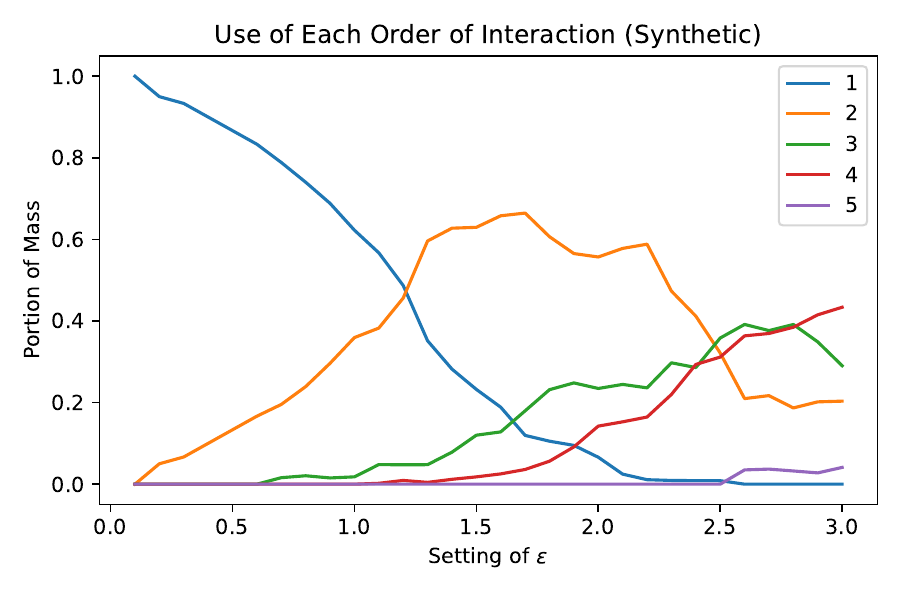}\hfill
        \includegraphics[width=.5\linewidth]{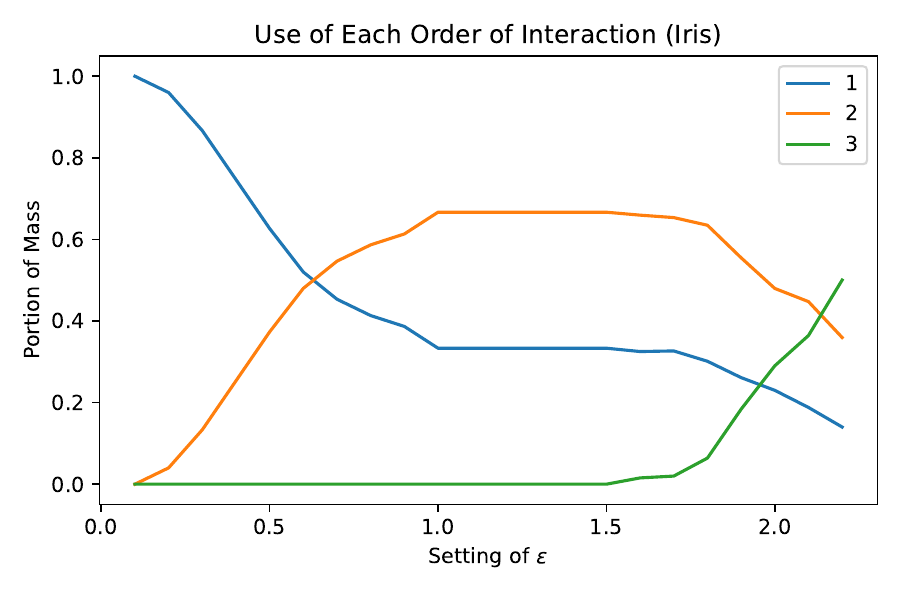}\hfill
        \includegraphics[width=.5\linewidth]{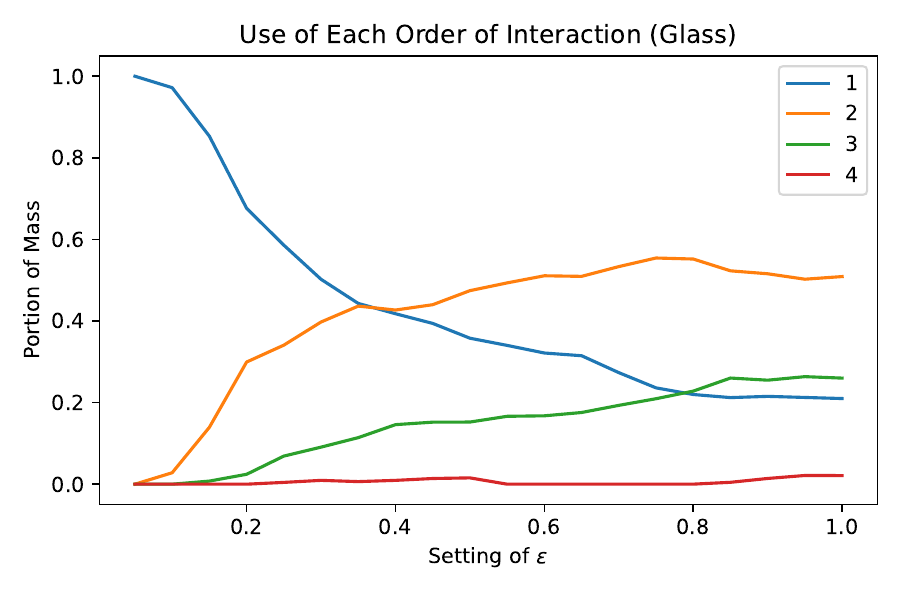}
        \caption{Contribution by interactions of each order to the optimal multicoupling as the budge $\epsilon$ varies in three different settings.}
        \label{fig : order contribution}
    \end{figure}

    Interestingly the use of lower-order interactions (high-order interactions respectively) does not monotonically decrease (increase). A simple example where this happens can be furnished using six points and three classes and is illustrated in Figure \ref{fig : less higher}. In this example if each point has unit weight the optimal perturbation for small budget has a mass of 4 (1 central point and 3 exterior points) and for a higher budge the optimal perturbation has a mass of 3 (3 midpoints) while using fewer order 3 interactions.

    \begin{figure}
        \centering
        \includegraphics[width=0.7\linewidth]{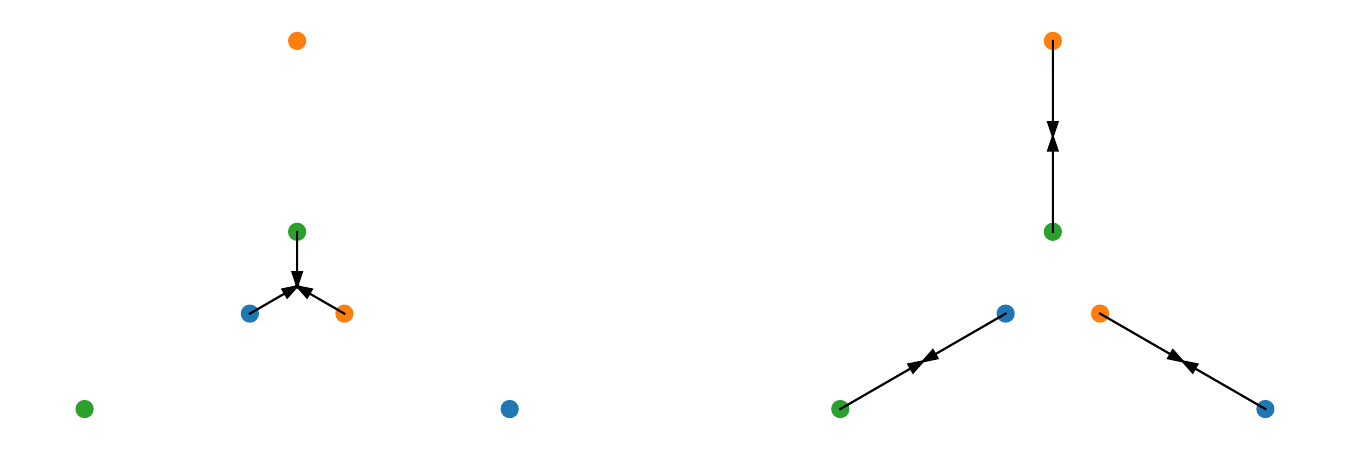}
        \caption{Configuration of six points with colors representing class. the central triangle has edge length 1 and the distance between corresponding points in the triangles is 2.0207. \textbf{(Left)} When $\varepsilon \in (0.5774, 1.0104)$ the optimal merging is achieved by placing the three interior points together. \textbf{(Right)} When $\varepsilon \in (1.0104, 1.0729)$ the optimal merging is achieved by pairing the interior triangle with the exterior triangle.} 
        \label{fig : less higher}
    \end{figure}

    In Figure \ref{fig : feasible inters} we also show the number of feasible interactions that must be considered of each order as the budget varies. For small budgets $\varepsilon$ there are typically only interactions of lower order. However there are sharp thresholds where the number of higher order interactions rapidly increases. After these thresholds the computational complexity of the optimization problems rapidly increases due to an explosion in the number of optimization variables.

    \begin{figure}[h]
        \centering
        \includegraphics[width=.5\linewidth]{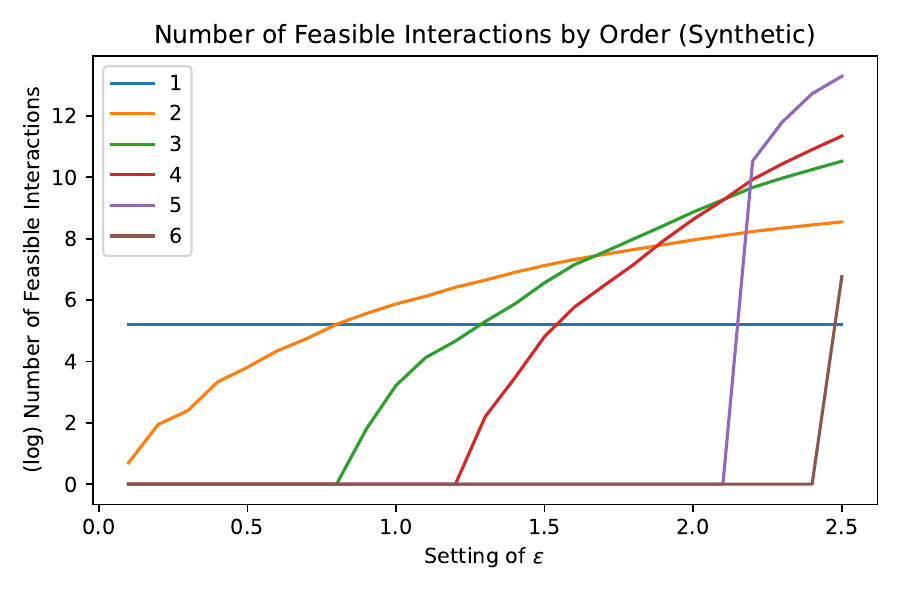}\hfill
        \includegraphics[width=.5\linewidth]{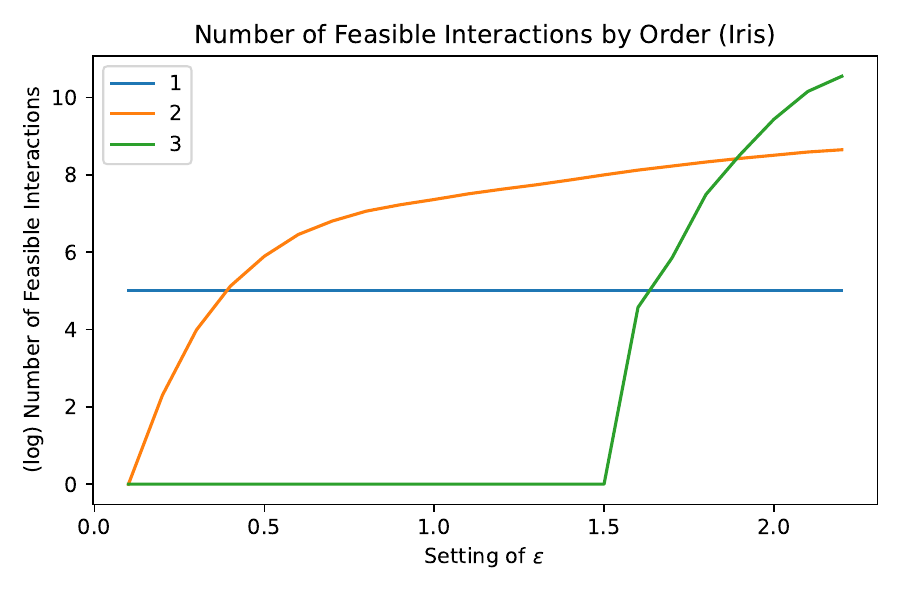}\hfill
        \includegraphics[width=.5\linewidth]{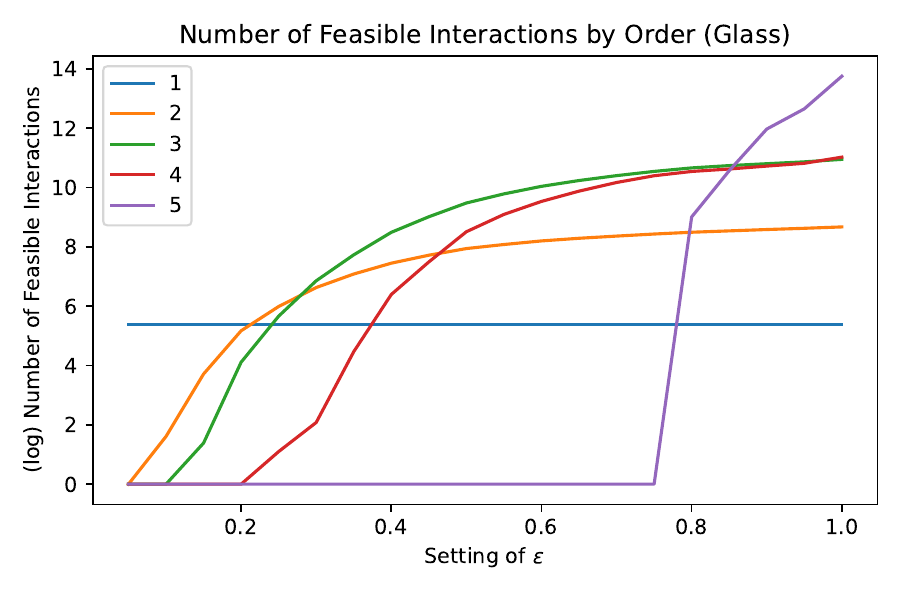}
        \caption{Number of interactions (plus one, in log scale) of each order which are feasible as the budget varies in the three different settings.}
        \label{fig : feasible inters}
    \end{figure}

    \subsection{Setting $\eta$ and $\delta'$ In Practice}\label{ssec:eta_delta}

    In Theorem \ref{thm:AnalysisALgoWithRound} we obtain bounds which achieve approximation ratios based on a parameter $\delta$. As one can see in Algorithm \ref{alg : entropic regularization} however, this requires setting $\eta$ and $\delta'$ as functions of $\delta$. This can be problematic in practice because taking $\eta$ too small (for example below 0.01) leads to numerical instability, and setting $\delta'$ too small may cause prohibitively many iterations in Algorithm \ref{alg : multi-sinkhorn}. Instead we often set $\eta$ to be sufficiently small, typically close to $\eta = 0.01$ and $\delta' = 0.001$. We observe empirically that these lead to high-quality solutions. This is illustrated on a synthetic example in Figure \ref{fig : eta impact}. 

    \begin{figure}[h]
        \centering
        \includegraphics[width=0.7\linewidth]{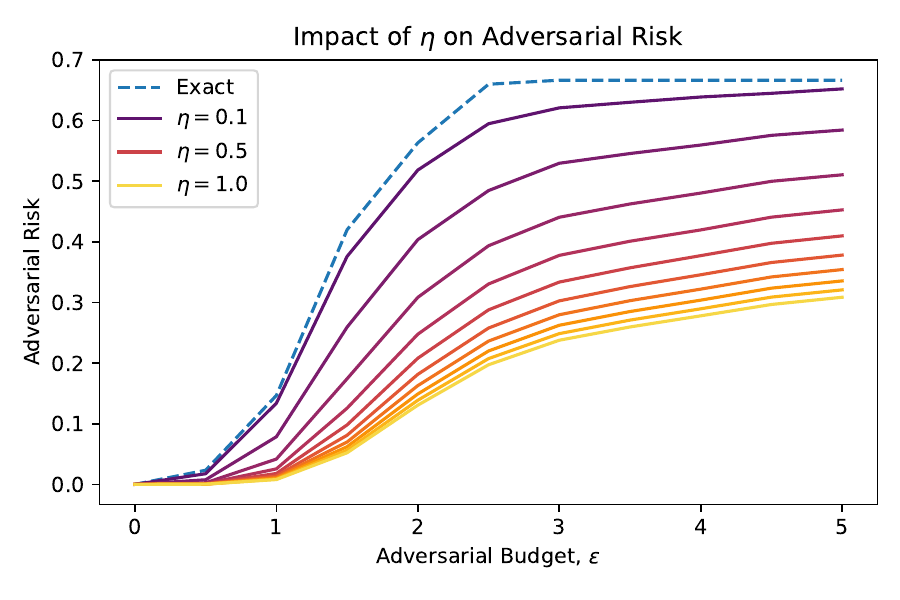}
        \caption{Impact on the adversarial risk as $\eta$ varies from $0.1$ to $1.0$ in steps of $0.1$ on synthetic data with $\delta' = 0.001$. }
        \label{fig : eta impact}
    \end{figure}

    To create this figure we used six classes in two dimensions with data drawn according to $N(\mu_i,I/2)$ where $\mu_i \in \{(-2,2),(2,2),(6,2),(-2,-2),(2,-2),(6,-2)\}$ and 100 samples from each class. The maximum allowed interaction size was three. The parameter $\delta'$ in Algorithm \ref{alg : multi-sinkhorn} was fixed at $0.001$. At least in this setting, a choice of moderate $\eta = 0.1$ achieves close to the exact adversarial risk (when only considering interactions up to size three). 
 	
	\section{Conclusion and future works}\label{section : conclusions and future works}
	In this paper, we propose two algorithms that demonstrate impressive performance in synthetic data sets by utilizing MOT formulations of the adversarial training problem. One notable aspect is our approach to reducing computational costs by truncating the order of interaction, due to the well separability of real data.

	A natural question is how to quantify the separation and impact of truncation on error. What is the ideal truncation level to reduce computational expenses while maintaining acceptable error levels? It would be highly beneficial to identify a sufficient criterion based on fundamental statistical measurements like mean, variance, and covariance to advise on the optimum truncation level. A Gaussian mixture model would be a promising starting point to address this inquiry.

	Improving the efficiency of both linear programming and the entropic regularization approaches is imperative. For the linear programming approach, it is possible to enhance it as the progress of understanding a general linear program. As for Algorithm \ref{alg : multi-sinkhorn}, it is known that the Sinkhorn algorithm can be parallelized in the binary setting(two marginals): see \cite{peyre2019computational}. However, to the best of our knowledge, it is unclear how to run Sinkhorn-type algorithms for MOT in parallel.  Developing an appropriately parallelized version of our algorithms would be a very interesting line of inquiry.

    \appendix
	\section{Appendix}\label{section : Appendix}
	
	\subsection{Proof of Proposition \ref{prop : approximation bounds}}
	\label{App:Prop1}
	
	\begin{proof}
		Let $(\lambda^*_A, \mu^*_{i,A})$ be the optimal measures in \eqref{eq : partition of barycenter}. Let $\lambda^L_A = \lambda^*_A$, $\mu^L_{i,A} = \mu^*_{i,A}$ for every $A$ with $|A| < L$ and let $\lambda^L_A = 0, \mu^L_{i,A} = 0$ for every $A$ with $|A| > L$.  The approach (made precise below) is for each $A$ with $|A| > L$ to distribute the mass in the sets $\lambda^*_A$ and $\mu^*_{i,A}$ uniformly over the subsets of size $L$ of $A$. 
		
		Now carrying out the details, for for every $A$ with $|A| = L$ let 
		\begin{align*}
		\lambda^L_A &= \sum_{\substack{B \in S_K \\ A \subset B}} \frac{|B|}{L} {|B| \choose L}^{-1} \lambda_B^* = \sum_{\substack{B \in S_K \\ A \subset B}}  {|B|-1 \choose L-1}^{-1} \lambda_B^* 
		\end{align*}
		and for every $i \in A$ set
		\begin{equation*}
		\mu_{i,A}^L = \sum_{\substack{B \in S_K \\ A \subset B}} {|B|-1 \choose L-1}^{-1} \mu_{i,B}^*.
		\end{equation*}
		Clearly the $(\lambda_A^L,\mu_{i,A}^L)$ do not place any mass on sets $A$ of size exceeding $L$. In addition 
		\begin{align*}
		\sum_{A \in S_K(i)} \mu_{i,A}^L &= \sum_{\substack{A \in S_K(i) \\ |A| < L}} \mu_{i,A}^* + \sum_{\substack{A \in S_K(i) \\ |A| = L}} \mu_{i,A}^L \\
		&= \sum_{\substack{A \in S_K(i) \\ |A| < L}} \mu_{i,A}^* + \sum_{\substack{A \in S_K(i) \\ |A| = L}}\sum_{\substack{B \in S_K \\ A \subset B}} {|B|-1 \choose L-1}^{-1} \mu_{i,B}^* \\
		&= \sum_{\substack{A \in S_K(i) \\ |A| < L}} \mu_{i,A}^* + \sum_{\substack{B \in S_K(i) \\ |B| \geq L}}  \sum_{ \substack{A \subset B \\ |A| = L \\ A \in S_K(i)}} {|B|-1 \choose L-1}^{-1} \mu_{i,B}^* \\
		&= \sum_{\substack{A \in S_K(i) \\ |A| < L}} \mu_{i,A}^* + \sum_{\substack{B \in S_K(i) \\ |B| \geq L}}  \mu_{i,B}^* = \mu_i.
		\end{align*}
		where we have used in order the definition of $\mu_{i,A}^L$, a change in the order of summation, that the inner summand is constant with $A$ and is counted precisely ${|B| - 1 \choose L - 1}$ times, and that $\mu_{i,A}^*$ is feasible so it must sum to $\mu_i.$ This shows that the $\mu_{i,A}^L$ terms also satisfy the summation constraint and are therefore feasible for \eqref{eq : truncation of barycenter}. 
		
		Next we will check that $C_\eps(\lambda_A, \mu_{i,A}^L) = 0$ for every $A$. Let $\pi_{i,A}^*$ be the couplings of $\lambda_A^*,\mu_{i,A}^*$ which achieve 
		\begin{equation*}
		0 = C_\eps(\lambda_A^*,\mu_{i,A}^*) = \int c_\eps(x,x') d\pi_{i,A}^*.
		\end{equation*}
		For every $A$ with $|A| < L$ we can use the coupling $\pi_{i,A}^*$ to couple $\mu_{i,A}^L$ and $\lambda_{A}^L$ since these equal $\mu_{i,A}^*$ and $\lambda_{A}^*$ respectively. For $|A| > L$ we can use $\pi_{i,A} = 0$ since $\mu_{i,A}^L = \lambda_{A}^L = 0$. All that remains is to handle $|A| = L$.
		
		In this case note that $\pi_{i,A}^*$ has first marginal $\mu_{i,A}^*$ and second marginal $\lambda_A^*$. From this it follows that if we define
		\begin{equation*}
		\pi_{i,A}^L = \sum_{\substack{B \in S_K \\ A \subset B}} {|B|-1 \choose L-1}^{-1} \pi_{i,B}^*
		\end{equation*}
		then $\pi_{i,A}^L$ will have marginals $\mu_{i,A}^L$ and $\lambda_{A}^L$. We can also check that the cost is zero via
		\begin{equation*}
		\int c_\eps(x,x') d\pi_{i,A}^L(x,x') = \sum_{\substack{B \in S_K \\ A \subset B}} {|B|-1 \choose L-1}^{-1} \int c_\eps(x,x') d\pi_{i,B}^*(x,x') =\sum_{\substack{B \in S_K \\ A \subset B}} {|B|-1 \choose L-1}^{-1} 0.
		\end{equation*}
		This shows that $\pi_{i,A}^L$ is a coupling of $\mu_{i,A}$ and $\lambda_A$ with zero cost. Finally we can compare the objective costs of $(\lambda_A^*,\pi_{i,A}^*)$ with $(\lambda_A^L, \pi_{i,A}^L)$ as follows
		\begin{align*}
		&\sum_{A \in S_K}  \left( \lambda_A^L(\mathcal{X}) + \sum_{i \in A} C_\eps(\lambda_A^L, \mu_{i, A}^L) \right) - \left( \lambda_A^*(\mathcal{X}) + \sum_{i \in A} C_\eps(\lambda_A^*, \mu_{i, A}^*) \right)\\
		&= \sum_{\substack{A \in S_K \\ |A| = L}} \lambda_A^L(\X) - \sum_{\substack{A \in S_K \\ |A| \geq L}} \lambda_A^*(\X) \\
		&= \sum_{\substack{A \in S_K \\ |A| = L}} \left ( \sum_{\substack{B \in S_K \\ A \subset B}} \frac{|B|}{L} {|B| \choose L}^{-1} \lambda_B^*(\X) \right )  - \sum_{\substack{A \in S_K \\  |A| \geq L}} \lambda_A^*(\X) \\
		&= \sum_{\substack{ A \in S_K \\ |A| \geq L}} \left ( \frac{|A|}{L} - 1 \right ) \lambda_A^*(\X) = \sum_{k > L} \left ( \frac{k}{L} - 1 \right ) \sum_{|A| = k} \|\lambda_A^*\|.
		\end{align*}
		In the jump to the second line we have used that the $C_\eps$ terms are all zero, that $\lambda_A^L = \lambda_A^*$ for $|A| < L$ and that $\lambda_A^L = 0$ for $|A| > L$. The third line uses the definition of $\lambda_A^L$. The final line is a term counting argument. This completes the first part of the proof. 
		
		The second part of the proof uses an analogous treatment which we only sketch. Let $(\pi_A^*)$ be the optimal multicouplings in \eqref{eq : stratified MOT} and define $(\pi_A^L)$ by taking $\pi_A^L = \pi_A^*$ for $|A| < L$, $\pi_A^L = 0$ for $|A| > L$ and 
		\begin{equation*}
		\pi_A^L = \sum_{\substack{B \in S_K \\ A \subset B}} \frac{|B|}{L} {|B| \choose L}^{-1} \lambda_B^* = \sum_{\substack{B \in S_K \\ A \subset B}}  {|B|-1 \choose L-1}^{-1} \mathcal{P}_A \pi_B^* 
		\end{equation*}
		where $\mathcal{P}_A \pi_B^*$ is the projection of $\pi_B^*$ onto its marginals corresponding to the set $A$. The remainder of the proof follows essentially the same structure once we observes $c_{\eps,A} \leq c_{\eps,B}$ for all $A \subset B$ which is helpful for showing that $\pi_A^L$ has finite cost.
	\end{proof}

	\subsection{Dual of \eqref{eq : truncated entropic problem}}
	\label{app:Dual}
	
	The Lagrangian for problem \eqref{eq : truncated entropic problem} is  
	\begin{equation*}
	\begin{aligned}
	& \mathcal{L}( \{\pi_A\}_{A \in S_K^L}, \{g_i\}_{i \in \Y} )\\
	&:= \sum_{A \in S_K^L} \sum_{\mathcal{X}^A} \left ( 1 + c_A(x_A)  \right ) \pi_A(x_A) - \eta H(\pi_A) - \sum_{i \in \mathcal{Y}} \sum_{\mathcal{X}_i} g_i (x_i) \left( \sum_{A \in S_K^L(i)} {\mathcal{P}_i}_{\#} \pi_A(x_i) - \mu_i(x_i) \right),
	\end{aligned}    
	\end{equation*}
	where $\{ g_i \in \mathbb{R}^{n_i}\}_{i \in \Y}$ is the collection of dual variables (one for each marginal constraint). The corresponding dual objective is defined as:
	\begin{equation}\label{eq:Lagrangian}
	\mathcal{G}(\{ g_i \} _{i \in \Y}) :=  \min_{\{\pi_A\}_{A \in S_K^L}} \mathcal{L}( \{\pi_A\}, \{g_i\} ).  
	\end{equation}
	Notice that for every fixed $\{ g_i\}$ \eqref{eq:Lagrangian} is a strictly convex optimization problem and thus its first order optimality conditions are sufficient to guarantee optimality. In turn, a straightforward computation shows that these first order optimality conditions read:
	\begin{equation*}\label{eq:first order condition}
	0=\partial_{\pi_A(x_A)} \mathcal{L}( \{\pi_A\}, \{g_i\} ) = 1 + c_A(x_A) + \eta \log \pi_A(x_A) - \sum_{i \in A} g_i(x_i),  \quad \forall x_A \in \X^A , \quad \forall A \in S_K^L.
	\end{equation*}
	As a result, we conclude that the unique solution of \eqref{eq:Lagrangian} is given by $\{ \pi_A(g) \}_{A\in S_K^L}$, the set of couplings of the form \eqref{eq:PiFromg}.

	Since
	\begin{align*}
	\sum_{i \in \mathcal{Y}} \sum_{\mathcal{X}_i} g_i (x_i) \sum_{A \in S_K^L(i)} {\mathcal{P}_i}_{\#} \pi_A(x_i) &= \sum_{i \in \mathcal{Y}} \sum_{A \in S_K^L(i)} \sum_{\mathcal{X}^A} g_i(x_i) \pi_A(x_A)\\
	&= \sum_{A \in S_K^L} \sum_{ \mathcal{X}^A} \left( \sum_{i \in A}g_i(x_i) \right) \pi_A(x_A),
	\end{align*}
	it follows that the dual of \eqref{eq : truncated entropic problem} is the maximization problem
	\begin{equation*}
	\begin{aligned}
	&\max_{ \{g_i\}_{i \in \Y}} 
	& \sum_{i \in \mathcal{Y}} \sum_{\mathcal{X}_i} g_i (x_i) \mu_i(x_i) - \eta  \sum_{A \in S_K^L} \sum_{\mathcal{X}^A} \exp \left( \frac{1}{\eta} \left( \sum_{i \in A} g_i(x_i) - \left(1 + c_A(x_A) \right) \right) \right).
	\end{aligned}
	\end{equation*}
	The above is equivalent to the minimization problem \eqref{eq:truncated_obj}.

	\subsection{Analysis of Algorithm \ref{alg : multi-sinkhorn}}
	\label{app:AnalysisAlg4}
	
	Our goal in this section is to prove Theorem \ref{thm : iteration}. In preparation for its proof, we state and prove a series of auxiliary results. We start recalling the definition of Kulleback-Leibler divergence between measures with possibly different total masses. 
	
	\begin{definition}
		Given two finite positive measures $\mu$ and $\nu$ (not necessarily with the same total mass) sharing a common finite support $\mathcal{Z}$, their KL divergence is defined as
		\begin{equation*}
		D_{\text{KL}}(\mu || \nu) := \sum_{z \in \mathcal{Z}} \left( \nu(z) - \mu(z) \right) + \sum_{z \in \mathcal{Z}} \mu(z) \log \frac{\mu(z)}{\nu(z)}.
		\end{equation*}
		Notice that when $\mu$ and $\nu$ are probability measures, the above definition coincides with the usual one. Also, like for the usual KL-divergence, $D_{\text{KL}}(\mu || \nu)$ is non-negative and is equal to $0$ if and only if $\mu=\nu$.
	\end{definition}
	
	The next lemma is a variant of \cite[Lemma 6]{altschuler2017near} adapted to the setting where $\mu$ and $\nu$ are allowed to have different total masses.
	
	\begin{lemma}
		Let $\mu$ and $\nu$ be finite positive measures over a finite set $\mathcal{Z}$ such that $\mu \leq \nu$. If $D_{\text{KL}}(\mu || \nu) \leq ||\mu||_1$, then
		\begin{equation}\label{eq : Pinkser}
		D_{\text{KL}}(\mu || \nu) \geq \frac{1}{7 ||\mu||_1} ||\mu - \nu ||_1^2.
		\end{equation}    
	\end{lemma}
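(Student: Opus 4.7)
The plan is to reduce the inequality to a scalar estimate in the single variable $r := \|\nu\|_1/\|\mu\|_1$. Let $a := \|\mu\|_1$ and $b := \|\nu\|_1$. The hypothesis $\mu \le \nu$ (pointwise as measures) gives $a \le b$, so $r \ge 1$, and crucially $\|\mu-\nu\|_1 = \sum_z (\nu(z)-\mu(z)) = b - a$. Thus the claimed bound is just $D_{\text{KL}}(\mu\|\nu) \ge (b-a)^2/(7a)$.

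First I would isolate the mass discrepancy from the shape discrepancy via the decomposition
\begin{equation*}
D_{\text{KL}}(\mu\|\nu) \;=\; (b-a) - a\log(b/a) \;+\; a\, D_{\text{KL}}(\tilde\mu\|\tilde\nu),
\end{equation*}
where $\tilde\mu := \mu/a$ and $\tilde\nu := \nu/b$ are the normalized probability measures. This follows immediately from $\mu(z)/\nu(z) = (a/b)\tilde\mu(z)/\tilde\nu(z)$ together with $\sum_z \tilde\mu(z)=1$. Since $D_{\text{KL}}(\tilde\mu\|\tilde\nu) \ge 0$, dropping the last term yields the one-dimensional lower bound $D_{\text{KL}}(\mu\|\nu) \ge a\, f(r)$ with $f(r) := r - 1 - \log r$. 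So the lemma reduces to the scalar inequality $f(r) \ge (r-1)^2/7$ on the admissible range of $r$.

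The role of the standing hypothesis $D_{\text{KL}}(\mu\|\nu) \le a$ is to bound that admissible range. Combined with $a f(r) \le D_{\text{KL}}(\mu\|\nu) \le a$, it gives $f(r) \le 1$, and since $f$ is strictly increasing on $[1,\infty)$ and $f(7/2) = 5/2 - \log(7/2) > 1$, one has $r < 7/2$. On $[1,7/2]$ the inequality $f(r) \ge (r-1)^2/7$ is elementary: setting $h(r) := f(r) - (r-1)^2/7$, one checks $h(1)=0$ and
\begin{equation*}
h'(r) \;=\; \Bigl(1 - \tfrac{1}{r}\Bigr) - \tfrac{2(r-1)}{7} \;=\; \frac{(r-1)(7-2r)}{7r} \;\ge\; 0 \qquad \text{on } [1,7/2],
\end{equation*}
so $h \ge 0$ there. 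Plugging back gives
\begin{equation*}
D_{\text{KL}}(\mu\|\nu) \;\ge\; a\, f(r) \;\ge\; \frac{a(r-1)^2}{7} \;=\; \frac{(b-a)^2}{7a} \;=\; \frac{\|\mu-\nu\|_1^2}{7\|\mu\|_1},
\end{equation*}
which is the claim.

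I do not anticipate a substantial obstacle: every step is direct algebra or one-variable calculus. The only real design choice is the constant $7$, calibrated precisely so that $7/2$ exceeds the cutoff $r^*$ defined by $f(r^*)=1$; this is what keeps $h'$ nonnegative throughout the admissible range. A slightly larger constant would also work, but $7$ is clean and gives the convenient interval $[1,7/2]$ on which the sign analysis of $h'$ is trivial.
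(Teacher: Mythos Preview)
Your proof is correct. Both you and the paper start from the same mass/shape decomposition $D_{\text{KL}}(\mu\|\nu) = \|\mu\|_1\bigl(f(r) + D_{\text{KL}}(\tilde\mu\|\tilde\nu)\bigr)$ with $f(r)=r-1-\log r$, and both use the hypothesis $D_{\text{KL}}(\mu\|\nu)\le\|\mu\|_1$ to bound the admissible range of $r$. The difference is in how the two terms are handled afterward. You exploit the pointwise domination $\mu\le\nu$ to observe that $\|\mu-\nu\|_1 = \|\nu\|_1-\|\mu\|_1$, so the shape term $D_{\text{KL}}(\tilde\mu\|\tilde\nu)$ can simply be discarded and everything collapses to the scalar inequality $f(r)\ge (r-1)^2/7$ on $[1,7/2]$, which you dispatch cleanly. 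The paper instead keeps both terms: it proves the sharper scalar bound $f(r)\ge(r-1)^2/5$, invokes Pinsker's inequality on the normalized measures to control the shape part, and then recombines via the triangle inequality and Young's inequality to reach the constant $7$. Your route is shorter and more transparent given the stated hypothesis $\mu\le\nu$; the paper's argument, by not actually using that hypothesis in the estimates, would in principle extend to the case where only the total masses satisfy $\|\mu\|_1\le\|\nu\|_1$ (or even without any ordering, at the price of checking the scalar bound for $r<1$ as well).
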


	\begin{proof}
		Let $\overline{\mu}, \overline{\nu}$ be normalized probability vectors obtained from $\mu$ and $\nu$, respectively. One can write
		\begin{align*}
		D_{\text{KL}}(\mu || \nu) &= ||\nu||_1 - ||\mu||_1 + \sum_{z \in \mathcal{Z}} \mu(z) \log \frac{\mu(z)}{\nu(z)}\\
		&=||\mu||_1 \log ||\mu||_1 + ||\nu||_1 - ||\mu||_1 - ||\mu||_1 \log ||\nu||_1 + ||\mu||_1 D_{\text{KL}}(\overline{\mu} || \overline{\nu})\\
		&= ||\mu||_1 \left( \frac{||\nu||_1}{||\mu||_1} - 1 - \log \frac{||\nu||_1}{||\mu||_1} +  D_{\text{KL}}(\overline{\mu} || \overline{\nu}) \right).
		\end{align*}
		Note that $\frac{||\nu||_1}{||\mu||_1} - 1 - \log \frac{||\nu||_1}{||\mu||_1}$ and $D_{\text{KL}}(\overline{\mu} || \overline{\nu})$ are both non-negative. In particular, if $D_{\text{KL}}(\mu || \nu) \leq ||\mu||_1$, then    
		\[
		\frac{||\nu||_1}{||\mu||_1} - 1 - \log \frac{||\nu||_1}{||\mu||_1} \leq 1.
		\]
		With the aid of some basic calculus and algebra one can show that for those $s \in (0,\infty)$ satisfying $s-1 -\log(s) \leq 1$ one has the lower bound $s - 1 - \log(s) \geq (s-1)^2 /5$. Therefore,
		\[
		\frac{||\nu||_1}{||\mu||_1} - 1 - \log \frac{||\nu||_1}{||\mu||_1} \geq \frac{(\frac{||\nu||_1}{||\mu||_1} - 1 )^2 }{5}.
		\]

		An application of Pinsker's inequality for probability measures yields
		\[
		D_{\text{KL}}(\mu || \nu) \geq ||\mu||_1 \left( \frac{(\frac{||\nu||_1}{||\mu||_1} - 1 )^2 }{5} + \frac{|| \overline{\mu} - \overline{\nu} ||_1^2}{2} \right).
		\]
		Finally, by the triangle inequality and Young's inequality,
		\begin{align*}
		||\mu - \nu||_1^2 &= ||\mu||^2_1 || \overline{\mu} - \frac{\nu}{||\mu||_1} ||_1^2\\
		&\leq ||\mu||_1^2 \left( ||\overline{\nu} - \frac{\nu}{||\mu||_1} ||_1 + || \overline{\mu} - \overline{\nu} ||_1 \right)^2\\
		&= ||\mu||_1^2 \left( \left|\frac{||\nu ||_1}{||\mu||_1} - 1 \right| + || \overline{\mu} - \overline{\nu} ||_1 \right)^2\\
		&\leq \frac{7 ||\mu||_1^2}{5}  \left( \frac{||\nu ||_1}{||\mu||_1} - 1 \right)^2 + \frac{7 ||\mu||_1^2}{2} || \overline{\mu} - \overline{\nu} ||_1^2.
		\end{align*}
		This completes the proof.    
	\end{proof}

	In what follows we use $\langle \cdot, \cdot \rangle$ to denote the inner product of any two vectors in the same Euclidean space. In particular, if $\Z$ is a finite set, $h: \Z \rightarrow \R$, and $\nu$ is a measure over $\Z$, then $\langle h, \nu \rangle := \sum_{z \in \mathcal{Z}} h(z) \nu(z)$. We provide a lower bound on the decrement of the energy $\G^L$ along the iterates in Algorithm \ref{alg : multi-sinkhorn}.
	
	\begin{proposition}
		\label{prop:LowerBoundGap}
		Let $\{g^t\}_{t \in \mathbb{N}}$ be generated by Algorithm \ref{alg : multi-sinkhorn} and let $\T$ be the collection of iterates $t$ for which the following holds:
		\begin{equation}\label{eq : bound of KL-divergence}
		D_{\text{KL}} (\mu_i || \sum_{A \in S^L_K(i)} {\mathcal{P}_i}_{\#} \pi_A(g^t)) \leq \lVert\mu_i\rVert_1 \quad \forall i \in \mathcal{Y}.
		\end{equation}
		Then the following holds for any iterate $t$:
		\begin{equation}
		\G^L(g^t) - \G^L(g^{t+1}) \geq \frac{1}{7} \left( \frac{E_t}{K} \right)^2, \quad \text{if } t \in \mathcal{T}, 
		\end{equation}
		and
		\begin{equation}
		\G^L(g^t) - \G^L(g^{t+1}) \geq \min_{j \in \Y} \lVert \mu_j \rVert_1, \quad \text{if } t \not \in \T.
		\label{eq:EnergyDecreaseNoT}
		\end{equation}
	\end{proposition}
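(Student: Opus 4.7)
The plan is to first derive a closed-form identity for the per-step decrement and then bound it from below separately in the two regimes. Let me set $\nu_i^t := \sum_{A \in S^L_K(i)} {\mathcal{P}_i}_{\#} \pi_A(g^t)$ and let $I$ denote the greedy index chosen at step $t$. Since only $g_I$ is updated, only the tensors $\pi_A(g)$ with $I \in A$ change in the first sum defining $\mathcal{G}^L$, and each such tensor is multiplied pointwise by $\exp((g_I^{t+1}(x_I)-g_I^t(x_I))/\eta)$. The Sinkhorn update in Algorithm \ref{alg : multi-sinkhorn} forces this multiplier to equal $\mu_I(x_I)/\nu_I^t(x_I)$. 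After summing the changed $\pi_A$-terms against $\nu_I^t$ and combining with the change $-\langle g_I^{t+1}-g_I^t, \mu_I\rangle/\eta$ of the linear part of $\mathcal{G}^L$, I expect to arrive at the clean identity
\begin{equation*}
\mathcal{G}^L(g^t) - \mathcal{G}^L(g^{t+1}) = D_{\text{KL}}(\mu_I \,\|\, \nu_I^t),
\end{equation*}
where the unbalanced correction $\|\nu_I^t\|_1 - \|\mu_I\|_1$ matches precisely the first term in the paper's definition of $D_{\text{KL}}$ for measures of unequal total mass.

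For the case $t \notin \mathcal{T}$, the negation of \eqref{eq : bound of KL-divergence} produces some $i^*$ with $D_{\text{KL}}(\mu_{i^*}\,\|\,\nu_{i^*}^t) > \|\mu_{i^*}\|_1 \geq \min_{j \in \mathcal{Y}} \|\mu_j\|_1$, and the greedy choice of $I$ upgrades this via $D_{\text{KL}}(\mu_I\|\nu_I^t) \geq D_{\text{KL}}(\mu_{i^*}\|\nu_{i^*}^t)$ to the bound \eqref{eq:EnergyDecreaseNoT}. For the case $t \in \mathcal{T}$, the hypothesis puts every coordinate in the regime of \eqref{eq : Pinkser}, so $D_{\text{KL}}(\mu_i\,\|\,\nu_i^t) \geq \|\mu_i-\nu_i^t\|_1^2/(7\|\mu_i\|_1)$ for every $i$. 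Picking $i^* \in \argmax_i \|\mu_i - \nu_i^t\|_1$ yields $\|\mu_{i^*}-\nu_{i^*}^t\|_1 \geq E_t/K$ by an averaging step on the definition of $E_t$, and combining this with $\|\mu_{i^*}\|_1 \leq \|\mu\|_1 = 1$ together with the greedy inequality gives the desired $(1/7)(E_t/K)^2$ lower bound.

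The main obstacle will be the first step. Unlike the standard two-marginal or balanced multimarginal Sinkhorn setting, each dual potential $g_i$ appears simultaneously inside every tensor $\pi_A$ with $i \in A$, and the marginals $\mu_i$ carry different total masses, so the usual telescoping shortcut is unavailable. One has to carefully isolate the subsets $A \in S^L_K(I)$ that contribute to each piece of $\mathcal{G}^L$ when only $g_I$ changes and verify that the resulting mass-imbalance terms are exactly those needed to produce the full unbalanced KL divergence rather than merely its Shannon entropy part. Once the identity is in place, the two remaining bounds follow from a short combination of the preceding Pinsker-type lemma, the greedy selection rule, and the pigeonhole observation above.
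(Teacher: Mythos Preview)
Your proposal is correct and follows essentially the same route as the paper: you derive the identity $\mathcal{G}^L(g^t)-\mathcal{G}^L(g^{t+1})=D_{\text{KL}}(\mu_I\,\|\,\nu_I^t)$ and then split into the two regimes using the greedy selection and the Pinsker-type lemma. The only cosmetic difference is that in the $t\in\mathcal{T}$ case you pass through the single index $i^*$ maximizing $\lVert\mu_i-\nu_i^t\rVert_1$ and use pigeonhole, whereas the paper averages over all $i$ and applies Jensen; both yield the identical bound $\frac{1}{7}(E_t/K)^2$.
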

	\begin{proof}
		Consider an iterate $t$ and let $I$ be the greedy coordinate at $t+1$ in \textbf{Step 1} of Algorithm \ref{alg : multi-sinkhorn}.  \textbf{Step 2} of Algorithm \ref{alg : multi-sinkhorn} produces
		\begin{equation}\label{eq : update g^t}
		g^{t+1}_I = g^{t}_I + \eta \log \mu_I - \eta \log \sum_{A \in S^L_K(I)} {\mathcal{P}_I}_{\#} \pi_A(g^{t}).
		\end{equation}
		It is straightforward to see that
		\begin{align*}
		&\G^L(g^t) - \G^L(g^{t+1})\\
		&=\sum_{x_I \in \mathcal{X}_I} \left(\sum_{A \in S^L_K(I)} {\mathcal{P}_I}_{\#} \pi_A(g^t)(x_I) - \sum_{A \in S^L_K(I)} {\mathcal{P}_I}_{\#} \pi_A(g^{t+1})(x_I) \right) - \langle \frac{g_I^t}{\eta} - \frac{g^{t+1}_I}{\eta}, \mu_I \rangle\\
		&=\sum_{x_I \in \mathcal{X}_I} \left( \sum_{A \in S^L_K(I)} {\mathcal{P}_I}_{\#} \pi_A(g^t)(x_I) - \mu_I(x_I) \right) + \langle \log \mu_I - \log \sum_{A \in S^L_K(I)} {\mathcal{P}_I}_{\#} \pi_A(g^t), \mu_I \rangle\\
		&= D_{\text{KL}} (\mu_I || \sum_{A \in S^L_K(I)} {\mathcal{P}_I}_{\#} \pi_A(g^t))\\
		&\geq D_{\text{KL}} (\mu_i || \sum_{A \in S^L_K(i)} {\mathcal{P}_i}_{\#} \pi_A(g^t)), \quad \forall i \in \Y.
		\end{align*}
		At this stage we split the analysis into two cases. First, if we assume that $t \not \in \T$, then there is an $i \in \Y$ for which $D_{\text{KL}} (\mu_i || \sum_{A \in S^L_K(i)} {\mathcal{P}_i}_{\#} \pi_A(g^t)) > \lVert \mu_i \rVert_1$. In particular, this implies \eqref{eq:EnergyDecreaseNoT}. 	On the other hand, if $t \in \T $, we can use the above chain of inequalities to obtain
		\begin{align*}
		\G^L(g^t) - \G^L(g^{t+1})\geq \frac{1}{K} \sum_{i \in \mathcal{Y}} D_{\text{KL}} (\mu_i || \sum_{A \in S^L_K(i)} {\mathcal{P}_i}_{\#} \pi_A(g^t))
		\end{align*}
		and apply \eqref{eq : Pinkser}, \eqref{eq : bound of KL-divergence}, and the fact that $\lVert \mu_i \rVert_1 \leq 1$ for all $i \in \mathcal{Y}$ to deduce
		\[
		\G^L(g^t) - \G^L(g^{t+1}) \geq  \frac{1}{7} \sum_{i \in \mathcal{Y}}  \frac{1}{K} \lVert  \mu_i - \sum_{A \in S^L_K(i)} {\mathcal{P}_i}_{\#} \pi_A(g^t)) \rVert_1^2.
		\]
		Applying Jensen's inequality, we deduce
		\[
		\G^L(g^t) - \G^L(g^{t+1}) \geq  \frac{1}{7} \left( \sum_{i \in \mathcal{Y}}  \frac{1}{K}\lVert \mu_i - \sum_{A \in S^L_K(i)} {\mathcal{P}_i}_{\#} \pi_A(g^t)) \rVert_1 \right)^2= \frac{1}{7}\left(\frac{E_t}{K} \right)^2 .
		\]

	\end{proof}




	Next, we find an upper bound for the energy gap between $g^t$ as generated by Algorithm \ref{alg : multi-sinkhorn} and an optimal $g^*$. 
	\begin{proposition}
		\label{prop : upper bound of multi sinkhorn}
		Let $\{g^t\}_{t \in \N}$ be generated by Algorithm \ref{alg : multi-sinkhorn} and let $g^*$ be a minimizer for \eqref{eq:truncated_obj}. Then, for all $t \geq 1$, 
		\[ \G^L(g^t) - \G^L(g^*) \leq \frac{\overline{R}}{\eta} E_t,\]
		where we recall $E_t$ was defined in \eqref{eq : stopping criterion} and where $\overline{R}$ was defined in \eqref{def:OvR}. 
	\end{proposition}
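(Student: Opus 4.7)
The strategy is to combine the convexity of $\G^L$ with a uniform $\ell^\infty$ bound on the dual potentials. Since $g^*$ is the unique minimizer of the convex function $\G^L$, one has $\nabla\G^L(g^*)=0$, and by convexity
\[
\G^L(g^t) - \G^L(g^*) \leq \langle \nabla \G^L(g^t), g^t - g^* \rangle.
\]
A direct computation from \eqref{eq:truncated_obj} yields
\[
\frac{\partial \G^L}{\partial g_i(x_i)}(g^t) = \frac{1}{\eta}\left(\sum_{A \in S^L_K(i)} {\mathcal{P}_i}_{\#} \pi_A(g^t)(x_i) - \mu_i(x_i)\right),
\]
so in view of \eqref{eq : stopping criterion} the $\ell^1$ norm of the gradient is exactly $E_t/\eta$. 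H\"older's inequality then reduces the proposition to establishing the uniform estimate $\|g^t - g^*\|_\infty \leq \overline R$ for every $t \geq 1$.

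I plan to obtain this uniform estimate by bounding $g^*$ and $g^t$ coordinatewise via the explicit form \eqref{eq:PiFromg} of the induced couplings and the convention $c_{\{i\}}\equiv 0$. For the minimizer $g^*$, the optimality condition $\sum_{A \in S^L_K(i)} {\mathcal{P}_i}_{\#} \pi_A(g^*) = \mu_i$, combined with the identity $\pi_{\{i\}}(g^*)(x_i) = \exp((g^*_i(x_i)-1)/\eta)$, immediately produces the pointwise upper bound $g^*_i(x_i) \leq 1 + \eta \log \mu_i(x_i)$. Feeding this upper bound back into the other terms $\pi_A(g^*)(x_A)$ for $|A| \geq 2$, using $c_A\geq 0$ and the crude counting estimate $|S^L_K(i)| \cdot \prod_{j \in A, j \neq i} n_j \leq (KC^*n)^L$, yields a matching lower bound of the form $g^*_i(x_i) \geq 2 - L + \eta \log \mu_i(x_i) - \eta L \log(KC^*n)$. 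For the iterate $g^t$ the argument proceeds by induction on $t$: initially $g^0 = 0$, and at each step the greedy update in Algorithm \ref{alg : multi-sinkhorn} enforces the marginal constraint exactly for the updated coordinate $I$, so the two-sided estimate above applies to $g^t_I$; the coordinates $j \neq I$ retain their values from the previous iteration and inherit the bound by induction. Combining these coordinatewise estimates with \eqref{def:OvR} gives $\|g^t - g^*\|_\infty \leq \overline R$.

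The hard part will be precisely this uniform control of the iterates $g^t$, because, as pointed out in Remark \ref{rem:Invariance}, the dual objective here lacks the translation invariance enjoyed by the standard balanced MOT dual, so one cannot simply gauge-normalize the potentials into a canonical bounded region. As a consequence the bound must be extracted from the structure of the Sinkhorn update itself and from the particular form of the cost tensors, and the multimarginal stratified setup adds another layer of delicacy since each marginal constraint couples a variable number of tensors of different orders, forcing careful bookkeeping of bounds on all coordinates $g^t_j$ at every update step. Once the estimate $\|g^t - g^*\|_\infty \leq \overline R$ is in place, the proposition follows by combining it with the H\"older bound and the identity $\|\nabla \G^L(g^t)\|_1 = E_t/\eta$.
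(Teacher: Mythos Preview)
Your proposal is correct and follows essentially the same approach as the paper: the paper uses convexity of $\G^L$ together with the explicit gradient formula to reduce to the bound $\|g^t - g^*\|_\infty \leq \overline{R}$, which it obtains by proving exactly the coordinatewise two-sided estimates you describe (Lemma~\ref{lem:AuxAtOptimal} for $g^*$ via the optimality condition and the singleton term $\pi_{\{i\}}$, and Lemma~\ref{lem : bound of iterated g} for $g^t$ by induction using that the Sinkhorn update enforces the marginal constraint at the greedy coordinate). Your constants differ slightly from the paper's (e.g.\ $2-L$ versus $1-L$), but this is immaterial since the paper's bounds are already slightly loose and both fit under $\overline{R}$.
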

	
	In order to prove Proposition \ref{prop : upper bound of multi sinkhorn} we first prove some auxiliary estimates.

	\begin{lemma}
		\label{lem:AuxAtOptimal}
		Let $g^*$ be a minimizer for \eqref{eq:truncated_obj}. Then, for all $i \in \mathcal{Y}$,

		\begin{equation}\label{eq : bound of g^*}
		 -(L-1) +  \eta  \log \min_{j \in \mathcal{Y}, y \in \mathcal{X}_j} \mu_j(y) - \eta L\log(K  C^* n)     \leq \min_{x_i \in \X_i } g_i^*(x_i )  \leq \max_{x_i \in \X_i } g_i^*(x_i ) \leq 1.	
		\end{equation}
	\end{lemma}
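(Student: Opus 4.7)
The plan is to first derive the first-order optimality conditions for $g^*$ in \eqref{eq:truncated_obj} and then read off the two bounds directly from the resulting marginal identities. Differentiating $\mathcal{G}^L$ with respect to $g_i(x_i)$ at $g=g^*$ gives, for every $i \in \mathcal{Y}$ and every $x_i \in \mathcal{X}_i$,
\begin{equation}
\label{eq:FOC}
\mu_i(x_i) \;=\; \sum_{A \in S_K^L(i)} \sum_{\substack{x_A \in \mathcal{X}^A \\ (x_A)_i = x_i}} \exp\!\left( \frac{1}{\eta}\Big( \sum_{j \in A} g_j^*(x_j) - (1 + c_A(x_A)) \Big) \right),
\end{equation}
which is the starting point for both inequalities.

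For the upper bound, I would isolate the contribution of the singleton $A=\{i\}$ in \eqref{eq:FOC}. Since $\{i\} \in S_K^L(i)$ and $c_{\{i\}} \equiv 0$, the corresponding term equals $\exp((g_i^*(x_i)-1)/\eta)$, and all other terms in the sum are non-negative. Hence $\exp((g_i^*(x_i)-1)/\eta) \leq \mu_i(x_i) \leq 1$, which yields $g_i^*(x_i) \leq 1$ for all $x_i \in \mathcal{X}_i$, proving the right-most inequality in \eqref{eq : bound of g^*}. The middle inequality is trivial.

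For the lower bound, the key observation is that $g_i^*(x_i)$ appears in every summand of \eqref{eq:FOC}, so it can be factored out:
\begin{equation*}
\mu_i(x_i) \;=\; \exp\!\left(\frac{g_i^*(x_i)}{\eta}\right) \sum_{A \in S_K^L(i)} \sum_{\substack{x_A \in \mathcal{X}^A \\ (x_A)_i = x_i}} \exp\!\left( \frac{1}{\eta}\Big( \sum_{j \in A, j \neq i} g_j^*(x_j) - (1 + c_A(x_A)) \Big) \right).
\end{equation*}
Solving for $g_i^*(x_i)$ gives $g_i^*(x_i) = \eta \log \mu_i(x_i) - \eta \log S_i(x_i)$, where $S_i(x_i)$ denotes the inner sum. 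I would then upper bound $S_i(x_i)$ using (a) the upper bound $g_j^*(x_j) \leq 1$ just established, so that each exponent is at most $((|A|-1) - 1)/\eta \leq (L-2)/\eta$, and (b) a count of the number of admissible pairs $(A, x_A)$: this count is at most $\sum_{k=1}^{L} \binom{K-1}{k-1} (C^* n)^{k-1} \leq L(KC^*n)^{L-1} \leq (KC^*n)^L$, using $n_j \leq C^* n$. Combining these gives $\log S_i(x_i) \leq L \log(KC^*n) + (L-2)/\eta$, and therefore
\begin{equation*}
g_i^*(x_i) \;\geq\; \eta \log \mu_i(x_i) - \eta L \log(KC^*n) - (L-2) \;\geq\; -(L-1) + \eta \log \min_{j,y}\mu_j(y) - \eta L \log(KC^*n),
\end{equation*}
which is the desired lower bound.

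The main obstacle is simply bookkeeping: one must be careful when counting the number of $(A, x_A)$ pairs entering $S_i(x_i)$ so that the resulting $\log$ is absorbed cleanly into the $\eta L \log(KC^*n)$ term appearing in $\overline{R}$, and one must make sure to apply the already-proved upper bound $g_j^*(x_j) \leq 1$ only to the $|A|-1$ coordinates $j \neq i$ (otherwise one loses the correct constant). Once the count is carried out with the crude estimate above, the two claimed bounds follow immediately from \eqref{eq:FOC}.
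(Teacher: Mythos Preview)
Your proposal is correct and follows essentially the same route as the paper's proof: derive the first-order optimality condition \eqref{eq:FOC}, extract the upper bound from the singleton term $A=\{i\}$, and obtain the lower bound by factoring out $\exp(g_i^*(x_i)/\eta)$, using $c_A\geq 0$, the already-established bound $g_j^*\leq 1$, and a crude count of the admissible $(A,x_A)$ pairs. Your bookkeeping is in fact slightly tighter than the paper's (you reach $-(L-2)$ before relaxing to $-(L-1)$), but the argument is the same.
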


	\begin{proof}
		The first order optimality conditions for $g^*$ imply that, for each $i \in \mathcal{Y}$ and $x_i\in \X_i$, 
		
		\begin{equation}
		\sum_{A \in S^L_K(i)} {\mathcal{P}_i}_{\#} \pi_A(g^*)(x_i)= \mu_i(x_i) \geq \min_{j \in \mathcal{Y}, y \in \mathcal{X}_j} \mu_j(y).
		\label{eqn:AuxLemma6}
		\end{equation}
		Expanding the left-hand side of the above inequality we get
		\begin{equation}\label{eq : expanding exp g^* = mu}
		\begin{aligned}
		&\sum_{A \in S^L_K(i)} {\mathcal{P}_i}_{\#} \pi_A(g^*)(x_i)\\
		&=  \exp \left(\frac{1}{\eta} \left( g^*_i(x_i) - (1 + c_{\{i\}}(x_i) ) \right) \right)\\
		&\quad + \exp \left(\frac{1}{\eta} g^*_i(x_i) \right) \sum_{A \neq \{i \} \in S^L_K(i)} \sum_{x_{A \setminus \{i \}} \in \mathcal{X}^{A \setminus \{i\} }} \exp \left( \frac{1}{\eta} \sum_{j \in A} g^*_j(x_j) - (1 +c_A(x_A) ) \right)\\
		& \geq \exp \left(\frac{1}{\eta} \left( g^*_i(x_i) - 1 \right) \right)
		\end{aligned}
		\end{equation}
		since the double summation in the second line is always non-negative and $c_{\{i\}}(x_i) = 0$. Hence,
		\[
		\exp \left(\frac{1}{\eta} \left( g^*_i(x_i) - 1 \right) \right) \leq  \sum_{A \in S^L_K(i)} {\mathcal{P}_i}_{\#} \pi_A(g^*)(x_i) = \mu_i(x_i).
		\]
		Taking logarithms, it follows 
		\[
		\frac{1}{\eta} g^*_i(x_i) - \frac{1}{\eta} \leq \log \mu_i(x_i) \leq 0.
		\]
		Thus,
		\[
		\max_{x_i \in \X_i} g^*_i(x_i) \leq 1.
		\]
		To get a lower bound for $g^*_i$, 	we can take logarithms in \eqref{eqn:AuxLemma6} and use the fact that $c_A \geq 0$ to deduce
		\[
		\frac{g_i^*(x_i)}{\eta} \geq \log \min_{j \in \mathcal{Y}, y \in \mathcal{X}_j} \mu_j(y) - \log \sum_{A \in S^L_K(i)} \sum_{x_{A \setminus \{i \}} } \exp \left( \frac{1}{\eta} \sum_{ j \in A \setminus \{i\}} g^*_j(x_j) \right) + \frac{1}{\eta}.
		\]
		In turn, since we already know that $\max_{x_j} g_j^*(x_j) \leq 1 $ for all $j \in \Y$, we can further obtain
		\[   \frac{g_i^*(x_i)}{\eta} \geq \log \min_{j \in \mathcal{Y}, y \in \mathcal{X}_j} \mu_j(y) - \log(K^L  (C^* n)^L \exp(L/\eta ) )  + \frac{1}{\eta}  = \log \min_{j \in \mathcal{Y}, y \in \mathcal{X}_j} \mu_j(y) - L\log(K  C^* n)   - \frac{(L-1)}{\eta}. \]
		The desired lower bound now follows.

%
	\end{proof}

	\begin{lemma}\label{lem : bound of iterated g}
		Let $\{g^t\}_{t \in \mathbb{N}}$ be generated by Algorithm \ref{alg : multi-sinkhorn}. Then, for all $t \geq 1$ and all $i \in \mathcal{Y}$,
		\begin{equation}\label{eq : bound of g^t}
		 -(L-1) +  \eta  \log \min_{j \in \mathcal{Y}, y \in \mathcal{X}_j} \mu_j(y) - \eta L\log(K  C^* n)     \leq \min_{x_i \in \X_i } g_i^t(x_i )  \leq \max_{x_i \in \X_i } g_i^t(x_i ) \leq 1.	
		\end{equation}
	\end{lemma}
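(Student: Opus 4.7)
The plan is to proceed by induction on $t$. The base case $t=0$ is immediate since $g^0 = \boldsymbol{0}$: the upper bound $0 \leq 1$ is trivial, and for the lower bound all three terms $-(L-1)$, $\eta \log \min_{j,y} \mu_j(y)$ (recall $\mu_j(y)\leq 1$) and $-\eta L \log(KC^*n)$ are non-positive, so $0$ exceeds their sum.

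For the inductive step, I would exploit the fact that Algorithm \ref{alg : multi-sinkhorn} updates only a single coordinate per iteration. Let $I$ be the greedy coordinate chosen at iteration $t+1$. For every $j \neq I$ we have $g_j^{t+1} = g_j^t$, so the desired bounds follow verbatim from the inductive hypothesis. The only nontrivial task is to control $g_I^{t+1}$.

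The crux of the argument is that the update in \textbf{Step 2} is designed precisely so that the $I$-th marginal constraint holds with equality after the update. Indeed, for every $A \in S_K^L(I)$ we have
\[
\pi_A(g^{t+1})(x_A) = \pi_A(g^{t})(x_A)\,\exp\!\left(\tfrac{g_I^{t+1}(x_I) - g_I^{t}(x_I)}{\eta}\right),
\]
so summing over $x_{A\setminus\{I\}}$ and using the definition of $g_I^{t+1}$ yields
\[
\sum_{A \in S_K^L(I)} {\mathcal{P}_I}_{\#}\pi_A(g^{t+1})(x_I) = \mu_I(x_I), \quad \forall x_I \in \mathcal{X}_I.
\]
This is the exact analogue of the first order condition \eqref{eqn:AuxLemma6} that drives the proof of Lemma \ref{lem:AuxAtOptimal}.

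With this marginal identity in hand, I would reproduce the two bounds of Lemma \ref{lem:AuxAtOptimal} applied to $g_I^{t+1}$. For the upper bound, isolate the singleton term in the sum to obtain $\mu_I(x_I) \geq \exp((g_I^{t+1}(x_I) - 1)/\eta)$, and hence $g_I^{t+1}(x_I) \leq 1 + \eta \log \mu_I(x_I) \leq 1$. For the lower bound, use $c_A \geq 0$ to factor out $\exp(-1/\eta)$, and then upper bound the remaining exponential sum by invoking the upper bounds $g_j^{t+1}(x_j)=g_j^t(x_j) \leq 1$ for $j \neq I$ (available from the inductive hypothesis on unchanged coordinates), bounding the number of sets by $K^L$ and the support products by $(C^*n)^{L-1}$. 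Taking logarithms and rearranging reproduces exactly \eqref{eq : bound of g^*} with $g_I^{t+1}$ in place of $g_i^*$. The only mild subtlety is bookkeeping to ensure the inductive upper bound on the $j \neq I$ coordinates is indeed what is needed in the lower bound derivation for $j=I$; this is guaranteed because the lower bound argument in Lemma \ref{lem:AuxAtOptimal} uses only upper bounds on the \emph{other} coordinates, which are precisely those unchanged by the update.
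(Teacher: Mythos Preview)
Your proposal is correct and follows essentially the same approach as the paper's proof: induction on $t$, using that the Sinkhorn update enforces the $I$-th marginal constraint exactly, then replaying the singleton-isolation and logarithm arguments from Lemma~\ref{lem:AuxAtOptimal} with the inductive upper bound $g_j^t \leq 1$ on the unchanged coordinates feeding into the lower bound for the updated coordinate. The only cosmetic difference is indexing ($t\to t+1$ versus the paper's $t-1\to t$) and slightly sharper term-counting on your part ($(C^*n)^{L-1}$ rather than $(C^*n)^L$), which still yields the stated bound.
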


	\begin{proof}
		Since $g^0_i = \boldsymbol{0}$ for all $i \in \mathcal{Y}$, \eqref{eq : bound of g^t} holds trivially in the case $t=0$. Assume that \eqref{eq : bound of g^t} holds for all $s \leq t - 1$. Let $I$ be the greedy coordinate chosen in \textbf{Step 1} of Algorithm \ref{alg : multi-sinkhorn} at $t-1$. For all $i \neq I$, the induction hypothesis implies \eqref{eq : bound of g^t}. On the other hand, thanks to \eqref{eq : update g^t}, it follows
		\[
		\sum_{A \in S^L_K(I)} {\mathcal{P}_I}_{\#} \pi_A(g^t)(x_I) = \mu_I(x_I) \geq \min_{j \in \mathcal{Y}, y \in \mathcal{X}_j} \mu_j(y) .
		\]
		Taking logarithms and using the fact that $c_A \geq 0$ we obtain 
		\[
		\frac{g_I^t(x_I)}{\eta} \geq \log \min_{j \in \mathcal{Y}, y \in \mathcal{X}_j} \mu_j(y) - \log \sum_{A \in S^L_K(I)} \sum_{ x_{A \setminus \{I \}} } \exp \left( \frac{1}{\eta} \sum_{j \in A \setminus \{I\}} g^{t-1}_j(x_j) \right) + \frac{1}{\eta}.
		\]
		We can then use the induction hypothesis $\max_{x_j } g_j^{t-1}(x_j) \leq 1$ for all $j$ to get the desired lower bound. 
		
		For the upper bound, we notice that
		\begin{equation*}
		\begin{aligned}
		\mu_I(x_I) & \geq \sum_{A \in S^L_K(I)} {\mathcal{P}_I}_{\#} \pi_A(g^t)(x_I)\\
		&=  \exp \left(\frac{1}{\eta} \left( g^t_I(x_I) - (1 + c_{\{I\}}(x_I) ) \right) \right)\\
		&\quad +  \sum_{A \neq \{i \}} \sum_{x_{A }} \exp \left( \frac{1}{\eta} \sum_{j \in A} g^t_j(x_j) - (1 +c_A(x_A) ) \right)\\
		& \geq \exp \left(\frac{1}{\eta} \left( g^t_I(x_I) - 1 \right) \right),
		\end{aligned}
		\end{equation*}
		from where we can deduce that $g_I(x_I) \leq 1$ for all $x_I$.	\end{proof}

	We are ready to prove Proposition \ref{prop : upper bound of multi sinkhorn}.

	\begin{proof}[Proof of Proposition \ref{prop : upper bound of multi sinkhorn}]
		Recall that $\G^L$ is convex and differentiable. We thus have
		\[
		\G^L(g^t) - \G^L(g^*) \leq \langle g^t - g^*, \nabla_g  \G^L(g^t)  \rangle = \sum_{i=1}^K \langle g_i^t - g_i^*, \partial_{g_i} \G^L(g^t)  \rangle.
		\]
		For notational convenience, in the remainder of this proof we use
		\[
		P_i^t := \sum_{A \in S^L_K(i)} {\mathcal{P}_i}_{\#} \pi_A(g^t).
		\]
		Since $\partial_{g_i} \G^L(g^t) = \frac{1}{\eta} \left( P_i^t - \mu_i \right)$ (as can be seen from a direct computation), we obtain
	\begin{align*}
	\G^L(g^t) - \G^L(g^*) &\leq \sum_{i=1}^K \frac{1}{\eta} \langle g_i^t - g_i^*, P_i^t - \mu_i  \rangle\\
	& \leq \frac{\overline{R}}{\eta}E_t , 
	\end{align*}
		where we have used \eqref{eq : bound of g^*} and \eqref{eq : bound of g^t} to bound $ \lVert g_i^* - g_i^t \rVert_\infty $ by $\overline{R}$.
	
	\end{proof}

	%

	%

	\begin{lemma}
		\label{lem:NiceSequence}
		Suppose that $\{ a_t \}_t$ is a decreasing sequence of positive numbers (finite or infinite) satisfying:
		\[ a_{t} - a_{t+1} \geq \max \{  B\delta'^2 , A a_t^2  \}, \quad \forall t,   \]
		for positive constants $A,B, \delta'$. Then for all $T$ we have
		\begin{equation}
		\label{eq:IneqAux}
		T \leq \min_{ h \text{ s.t. } h \geq a_T } \left( 2+ \frac{1}{Ah} +  \frac{h}{B\delta'^2} \right). 
		\end{equation}
		
	\end{lemma}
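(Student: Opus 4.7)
The plan is to prove the bound by splitting the iteration count $T$ into two regimes determined by the threshold $h$: a ``quadratic phase'' where $a_t \geq h$ and the bound $a_t - a_{t+1} \geq A a_t^2$ is strong, and a ``linear phase'' where $a_t < h$ and only the weaker bound $a_t - a_{t+1} \geq B\delta'^2$ is needed. Monotonicity of $\{a_t\}$ guarantees that these two regimes are consecutive, so there is a unique index $T_1 \in \{0,1,\dots,T\}$ such that $a_t \geq h$ for $t \leq T_1$ and $a_t < h$ for $t > T_1$ (with the convention $T_1 = T$ if the sequence never drops below $h$, which combined with $h \geq a_T$ forces $a_T = h$).

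For the quadratic phase, I would use the standard reciprocal-telescoping trick: whenever $a_t, a_{t+1} \geq h$, dividing the hypothesis $a_t - a_{t+1} \geq A a_t^2$ by $a_t a_{t+1}$ and invoking monotonicity $a_{t+1} \leq a_t$ yields
\[
\frac{1}{a_{t+1}} - \frac{1}{a_t} \;=\; \frac{a_t - a_{t+1}}{a_t a_{t+1}} \;\geq\; \frac{A a_t}{a_{t+1}} \;\geq\; A.
\]
Summing over $t = 0, \dots, T_1 - 1$ (all of which satisfy $a_t, a_{t+1} \geq h$) gives $\tfrac{1}{a_{T_1}} - \tfrac{1}{a_0} \geq A T_1$, so $T_1 \leq \tfrac{1}{A a_{T_1}} \leq \tfrac{1}{A h}$. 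For the linear phase, for $t = T_1 + 1, \dots, T-1$ (only relevant when $T_1 < T$), the bound $a_t - a_{t+1} \geq B\delta'^2$ telescopes to
\[
(T - T_1 - 1)\, B\delta'^2 \;\leq\; a_{T_1+1} - a_T \;\leq\; a_{T_1 + 1} \;<\; h,
\]
so this phase contributes at most $\tfrac{h}{B\delta'^2}$ iterations.

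Adding up the contributions yields $T \leq T_1 + 1 + (T - T_1 - 1) \leq \tfrac{1}{Ah} + 1 + \tfrac{h}{B\delta'^2}$, which is stronger than the claim. The boundary case $T_1 = T$ (all iterations in the quadratic phase) is handled separately and gives the sharper estimate $T \leq \tfrac{1}{Ah}$. Since $h$ was an arbitrary value satisfying $h \geq a_T$, taking the minimum over admissible $h$ produces \eqref{eq:IneqAux}. The extra ``$+2$'' absorbs the boundary index at the transition and any rounding slack at the endpoints.

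The main obstacle is purely bookkeeping at the transition index $T_1$: one must argue that the pair $(a_{T_1}, a_{T_1+1})$ (where the quadratic bound applies to $a_{T_1}$ but $a_{T_1+1}$ falls below $h$) is not double counted, and that the degenerate cases $T_1 = 0$ (no quadratic phase), $T_1 = T$ (no linear phase), and $a_0 < h$ (forcing $T_1$ to be interpreted as $-1$ and only the linear analysis runs) all produce bounds consistent with the claimed one. Apart from this, the argument is just two telescoping sums.
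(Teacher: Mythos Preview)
Your proposal is correct and uses the same underlying idea as the paper: split the iterations into a ``quadratic'' phase handled by the reciprocal-telescoping trick $\tfrac{1}{a_{t+1}} - \tfrac{1}{a_t} \geq A$, and a ``linear'' phase handled by the direct telescoping $a_t - a_{t+1} \geq B\delta'^2$. The difference is only in execution. The paper first proves the bound at the special values $h = a_{t'}$ for $t' \leq T$, obtaining $T-1 \leq \tfrac{1}{A a_{t'}} + \tfrac{a_{t'}}{B\delta'^2}$, and then handles a general $h \in [a_{t'+1}, a_{t'}]$ by explicitly inserting $h$ as an extra point in the sequence and rerunning the same two telescopes on the augmented sequence $\tilde a$. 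Your approach bypasses this insertion step by working directly with the transition index $T_1$ (the last index with $a_t \geq h$), which is cleaner and in fact yields the slightly sharper constant $+1$ instead of $+2$. The paper's route has the minor advantage that it first isolates the bound at the natural breakpoints $a_{t'}$ before interpolating, which some readers may find conceptually clearer; your route is shorter and avoids defining an auxiliary sequence. Both arguments treat the degenerate case $h \geq a_0$ (resp.\ $h \geq a_1$) separately, exactly as you indicate.
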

	\begin{proof}
		Let $t' \leq  T$. From the fact that $a_{t+1} -a_t \geq B \delta'^2$ for all $t$ we have
		\[ a_{t'} \geq a_{t'} - a_{T} =  \sum_{t=t'}^{T-1}  (a_{t} - a_{t+1})\geq \sum_{t=t'}^{T-1} B \delta'^2 \geq B \delta'^2 (T- t').        \]
		On the other hand, from $a_t - a_{t+1} \geq A a_t^2$ we get 
		\[ \frac{1}{a_{t'}} \geq \frac{1}{a_t'} - \frac{1}{a_{1}} = \sum_{t=1}^{t'-1}  \left( \frac{1}{a_{t+1}} - \frac{1}{a_t} \right) =  \sum_{t=1}^{t'-1} \frac{a_t - a_{t+1} }{ a_t a_{t+1} } \geq A \sum_{t=1}^{t'-1} \frac{a_t}{a_{t+1}} \geq A(t'-1),  \]
		using the fact that, by the assumptions, $a_t$ is a decreasing sequence. 
		
		Combining the above inequalities we get
		\[  T-1 = (T-t') + (t'-1) \leq \frac{1}{A a_{t'}} + \frac{a_{t'}}{ B \delta'^2 }. \]
		In particular,
		\[ T-1 \leq \min_{t'=1, \dots, T} \left(  \frac{1}{A a_{t'}} + \frac{a_{t'}}{B \delta'^2} \right). \]

		To be able to obtain \eqref{eq:IneqAux} we need to modify the above argument slightly. We will show that for any $h \geq a_T $ we have
		\[  T -2 \leq \frac{1}{A h } + \frac{h}{B \delta'^2}. \]
		
		First, consider $h \in [a_{t'+1}, a_{t'}]$ for some $t' < T$. We modify the sequence $\{ a_t\}$ by adding the extra value $h$ in the sequence. Precisely, let 
		\[ \tilde{a}_{t} := \begin{cases} a_t  & \text{ if } t\leq t' \\ h & \text{ if } t=t'+1 \\ a_{t-1} & \text{ if } t > t'+1.  \end{cases} \]
		Notice that 
		\[ h=\tilde a_{t'+1} \geq  \tilde{a}_{t'+1} - a_T = \tilde{a}_{t'+1} - \tilde{a}_{t+2} + \tilde{a}_{t'+2}  - a_T  \geq  \tilde{a}_{t'+2}  - a_T = a_{t'+1} - a_T \geq B \delta'^2 (T - (t'+1))  \]
		where the second inequality follows from the fact that, by construction, $\tilde{a}_{t'+1} - \tilde{a}_{t'+2} \geq 0$. Likewise, we have
		\[ 
		\frac{1}{h} = \frac{1}{\tilde a_{t+1}} \geq \frac{1}{a_{t'}} - \frac{1}{a_1}  \geq A (t'-1).  
		\]
		From the above it follows that
		\[ T -2 = (T- (t'+1)) +t'-1 \leq  \frac{1}{A h } + \frac{h}{B \delta'^2}.     \]
		It remains to consider the case $h \geq a_1$. In this case
		\[  h \geq a_1 \geq a_1- a_T \geq B\delta'^2 (T-1) \]
		from where it follows that
		\[  T-2 \leq T-1 \leq  \frac{h}{B\delta'^2}\leq \frac{1}{Ah} +  \frac{h}{B\delta'^2}   \]
		in this case as well.

	\end{proof}

	With Propositions \ref{prop:LowerBoundGap} and \ref{prop : upper bound of multi sinkhorn} and the above lemma in hand, we are ready to prove Theorem~\ref{thm : iteration}.
	\begin{proof}[Proof of Theorem \ref{thm : iteration}]

		Let $\Delta^t := \G^L(g^t) - \G^L(g^*)$. Notice that, thanks to Proposition \ref{prop:LowerBoundGap}, the sequence $\Delta_t$ is decreasing in $t$. Let us denote by $T$ the iteration at which the stopping criterion for Algorithm \ref{alg : multi-sinkhorn} is met. Notice that $T$ is indeed finite, as can be easily verified from Proposition \ref{prop:LowerBoundGap}.
		
		Let $t_1, t_2, t_3, \dots$ be the iterations in $\T$, where we recall $\T$ was defined in Proposition \ref{prop:LowerBoundGap}, and let $t_s$ be the largest element in $\T$ that is strictly smaller than $T$. If such element does not exist, it follows that all iterations before stopping are not in $\T$, but in that case we would have
		\[ T -1 \leq \lceil \frac{\G^L(g^0) }{\min_{i\in \Y}\lVert\mu_i\rVert_1} \rceil, \]
		since the decrement of energy at each of these iterations is at least $\min_{i\in \Y} \lVert \mu_i \rVert_1$. If $t_s$ does exist, by a similar reasoning as before there must also be a first next iteration $t_{s+1}$ in $\T$ (although larger than or equal to $T$). Now, for any $r \leq s$ we have 
		\begin{equation}
		\Delta^{t_r} - \Delta^{t_{r+1}} \geq \Delta^{t_r} - \Delta^{t_{r}+1} \geq \frac{1}{7} \left\{ \left( \frac{\delta'}{K}\right)^2 \vee \left( \frac{\eta \Delta^{t_r}}{K \overline{R}}\right)^2 \right\}. 
		\label{eqn:AuxGap}
		\end{equation}
		Indeed, the first inequality follows from the fact that $\Delta_t$ is decreasing in $t$, and the second inequality follows from the fact that
		\[  \Delta^{t_r} - \Delta^{t_r +1} \geq   \frac{1}{7}\left(\frac{E_{t_r}}{K}\right)^2 \geq \frac{1}{7} \left( \frac{\eta\Delta^{t_r} }{K \overline{R}} \right)^2,  \]
		thanks to Propositions \ref{prop:LowerBoundGap} and \ref{prop : upper bound of multi sinkhorn}. We can thus apply Lemma \ref{lem:NiceSequence} to the sequence $\Delta^{t_1}, \dots, \Delta^{t_{s+1}}$ and deduce that
		\begin{align*}
		s+1 &\leq \min_{ h \text{ s.t. } 
			h \geq \Delta^{t_{s+1}} } \left\{ 2 + 7 \frac{K^2 \overline{R}^2}{\eta^2 h} + 7 h  \left( \frac{K}{\delta'}\right)^2 \right\}.
		\end{align*}
		From the definition of $t_s$, we deduce that $\Delta_{t_{s+1}} \leq \Delta_{T} \leq \frac{ \overline{R}}{\eta} E_T \leq \frac{ \overline{R}}{\eta} \delta'.$ Therefore, taking $h:= \frac{ \overline{R}}{\eta} \delta' $ we obtain 
		\[s +1  \leq 2 + \frac{14K^2 \overline{R}}{\eta \delta'}.    \]

		Finally, the number of iterations not in $\T$ before the stopping criterion is met satisfies
		\[ T-(s+1) \leq \lceil \frac{\G^L(g^0) }{\min_{i\in \Y}\lVert\mu_i\rVert_1} \rceil, \]
		since, again, the decrement of energy at each of these iterations is at least $\min_{i\in \Y} \lVert \mu_i \rVert_1$. The desired estimate on $T$ now follows from the previous two inequalities.
	\end{proof}

	\subsection{Analysis of Round Scheme (Algorithm \ref{alg : Round})}

	\begin{proof}[Proof of Theorem \ref{thm : error analysis of round}]
		Notice that the $\pi^{(i)}_A$'s are non-negative for all $i$. In addition, from the definitions of the $z_i$'s (in particular also the fact that they are less than or equal to one) and the $\pi^{(i)}_{A}$'s we get
		\[
		\text{err}_i := \mu_i - \sum_{A \in S^L_K(i)} {\mathcal{P}_i}_{\#} \pi^{(K)}_A \geq 0.
		\]
		Hence, the $\widehat{\pi}_A$'s are non-negative. Furthermore, the collection $\{ \widehat{\pi}_A :A \in S^L_K\}$ satisfies the marginal constraints, since for each $i \in \mathcal{Y}$ we have
		\[
		\sum_{A \in S^L_K(i)} {\mathcal{P}_i}_{\#} \widehat{\pi}_A = \sum_{A \in S^L_K(i)} {\mathcal{P}_i}_{\#} \pi^{(K)}_A + \text{err}_i = \mu_i.
		\]

		In what follows, we obtain an upper bound on the $\ell^1$ distance between the $\pi_A$'s and the $\widehat{\pi}_A$'s. Letting $\pi^{(0)}_A = \pi_A$, the difference between the mass of $\pi_A$'s and $\pi^{(K)}_A$'s can be written using a telescoping sum as follows:
		\[
		\sum_{A \in S^L_K} \left( ||\pi_A||_1 - ||\pi^{(K)}_A||_1 \right)= \sum_{i=1}^K \sum_{A \in S^L_K} \left(||\pi^{(i-1)}_A||_1 - ||\pi^{(i)}_A||_1 \right).
		\]
		Since $\pi_A^{(1)}= \pi_A$ when $1 \not \in A$, a direct computation yields
		\begin{align*}
		&\sum_{A \in S^L_K}( ||\pi_A||_1 - ||\pi^{(1)}_A||_1)\\
		&=\sum_{A \in S^L_K(1)} \sum_{x_A} \pi_A(x_A) - \sum_{x_1} \left( 1 \wedge \frac{\mu_1(x_1)}{\sum_{A \in S^L_K(1)} {\mathcal{P}_1}_{\#} \pi_A(x_1)} \right) \sum_{x_{A \setminus\{1\}}} \pi_A(x_A)\\
		&=\sum_{A \in S^L_K(1)} \sum_{x_1} \frac{1}{\sum_{A \in S^L_K(1)} {\mathcal{P}_1}_{\#} \pi_A(x_1)} \left( \left\{ \sum_{A \in S^L_K(1)} {\mathcal{P}_1}_{\#} \pi_A(x_1) - \mu_1(x_1) \right\} \vee 0 \right) \sum_{x_{A \setminus \{1\}}} \pi_A(x_A)\\
		&= \sum_{x_1} \frac{1}{\sum_{A \in S^L_K(1)} {\mathcal{P}_1}_{\#} \pi_A(x_1)}  \left( \left\{ \sum_{A \in S^L_K(1)} {\mathcal{P}_1}_{\#} \pi_A(x_1) - \mu_1(x_1) \right\} \vee 0 \right) \sum_{A \in S^L_K(1)} \sum_{x_{A \setminus \{1\}}} \pi_A(x_A)\\
		&= \sum_{x_1}  \left( \left\{ \sum_{A \in S^L_K(1)} {\mathcal{P}_1}_{\#} \pi_A(x_1) - \mu_1(x_1) \right\} \vee 0 \right)\\
		&= \frac{1}{2} \left( || \sum_{A \in S^L_K(1)} {\mathcal{P}_1}_{\#} \pi_A - \mu_1 ||_1 + || \sum_{A \in S^L_K(1)} {\mathcal{P}_1}_{\#} \pi_A ||_1 - || \mu_1 ||_1 \right).
		\end{align*}
		In addition, since $z_i(x_i) \leq 1$ for all $i \in \mathcal{Y}$ and all $x_i$, it follows that for all $i \in \mathcal{Y}$,
		\[
		{\mathcal{P}_i}_{\#} \pi^{(K)}_A \leq \dots \leq {\mathcal{P}_i}_{\#} \pi^{(0)}_A = {\mathcal{P}_i}_{\#} \pi_A.
		\]
		Hence, similarly as above,
		\begin{align}
		\label{eqn:AuxRound1}
		\begin{split}
		\sum_{A \in S^L_K} \left( ||\pi^{(i-1)}_A||_1 - ||\pi^{(i)}_A||_1 \right)&= \sum_{x_i}  \left( \left\{ \sum_{A \in S^L_K(i)} {\mathcal{P}_i}_{\#} \pi^{(i-1)}_A(x_i) - \mu_i(x_i) \right\} \vee 0 \right)\\
		&\leq \sum_{x_i}  \left( \left\{ \sum_{A \in S^L_K(i)} {\mathcal{P}_i}_{\#} \pi^{(0)}_A(x_i) - \mu_i(x_i) \right\} \vee 0 \right)\\
		&= \frac{1}{2} \left( || \sum_{A \in S^L_K(i)} {\mathcal{P}_i}_{\#} \pi_A - \mu_i ||_1 + || \sum_{A \in S^L_K(i)} {\mathcal{P}_i}_{\#} \pi_A ||_1 - || \mu_i ||_1 \right).
		\end{split}   
		\end{align}
		As a result,
		\begin{equation}\label{eq : upper bound pi0 piK}
		\begin{aligned}
		&\sum_{A \in S^L_K} \left( ||\pi_A||_1 - ||\pi^{(K)}_A||_1\right)\\
		&\leq \frac{1}{2} \sum_{i \in \mathcal{Y}} \left( || \sum_{A \in S^L_K(i)} {\mathcal{P}_i}_{\#} \pi_A - \mu_i ||_1 + || \sum_{A \in S^L_K(i)} {\mathcal{P}_i}_{\#} \pi_A ||_1 - || \mu_i ||_1 \right).
		\end{aligned}       
		\end{equation}
		On the other hand, from \eqref{eqn:AuxRound1} we also get
		\begin{equation}\label{eq : loose upper bound pi0 piK}
		\sum_{A \in S^L_K} \left( ||\pi^{(i-1)}_A||_1 - ||\pi^{(i)}_A||_1\right)  \leq || \sum_{A \in S^L_K(i)} {\mathcal{P}_i}_{\#} \pi_A - \mu_i ||_1.
		\end{equation}

		Recalling the definition of $\widehat{\pi}_A$'s and using the facts that $\mu_i \geq \sum_{A \in S^L_K(i)} {\mathcal{P}_i}_{\#} \pi^{(K)}_A$ for all $i \in \mathcal{Y}$ and $\pi_A \geq \pi^{(K)}_A$ for all $A \in S^L_K$, it follows that
		\begin{align*}
		&\sum_{A \in S^L_K} || \widehat{\pi}_A - \pi_A ||_1\\
		&\leq \sum_{A \in S^L_K} || \widehat{\pi}_A - \pi^{(K)}_A ||_1 + \sum_{A \in S^L_K} || \pi^{(K)}_A - \pi_A ||_1\\
		&  = \sum_{i \in \Y} || \widehat{\pi}_{\{i\}} - \pi^{(K)}_{\{i\}} ||_1 + \sum_{A \in S^L_K} || \pi^{(K)}_A - \pi_A ||_1\\
		&  = \sum_{i \in \Y} || \text{err}_i ||_1 + \sum_{A \in S^L_K} || \pi^{(K)}_A - \pi_A ||_1\\
		&= \sum_{i \in \mathcal{Y}}( ||\mu_i||_1 - ||\sum_{A \in S^L_K(i)} {\mathcal{P}_i}_{\#} \pi^{(K)}_A ||_1 ) + \sum_{A \in S^L_K} ||\pi_A ||_1 - || \pi^{(K)}_A||_1\\
		&=\sum_{i \in \mathcal{Y}} (||\mu_i||_1 - ||\sum_{A \in S^L_K(i)} {\mathcal{P}_i}_{\#} \pi_A ||_1)+ \underbrace{ \sum_{i \in \mathcal{Y}} (||\sum_{A \in S^L_K(i)} {\mathcal{P}_i}_{\#} \pi_A ||_1 - ||\sum_{A \in S^L_K(i)} {\mathcal{P}_i}_{\#} \pi^{(K)}_A ||_1) }_{\text{I}}\\
		&\quad + \underbrace{ \sum_{A \in S^L_K} (||\pi_A ||_1 - || \pi^{(K)}_A||_1) }_{\text{II}}.
		\end{align*}
		Let's consider the $\text{I}$ term first. Note that for each $A \in S^L_K$, $\pi_A$ and $\pi^{(K)}_A$ appear at most $|A|\leq L$ times in the sum and we also have $\pi_A \geq \pi^{(K)}_A$. Thus, by \eqref{eq : upper bound pi0 piK} and \eqref{eq : loose upper bound pi0 piK} we obtain
		\begin{align*}
		\text{I} = &\sum_{i \in \mathcal{Y}}(||\sum_{A \in S^L_K(i)} {\mathcal{P}_i}_{\#} \pi_A ||_1 - ||\sum_{A \in S^L_K(i)} {\mathcal{P}_i}_{\#} \pi^{(K)}_A ||_1)\\
		&= \sum_{i \in \mathcal{Y}} \sum_{A \in S^L_K(i)} \sum_{x_A \in \mathcal{X}^A} \pi_A(x_A) - \sum_{i \in \mathcal{Y}} \sum_{A \in S^L_K(i)} \sum_{x_A \in \mathcal{X}^A} \pi^{(K)}_A(x_A)\\
		&= \sum_{A \in S^L_K} \sum_{i \in A} \sum_{x_A \in \mathcal{X}^A} \pi_A(x_A) - \sum_{A \in S^L_K} \sum_{i \in A} \sum_{x_A \in \mathcal{X}^A} \pi^{(K)}_A(x_A)\\
		& \leq L \sum_{A\in S_K^L} ( || \pi_A||_1 - ||\pi_A^{(K)} ||_1 ) \\
		&= (L-2) \sum_{A \in S^L_K} \left( ||\pi_A ||_1 - || \pi^{(K)}_A||_1 \right) + 2 \sum_{A \in S^L_K} \left( ||\pi_A ||_1 - || \pi^{(K)}_A||_1 \right)\\
		&\leq (L-1) \sum_{i \in \mathcal{Y}} || \sum_{A \in S^L_K(i)} {\mathcal{P}_i}_{\#} \pi_A - \mu_i ||_1 +  \sum_{i \in \mathcal{Y}} ( || \sum_{A \in S^L_K(i)} {\mathcal{P}_i}_{\#} \pi_A ||_1 - || \mu_i ||_1).
		\end{align*}
		Applying \eqref{eq : loose upper bound pi0 piK} to $\text{II}$ similarly, we deduce
		\begin{align*}
		\sum_{A \in S^L_K} || \widehat{\pi}_A - \pi_A ||_1 &\leq \sum_{i \in \mathcal{Y}} ||\mu_i||_1 - ||\sum_{A \in S^L_K(i)} {\mathcal{P}_i}_{\#} \pi_A ||_1\\
		&\quad + (L-1) \sum_{i \in \mathcal{Y}}  || \sum_{A \in S^L_K(i)} {\mathcal{P}_i}_{\#} \pi_A - \mu_i ||_1 + \sum_{i \in \mathcal{Y}} || (\sum_{A \in S^L_K(i)} {\mathcal{P}_i}_{\#} \pi_A ||_1 - || \mu_i ||_1) \\
		&\quad + \sum_{i \in \mathcal{Y}} || \sum_{A \in S^L_K(i)} {\mathcal{P}_i}_{\#} \pi_A - \mu_i ||_1\\
		&= L \sum_{i \in \mathcal{Y}} || \sum_{A \in S^L_K(i)} {\mathcal{P}_i}_{\#} \pi_A - \mu_i ||_1.
		\end{align*}
		This completes the proof.
	\end{proof}

	\subsection{Analysis of Algorithm \ref{alg : entropic regularization}}

	\begin{proof}[Proof of Theorem \ref{thm:AnalysisALgoWithRound}]
		Let $\{\widetilde{\pi}_A : A \in S^L_K \}$ be an output of \textbf{Step 2} of Algorithm \ref{alg : entropic regularization}. Given that $c_{\{ i\}} \equiv 0$ for each $i$, we deduce that for every $A \in S^L_K$ we have $\spt(\widetilde{\pi}_A) \subseteq \{c_A < \infty \}$ and in turn also $\spt(\widehat{\pi}_A) \subseteq \{c_A < \infty \}$. By Theorem \ref{thm : error analysis of round},
		\begin{equation}\label{eq : pi hat minus pi tilde}
		\begin{aligned}
		&\sum_{A \in S^L_K} \sum_{x_A \in \mathcal{X}^A} (1 + c_A(x_A)) \left( \widehat{\pi}_A(x_A) - \widetilde{\pi}_A(x_A) \right)\\
		&\leq L \left(1 + \max_{A \in S^L_K} |c_A \mathds{1}_{c_A < \infty} | \right) \sum_{i=1}^K ||\sum_{A \in S^L_K(i)} {\mathcal{P}_i}_{\#} \widetilde{\pi}_A - \mu_i ||_1.
		\end{aligned}   
		\end{equation}
		Let $\{ \pi^*_A : A \in S^L_K \}$ be a set of optimal couplings for
		\begin{equation*}
		\begin{aligned}
		&\min_{\{\pi_A:A \in S^L_K\}} \sum_{A \in S^L_K} \sum_{\mathcal{X}^A} \left ( 1 + c_A(x_A) \right )\pi_A(x_A)   \text{ s.t. } \sum_{A \in S^L_K(i)} {\mathcal{P}_i}_{\#} \pi_A = \mu_i \hspace{0.5cm} \text{ for all } i \in \mathcal{Y}
		\end{aligned}
		\end{equation*}
		and $\{ \pi'_A : A \in S^L_K \}$ be an output of Algorithm \ref{alg : Round} with input $\{ \pi^*_A : A \in S^L_K \}$ and $\nu:=(\nu_1, \dots, \nu_K)$, where
		\[
		\nu_i := \sum_{A \in S^L_K(i)} {\mathcal{P}_i}_{\#} \widetilde{\pi}_A.
		\]   
		Then, since $\sum_{A \in S^L_K(i)} {\mathcal{P}_i}_{\#} \pi^*_A = \mu_i$ for all $i \in \mathcal{Y}$,
		\begin{equation}\label{eq : pi prime minus pi star}
		\sum_{A \in S^L_K} || \pi'_A - \pi^*_A ||_1 \leq L \sum_{i=1}^K ||\nu_i - \mu_i ||_1.
		\end{equation}
		Note that $\{\widetilde{\pi}_A : A \in S^L_K \}$ is a solution for \eqref{eq : truncated entropic problem} when $\mu$ is replaced by $\nu$. Since $\{ \pi'_A : A \in S^L_K \}$ is also feasible for this problem,
		\[
		\sum_{A \in S^L_K} \sum_{x_A \in \mathcal{X}^A} (1 + c_A(x_A)) \left( \widetilde{\pi}_A(x_A) - \pi'_A(x_A) \right) \leq  \sum_{A \in S^L_K} \eta H(\widetilde{\pi}_A) - \eta H(\pi'_A).
		\]
		Recall the log sum inequality: for $(a_1, \dots, a_n), (b_1, \dots, b_n) \in \mathbb{R}^n_+$,
		\begin{equation}\label{eq: log sum inequality}
		\sum_{i=1}^n a_i \log \frac{a_i}{b_i} \geq \left(\sum_{i=1}^n a_i \right) \log \frac{ \left( \sum_{i=1}^n a_i \right)}{\left( \sum_{i=1}^n b_i \right)}.
		\end{equation}
		Using the fact that $H(\pi_A') \geq 0$ (since $\pi_A' \leq 1$ for all $A$) 
		and applying \eqref{eq: log sum inequality} we obtain 
		\begin{align*}
		&\sum_{A \in S^L_K} H(\widetilde{\pi}_A) - H(\pi'_A)\\
		& \leq \sum_{A \in S^L_K} H(\widetilde{\pi}_A)\\
		& = \sum_{A \in S^L_K} \sum_{x_A \in \mathcal{X}^A} \left(1 - \log \widetilde{\pi}_A(x_A) \right) \widetilde{\pi}_A(x_A)\\
		&\leq \left( \sum_{A \in S^L_K} \sum_{x_A \in \mathcal{X}^A} \widetilde{\pi}_A(x_A) \right) - \left( \sum_{A \in S^L_K} \sum_{x_A \in \mathcal{X}^A} \widetilde{\pi}_A(x_A) \right) \log \frac{\left( \sum_{A \in S^L_K} \sum_{x_A \in \mathcal{X}^A} \widetilde{\pi}_A(x_A) \right)}{\left( \sum_{A \in S^L_K} \sum_{x_A \in \mathcal{X}^A} 1 \right)}\\
		&=\left( \sum_{A \in S^L_K} \sum_{x_A \in \mathcal{X}^A} \widetilde{\pi}_A(x_A) \right)- \left( \sum_{A \in S^L_K} \sum_{x_A \in \mathcal{X}^A} \widetilde{\pi}_A(x_A) \right) \log\left( \sum_{A \in S^L_K} \sum_{x_A \in \mathcal{X}^A} \widetilde{\pi}_A(x_A) \right)\\
		&\quad + \left( \sum_{A \in S^L_K} \sum_{x_A \in \mathcal{X}^A} \widetilde{\pi}_A(x_A) \right) \log \left( \sum_{A \in S^L_K} \sum_{x_A \in \mathcal{X}^A} 1 \right)\\
		&\leq 1 +  \left( \sum_{A \in S^L_K} \sum_{x_A \in \mathcal{X}^A} \widetilde{\pi}_A(x_A) \right) \log \left( \sum_{A \in S^L_K} \sum_{x_A \in \mathcal{X}^A} 1 \right)\\
		&\leq 1 + \frac{3L}{2}\log(C^*Kn),
		\end{align*}
		where the second to last inequality follows from the fact that $x -x \log x$ is bounded from above by $1$ and the last inequality is a consequence of 
		\[
		\sum_{A \in S^L_K} || \widetilde{\pi}_A||_1 \leq \sum_{i \in \mathcal{Y}} ||\nu_i||_1 \leq \sum_{i \in \mathcal{Y}} ||\nu_i - \mu_i||_1 + ||\mu_i||_1 \leq \frac{3}{2},
		\]
		due to the stopping criterion $E_t := \sum_{i \in \mathcal{Y}} ||\nu_i - \mu_i||_1 < \delta'\leq \frac{1}{2}$. Hence,
		\begin{equation}\label{eq : pi tilde minus pi prime}
		\sum_{A \in S^L_K} \sum_{x_A \in \mathcal{X}^A} (1 + c_A(x_A)) \left( \widetilde{\pi}_A(x_A) - \pi'_A(x_A) \right) \leq  \eta  2L\log(C^*Kn) .
		\end{equation}
		Combining \eqref{eq : pi prime minus pi star} and \eqref{eq : pi tilde minus pi prime} leads to
		\begin{align}
		&\sum_{A \in S^L_K} \sum_{\mathcal{X}^A} \left ( 1 + c_A(x_A) \right ) \left(\widetilde{\pi}_A(x_A) - \pi^*_A(x_A) \right) \nonumber \\
		&=  \sum_{A \in S^L_K} \sum_{\mathcal{X}^A} \left ( 1 + c_A(x_A) \right ) \left(\widetilde{\pi}_A(x_A) - \pi'_A(x_A) \right) + \sum_{A \in S^L_K} \sum_{\mathcal{X}^A} \left ( 1 + c_A(x_A) \right ) \left(\pi'_A(x_A) - \pi^*_A(x_A) \right) \nonumber\\
		&\leq \eta  2L\log(C^*Kn)  + L \left(1 + \max_{A \in S^L_K} |c_A \mathds{1}_{c_A < \infty} | \right) \sum_{i=1}^K ||\nu_i - \mu_i ||_1. \label{eq : pi tilde minus pi star}
		\end{align}
		Lastly, combining \eqref{eq : pi hat minus pi tilde} and \eqref{eq : pi tilde minus pi star} leads to
		\begin{equation}\label{eq: error estimate}
		\begin{aligned}
		&\sum_{A \in S^L_K} \sum_{\mathcal{X}^A} \left ( 1 + c_A(x_A) \right ) \left(\widehat{\pi}_A(x_A) - \pi^*_A(x_A) \right)\\
		&\leq \eta  2L\log(C^*Kn)  + 2L \left(1 + \max_{A \in S^L_K} |c_A \mathds{1}_{c_A < \infty} | \right) \sum_{i=1}^K ||\sum_{A \in S^L_K(i)} {\mathcal{P}_i}_{\#} \widetilde{\pi}_A - \mu_i ||_1.
		\end{aligned}   
		\end{equation}

		Recall the choices of $\eta$ and $\delta'$:
		\begin{align*}
		\eta = \frac{\delta / 2}{  2L\log(C^*Kn) }, \quad  \delta' = \frac{\delta / 2}{2L \max_{A \in S^L_K} | 	1 + c_A\mathds{1}_{c_A < \infty} |}.
		\end{align*}
		Using the definition of $\widetilde{\mu}=(\widetilde{\mu}_1, \dots, \widetilde{\mu}_K)$ and the fact that
		\[
		\sum_{i \in \mathcal{Y}} || \widetilde{\mu}_i -     \sum_{A \in S^L_K(i)} {\mathcal{P}_i}_{\#} \widetilde{\pi}_A ||_1 \leq \frac{\delta'}{2},
		\] 
		we obtain
		\[
		\sum_{i \in \mathcal{Y}} || \mu_i -     \sum_{A \in S^L_K(i)} {\mathcal{P}_i}_{\#} \widetilde{\pi}_A ||_1 \leq\sum_{i \in \mathcal{Y}} ||\mu_i -  \widetilde{\mu}_i||_1 + || \widetilde{\mu}_i - \sum_{A \in S^L_K(i)} {\mathcal{P}_i}_{\#} \widetilde{\pi}_A ||_1 \leq \delta'.
		\]
		Applying the above inequality, and using our choices of $\eta$ and $\delta'$ in \eqref{eq: error estimate}, we obtain
		\[
		\sum_{A \in S^L_K} \sum_{\mathcal{X}^A} \left ( 1 + c_A(x_A) \right ) \left(\widehat{\pi}_A(x_A) - \pi^*_A(x_A) \right) \leq \delta.
		\]

		Now, it remains to bound the computational complexity of Algorithm \ref{alg : entropic regularization}. Using \eqref{eq : number of iteration} and the definition of $\overline{R}$ we see that Algorithm 2 in Step 2 requires at most $T$ iterations to stop, where 
				\begin{align*}
				T &\leq O(1) + \frac{14 K^2 \overline{R}}{ \eta \delta'}\\
				&\leq O(1) + O \left(\frac{   L^2 K^2 \max_{A \in S_K^L} (1 + c_A \mathds{1}_{c_A < \infty} )   \log (C^* Kn)   }{ \delta^2}\right).
				\end{align*}
		
Since each iteration of Algorithm \ref{alg : multi-sinkhorn} requires at most $O(\mathcal{I}_L)$ operations, the total computational complexity of \textbf{Step 2} of Algorithm \ref{alg : entropic regularization} is $O \left(  \frac{   L^2 K^{2} \max_{A \in S_K^L} (1 + c_A \mathds{1}_{c_A < \infty} ) |\mathcal{I}_L|   \log (C^* Kn)   }{ \delta^2}  \right)$. \textbf{Step 1} and \textbf{Step 3} of Algorithm \ref{alg : entropic regularization} require $O(Kn)$ and $O(\mathcal{I}_L)$ operations, respectively. Therefore, the conclusion follows.
	\end{proof}

	
	
	


\bibliography{Ref.bib}
\bibliographystyle{amsalpha}
\end{document}